\def\imagetop#1{\vtop{\null\hbox{#1}}}
\def\defemb#1#2{\expandafter\def\csname #1\endcsname
                              {\relax\ifmmode #2\else\hbox{$#2$}\fi}}
\newcommand{\argmax}{\operatornamewithlimits{argmax}}
 \newenvironment{proof}{\noindent {\bf Proof:}}%
 {\hfill \rule[0.3ex]{1ex}{1ex} \par \addvspace{\bigskipamount}}
 \newenvironment{silentproof}{\noindent}%
 {\hfill \rule[0.3ex]{1ex}{1ex} \par \addvspace{\bigskipamount}}
\newtheorem{theorem}{Theorem}
\newtheorem{lemma}[theorem]{Lemma}
\newtheorem{proposition}[theorem]{Proposition}
\newtheorem{definition}{Definition}
\newtheorem{fact}[theorem]{Fact}
\newtheorem{disc}[theorem]{Discussion}
\newcommand{\proofsout}[1]{}
\newcommand{\commentout}[1]{}
\newcommand{\journal}[1]{}
\def\nat{{\mathbb N}}
\newcommand{\removed}[1]{}
\newcommand{\tuple}[1]{\langle #1 \rangle}
\newcommand{\mdp}[1]{\langle #1 \rangle}
\newcommand{\states}{S}
\newcommand{\actions}{A}
\newcommand{\transp}{Tr}
\newcommand{\reward}{R}
\newcommand{\state}{s}
\newcommand{\action}{a}
\newcommand{\horiz}{H}
\newcommand{\horizvar}{h}
\newcommand{\probe}{\rho}
\newcommand{\stree}{\cT}
\newcommand{\initstate}{\state_{0}}
\newcommand{\MCTREE}{\mathsf{MCTS}}
\newcommand{\MCTREEnew}{\mathsf{MCTS2e}}
\newcommand{\BRUSH}{\mathsf{BRUE}}
\newcommand{\BRUSHa}{\mathsf{BRUE}(\alpha)}
\newcommand{\UCT}{\mathsf{UCT}}
\newcommand{\GCT}{\epsilon\mbox{-}\mathsf{greedy+UCT}}
\newcommand{\naiveuniform}{\mathsf{NaiveUniform}}
\newcommand{\craftyuniform}{\mathsf{CraftyUniform}}
\newcommand{\initialization}{search tree $\stree$ $\leftarrow$ root node $\initstate$}
\newcommand{\genprobe}{\mathsf{sample}}
\newcommand{\expandtree}{\mathsf{expand\mbox{-}tree}}
\newcommand{\updatestat}{\mathsf{update\mbox{-}statistics}}
\newcommand{\recommend}{\mathsf{recommend\mbox{-}action}}
\newcommand{\ucbone}{{\sf UCB1}}
\newcommand{\pcount}{n}
\newcommand{\qfunc}{Q}
\newcommand{\qcount}{\widehat{\qfunc}}
\newcommand{\expectation}{{\mathbb{E}}}
\newcommand{\minprob}{\minp}
\newcommand{\mindelta}{\minreg}
\newcommand{\policy}{\pi}
\newcommand{\optpolicy}{\policy^{\ast}}
\newcommand{\brushpolicy}[1]{\policy^{\text{B}}_{#1}}
\newcommand{\range}[1]{\llbracket #1 \rrbracket}
\newcommand{\Optq}{\qfunc}
\newcommand{\Estq}{\qcount}
\newcommand{\optq}[2]{\qfunc_{#2}(#1)}
\newcommand{\estq}[2]{\qcount_{#2}(#1)}
\newcommand{\optqS}[2]{\qfunc_{#2}}
\newcommand{\estqS}[2]{\qcount_{#2}}
\newcommand{\regret}[2]{\Delta_{#2}[#1]}
\newcommand{\sregret}[3]{\regret{#2,#3}{#1}}
\newcommand{\nsa}[2]{n\left(#1\right)}
\newcommand{\minreg}{d}
\newcommand{\minp}{p}
\newcommand{\reca}[1]{\brushpolicy{#1}}
\newcommand{\nums}[2]{n_{#2}(#1)}
\newcommand{\numsS}[2]{n_{#2}}
\newcommand{\paramgenone}{c}
\newcommand{\paramgentwo}{c'}
\newcommand{\paramone}[1]{c_{#1}}
\newcommand{\paramtwo}[1]{c_{#1}'}
\newcommand{\be}{\begin{enumerate}}
\newcommand{\ee}{\end{enumerate}}
\newcommand{\bi}{\begin{itemize}}
\newcommand{\ei}{\end{itemize}}
\newcommand{\bc}{\begin{center}}
\newcommand{\ec}{\end{center}}
\def\beq{\begin{equation}}
\def\eeq{\end{equation}}
\newcommand{\simpleprob}{{\mathbb{P}}}
\newcommand{\slist}{\cL}
\newcommand{\spfunc}{\sigma}
\newcommand{\fake}[1]{\hat{#1}}
\newcommand{\negbinomial}{\text{NB}}
\newcommand{\binomial}{\text{Bin}}
\newcommand{\geomdist}{\text{Geo}}
\newcommand{\bernulli}{\text{Ber}}
\title{Simple Regret Optimization in Online Planning\\ for Markov Decision Processes}
\author{\name Zohar Feldman
\email zoharf@tx.technion.ac.il\\
\name Carmel Domshlak
\email dcarmel@ie.technion.ac.il\\
\addr Faculty of Industrial Engineering \& Management,\\
Technion - Israel Institute of Technology,\\
Haifa, Israel}
\begin{document}

\maketitle

\begin{abstract}
We consider online planning in Markov decision processes (MDPs).
In online planning, the agent focuses on its current state only, deliberates about the set of possible policies from that state onwards and, when interrupted, uses the outcome of that exploratory deliberation to choose what action to perform next.
The performance of algorithms for online planning is assessed in terms of {\em simple regret}, which is the agent's expected performance loss when the chosen action, rather than an optimal one, is followed.

To date, state-of-the-art algorithms for online planning in general MDPs are either best effort, or guarantee only polynomial-rate reduction of simple regret over time.
Here we introduce a new Monte-Carlo tree search algorithm, $\BRUSH$, that guarantees {\em exponential-rate} reduction of simple regret and error probability. 
This algorithm is based on a simple yet non-standard state-space sampling scheme, $\MCTREEnew$, in which different parts of each sample are dedicated to different exploratory objectives. Our  empirical evaluation shows that $\BRUSH$ not only provides superior performance guarantees, but is also very effective in practice and favorably compares to state-of-the-art. We then extend $\BRUSH$ with a variant of ``learning by forgetting.'' The resulting set of algorithms, $\BRUSHa$, generalizes $\BRUSH$, improves  the exponential factor in the upper bound on its reduction rate, and exhibits even more attractive empirical performance.

\end{abstract}


\section{Introduction}
Markov decision processes (MDPs) are a standard model for planning under uncertainty~\cite{puterman:book}.   An MDP $\mdp{\states,\actions,\transp,\reward}$ is defined by a set of possible agent states $\states$, a set of agent actions $\actions$, a stochastic transition function $\transp : \states \times \actions \times \states \rightarrow [0,1]$, and a reward function $\reward : \states \times \actions \times \states \rightarrow {\mathbb{R}}$. 
Depending on the problem domain and the representation language, the description of the MDP can be either declarative or generative (or mixed). 
In any case, the description of the MDP is assumed to be concise. 
While declarative models provide the agents with greater algorithmic flexibility, generative models are more expressive, and both types of models allow for simulated 
execution of all feasible action sequences, from any state of the MDP.
The current state of the agent is fully observable, and the objective of the agent is to act so to maximize its accumulated reward. In the finite horizon setting that will be used for most of the paper, the reward is accumulated over some predefined number of steps $\horiz$.

The desire to handle MDPs with state spaces of size exponential in the  size of the model description  has led researchers to consider online planning in MDPs.
In online planning, the agent, rather than computing a quality policy for the entire MDP before taking any action, focuses only on what action to perform next.
The decision process  consists of a deliberation phase, aka planning, 
terminated  either according to a predefined schedule or due to an external interrupt, and followed by a recommended action for 
the current state. Once that action is applied in the real environment, the decision process is repeated from the obtained state to select the next action and so on.

The quality of the action $a$, recommended for state $s$ with $\horiz$ steps-to-go,
is assessed in terms of the probability that $a$ is sub-optimal, and in terms of
the (closely related) measure of {\em simple regret} $\sregret{\horiz}{\state}{\action}$. The latter captures the performance loss that results from taking $\action$ and then following an optimal policy $\optpolicy$ for the remaining $\horiz-1$ steps, instead of following $\optpolicy$ from the beginning~\cite{bubeck:munos:colt10}. That is,
$$\sregret{\horiz}{\state}{\action} = \qfunc_{\horiz}(\state,\optpolicy(\state,\horiz)) - \qfunc_{\horiz}(\state,\action),$$
where $$\qfunc_{\horiz}(\state,\action) = \expectation_{s'}\left[
\reward(s,a,s') + \qfunc_{\horiz-1}(\state',\optpolicy(\state',\horiz-1))
\right].$$

With a few recent exceptions developed for declarative MDPs~\cite{bonet:geffner:aostar:aaai12,kolobov:etal:aaai12,munos:aistat12}, most algorithms for online MDP planning constitute variants of what is called Monte-Carlo tree search (MCTS). 
One of the earliest and best-known MCTS algorithms for MDPs is the sparse sampling algorithm by Kearns, Mansour, and Ng~\cite{sparsesampling}. Sparse sampling offers a near-optimal action selection in discounted MDPs by constructing a sampled lookahead tree in time exponential in discount factor and suboptimality bound, but independent of the state space size. However, if terminated before an action has proved to be near-optimal, sparse sampling offers no quality guarantees on its action selection. Thus it does not really fit the setup of online planning. Several later works introduced interruptible, anytime MCTS algorithms for MDPs, with $\UCT$~\cite{uct} probably being the most widely used such algorithm these days. Anytime MCTS algorithms are designed to provide convergence to the best action if enough time is given for deliberation, as well as a gradual reduction of performance loss over the deliberation time~\cite{rl:book,PeretG:ecai04,uct,CoquelinM:uai07,cazenave:ijcai09,rosin:ijcai11,tolpin:shimony:aaai12}.
While $\UCT$ and its successors have been devised specifically for MDPs, some of these algorithms are also successfully used in partially observable and adversarial settings~\cite{GellyS11,Sturtevant:ccg08,BjarnasonFT09,BallaF09,EyerichKH10}. 

In general, the relative empirical attractiveness of the various MCTS planning algorithms depends on the
specifics of the problem at hand and cannot usually be predicted ahead of time. When it comes to formal guarantees on the expected performance improvement over the planning time, very few of these algorithms provide such guarantees for general MDPs, and none 
breaks the barrier of the worst-case only {\em polynomial-rate reduction} of simple regret and choice-error probability over time.

This is precisely our contribution here. We introduce a new Monte-Carlo tree search algorithm, $\BRUSH$, that guarantees {\em exponential-rate} reduction of both simple regret and choice-error probability over time, for general MDPs over finite state spaces. The algorithm is based on a simple and efficiently implementable sampling scheme, $\MCTREEnew$, in which different parts of each sample are dedicated to different competing exploratory objectives. The motivation for this objective decoupling came from  a recently growing understanding that the current MCTS algorithms for MDPs do not optimize the reduction of simple regret directly, but only  via optimizing what is called cumulative regret, a performance measure suitable for the (very different) setting of reinforcement learning~\cite{bubeck:munos:colt10,munos:aistat12,tolpin:shimony:aaai12,feldman:domshlak:arxiv12}. 
Our  empirical evaluation on some standard MDP benchmarks for comparison between MCTS  planning algorithms shows that $\BRUSH$ not only provides superior performance guarantees, but is also very effective in practice and favorably compares to state of the art. We then extend $\BRUSH$ with a variant of ``learning by forgetting.'' The resulting family of algorithms, $\BRUSHa$, generalizes $\BRUSH$, improves  the exponential factor in the upper bound on its reduction rate, and exhibits even more attractive empirical performance.

\section{Monte-Carlo Planning}

$\MCTREE$, a high-level scheme for Monte-Carlo tree search that gives rise to various specific algorithms for online MDP planning, is depicted in Figure~\ref{fig:treesearch}. Starting with the current state $\initstate$, $\MCTREE$ performs an iterative construction of a tree $\stree$ rooted at $\initstate$. At each iteration, $\MCTREE$ issues a state-space sample from $\initstate$, expands the tree $\stree$ using the outcome of that sample, and updates  information stored at the nodes of $\stree$. Once the simulation phase is over, $\MCTREE$ uses the information collected at the nodes of $\stree$ to recommend an action to perform in $\initstate$. 
For compatibility of the notation with prior literature,
in what follows we refer to the tree nodes via the states associated with these nodes. 
Note that, due to the Markovian nature of MDPs, it is unreasonable to distinguish between  nodes associated with the same state at the same depth. Hence, the actual graph constructed by most instances of $\MCTREE$ forms a DAG over nodes $(\state,\horizvar) \in \states\times \{0,1,\dots,\horiz\}$. 
By $A(s) \subseteq A$ in what follows, we refer to the subset of actions applicable in state $s$.

\begin{figure}
\begin{center}
\begin{tabular}{c}
\hline
\begin{minipage}{3in}
\vspace{1mm}
\begin{tabbing}
\underline{$\MCTREE$}: [{input}: \= $\mdp{\states,\actions,\transp,\reward}$; $\initstate \in \states$]\\
\initialization\\
{\bf while} time permits:\\
\> $\probe \leftarrow \genprobe(\initstate,\stree)$\\
\> $\stree \leftarrow  \expandtree(\stree,\probe)$\\
\> $\updatestat(\stree,\probe)$\\
{\bf return} $\recommend(\initstate,\stree)$
\end{tabbing}
\end{minipage}\\
\hline
\end{tabular}
\end{center}
\caption{\label{fig:treesearch} High-level scheme for regular Monte-Carlo tree sampling.}
\end{figure}

Numerous concrete instances of $\MCTREE$ have been proposed, with $\UCT$~\cite{uct}
probably being the most popular such algorithm these days~\cite{GellyS11,Sturtevant:ccg08,BjarnasonFT09,BallaF09,EyerichKH10,KellerE12}. To give a concrete sense of $\MCTREE$'s components, as well as to ground some intuitions discussed later on, below we describe the specific setting of $\MCTREE$ corresponding to the core $\UCT$ algorithm, and Figure~\ref{fig:uctdynamics} illustrates the $\UCT$ tree construction, with $n$ denoting the number of state-space samples.
\begin{itemize}
\item $\genprobe$: 
The samples $\probe = \tuple{s_{0},a_{1},s_{1},\dots,a_{k},s_{k}}$ are
all issued from the root node $\initstate$. The sample ends either when a sink state is reached, that is, $A(s_{k})=\emptyset$, or when $k=H$.
Each node/action pair $(\state,\action)$ is 
associated with a counter $\pcount(\state,\action)$ and a value accumulator $\qcount(\state,\action)$. Both $\pcount(\state,\action)$ and $\qcount(\state,\action)$ are  initialized to $0$, and then updated by the $\updatestat$ procedure. Given $s_{i}$, the next-on-the-sample action $a_{i+1}$ is selected according to the deterministic $\ucbone$ policy~\cite{auer:etal:ml02}, originally proposed for optimal cumulative regret minimization in stochastic multi-armed bandit (MAB) problems~\cite{robbins:52}:
If $n(s_{i},a) > 0$ for all $a \in \actions(s_{i})$, then
\begin{equation}
\label{e:uctselect}
a_{i+1} = \argmax_{a}{\left[\qcount(s_{i},a) + c\sqrt{\frac{\log{\pcount(s_{i})}}{\pcount(s_{i},a)}}\right]},
\end{equation}
where $\pcount(s) = \sum_{a}{\pcount(\state,\action)}$. Otherwise, $a_{i+1}$ is selected uniformly at random from the still unexplored actions $\{a \in \actions(s_{i}) \mid n(s_{i},a) = 0\}$. 
In both cases, $s_{i+1}$ is then sampled according to the conditional probability ${\mathbb{P}}(S | s_{i},a_{i+1})$, induced by the transition function $\transp$.
\item $\expandtree$: Each state-space sample $\probe = \tuple{s_{0},a_{1},s_{1},\dots,a_{k},s_{k}}$ induces a state trace $\tuple{s_{0},s_{1},\dots,s_{i}}$ inside $\stree$, as well as a state trace $\tuple{s_{i+1},\dots,s_{k}}$ outside of $\stree$. In principle, $\stree$ can be expanded with any prefix of $\tuple{s_{i+1},\dots,s_{k}}$; a popular choice in prior work appears to be expanding $\stree$ with only the upper-most node $s_{i+1}$. (If $\stree$ is constructed as a DAG, it is expanded with the first node along $\probe$ that leaves $\stree$.)
\item $\updatestat$: For each node $s_{i}$ along $\probe$ that is now part of the expanded tree $\stree$,
the counter $\pcount(\state_{i},\action_{i+1})$ is incremented and the estimated $Q$-value is updated as
{
\begin{equation}
\label{e:uctupdate}
\begin{split}
\qcount(\state_{i},\action_{i+1}) \leftarrow \;\; &
\qcount(\state_{i},\action_{i+1})+ \frac{\reward_{i}-\qcount(\state_{i},\action_{i+1})}{\pcount(\state_{i},\action_{i+1})},
\end{split}
\end{equation}
}
where $\reward_{i} = \sum_{j=i}^{k-1}\reward(s_{j},a_{j+1},s_{j+1})$.
\item $\recommend$: Interestingly, the action recommendation protocol of $\UCT$ was never properly specified, and different applications of $\UCT$ adopt different decision rules, including maximization of the estimated $Q$-value, of the augmented estimated $Q$-value as in Eq.~\ref{e:uctselect}, of the number of times the action was selected during the simulation, as well as randomized protocols based on the information collected at the root.
\end{itemize}

\begin{figure}[t]
\begin{center}
\begin{tabular}{|c|c|c|c|c|}
\hline
\imagetop{\includegraphics[width=2cm]{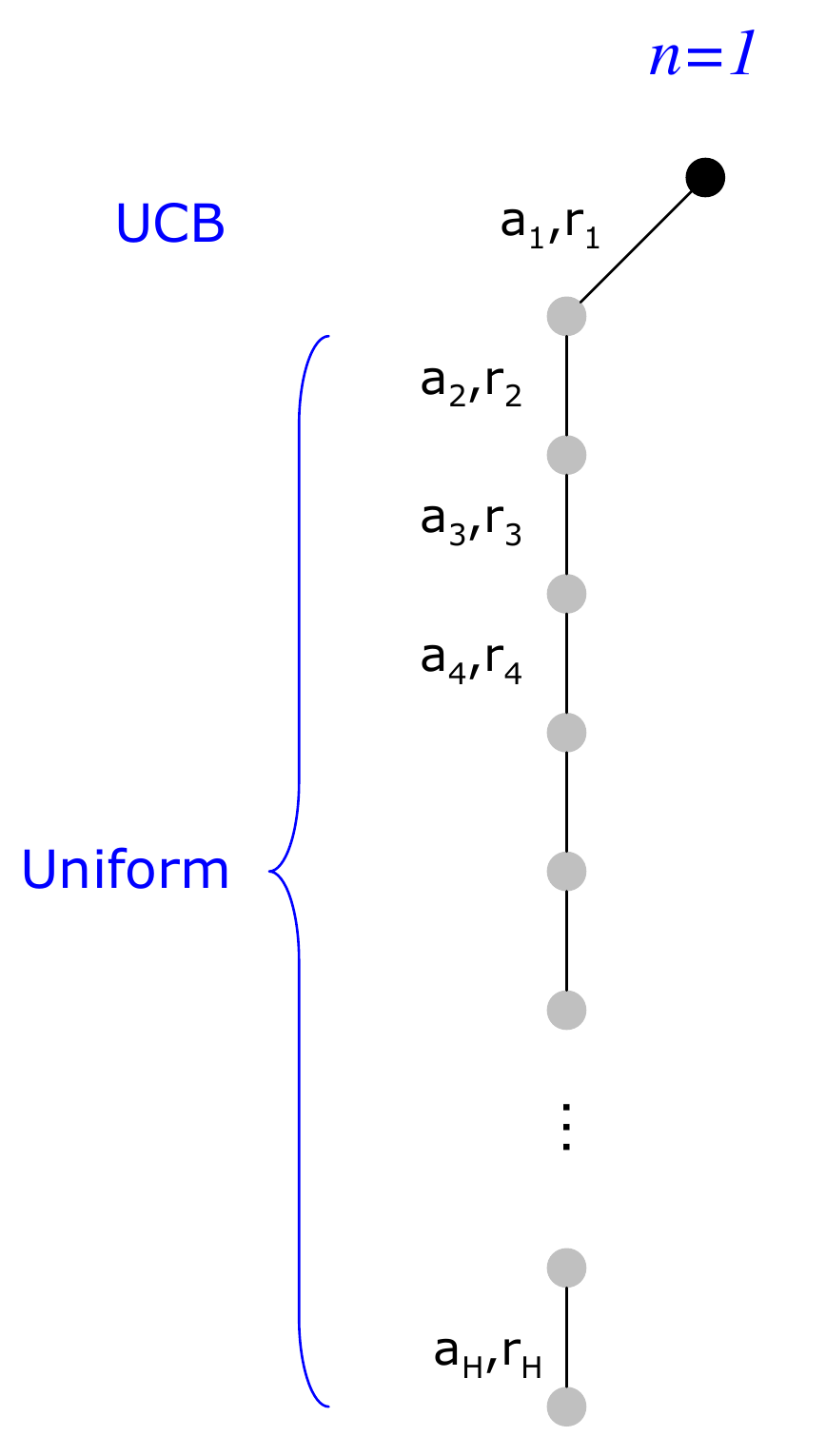}}
&
\imagetop{\includegraphics[width=1.7cm]{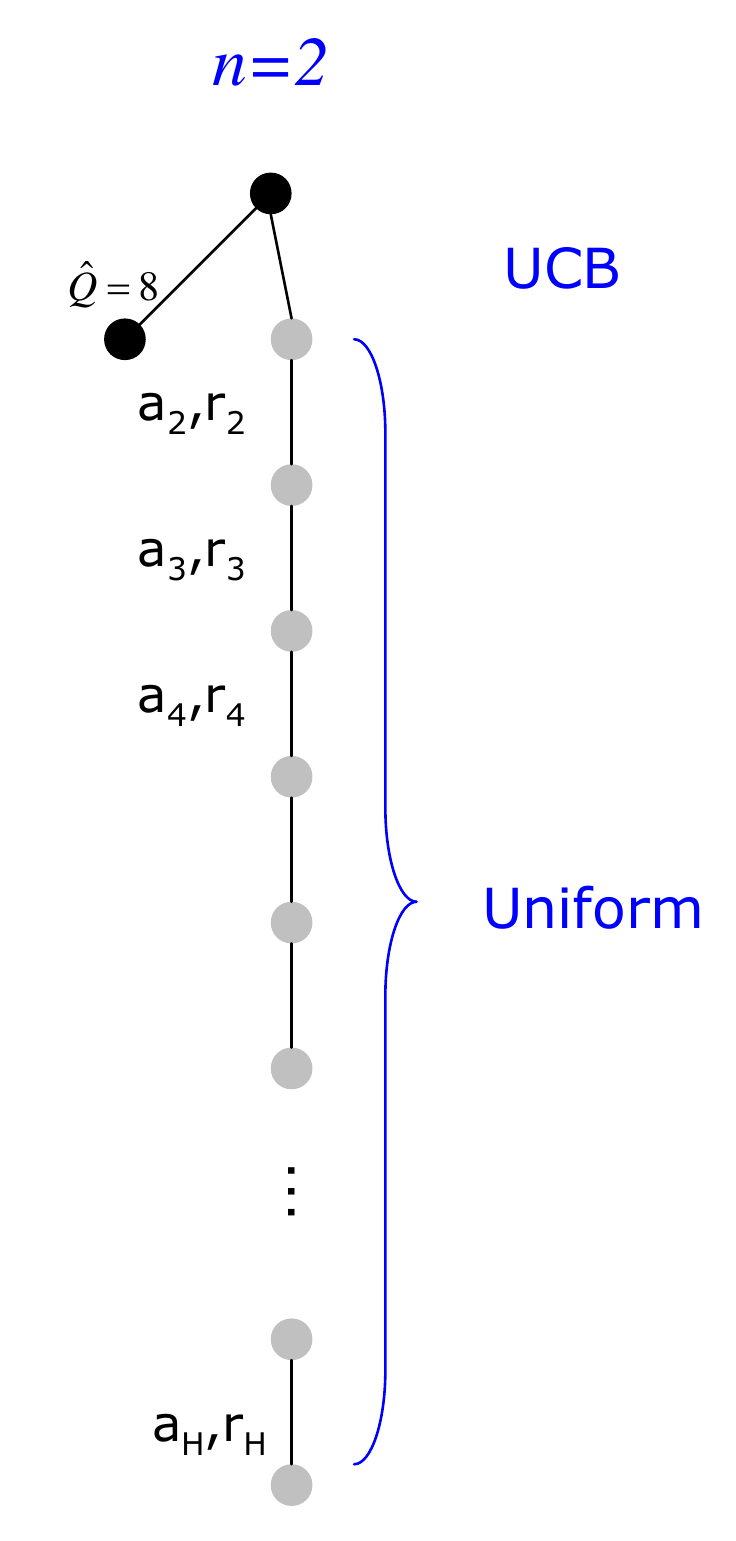}}
&
\imagetop{\includegraphics[width=2.2cm]{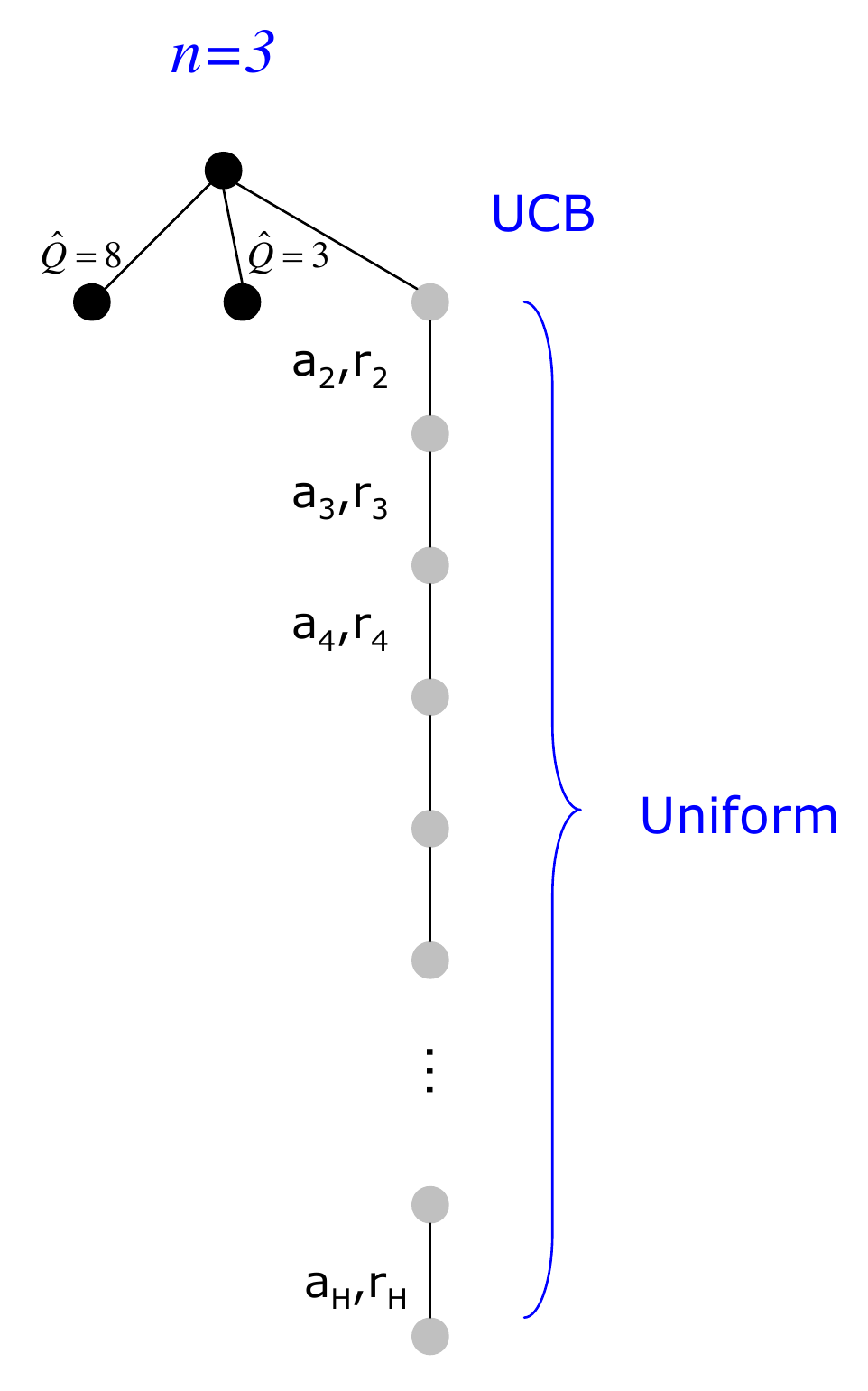}}
&
\imagetop{\includegraphics[width=2.3cm]{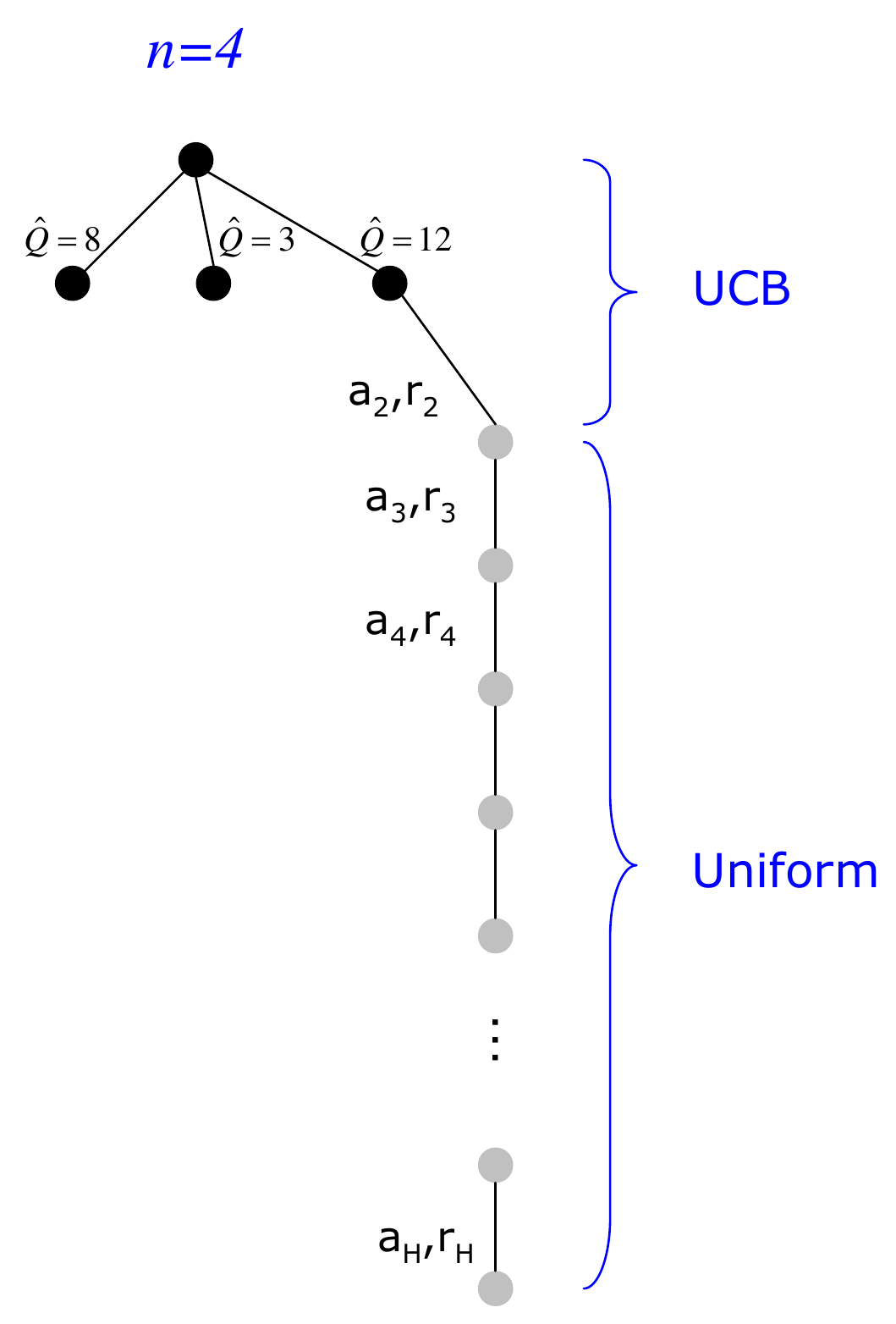}}
&
\imagetop{\includegraphics[width=2.6cm]{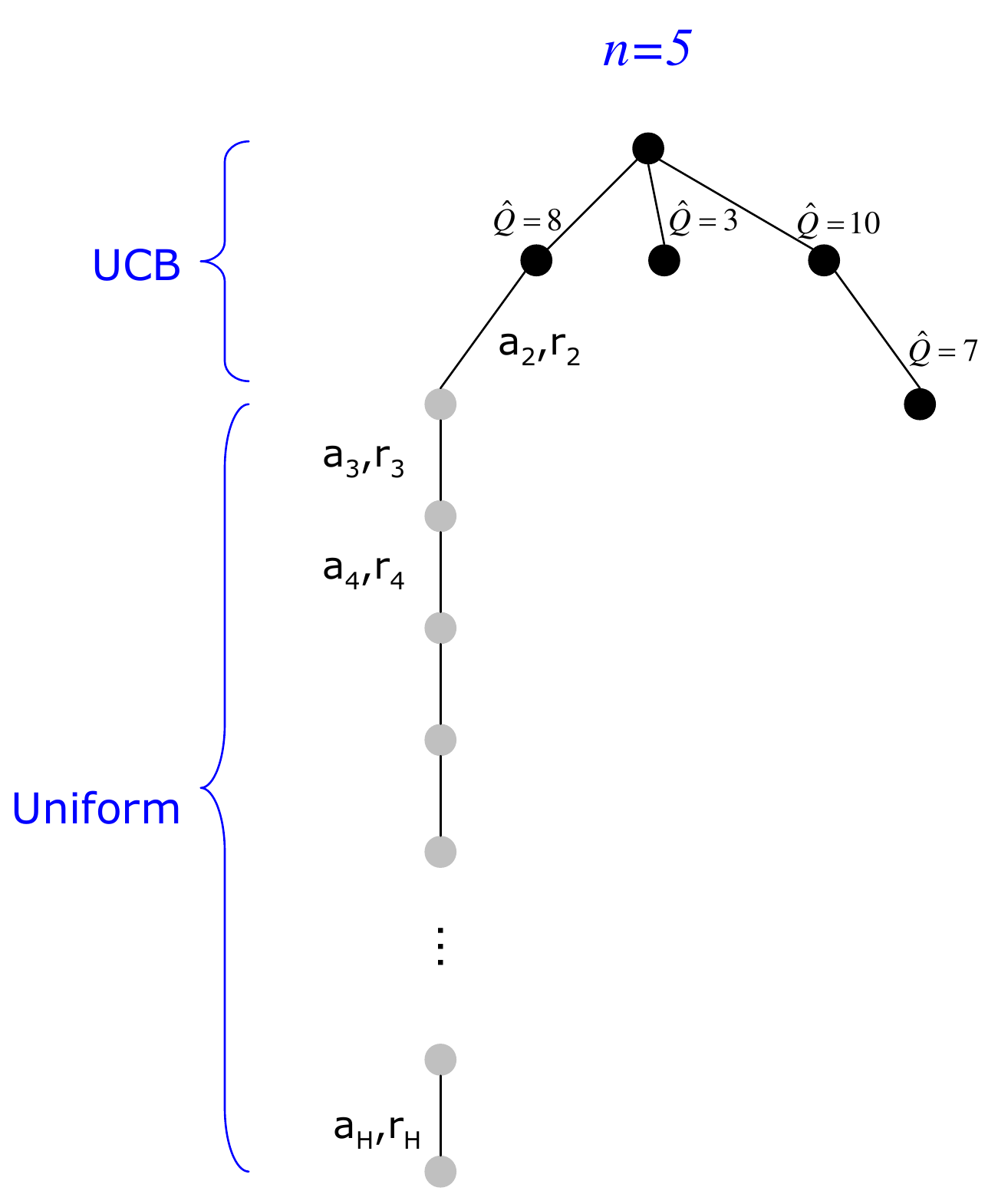}}
\\
\hline
\imagetop{\includegraphics[width=1.5cm]{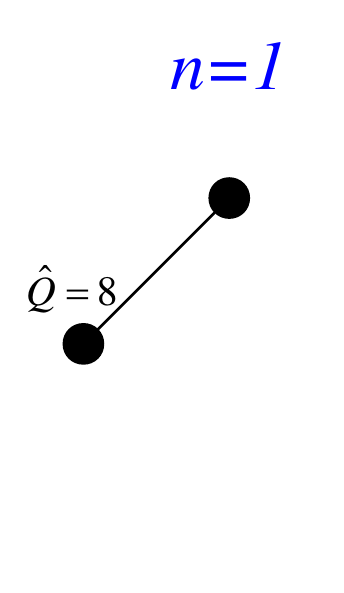}}
&
\imagetop{\includegraphics[width=1.5cm]{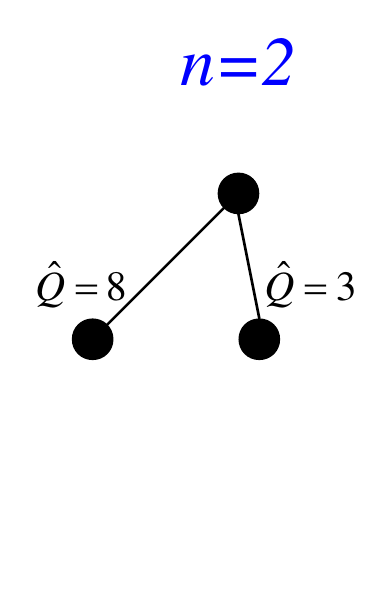}}
&
\imagetop{\includegraphics[width=2.2cm]{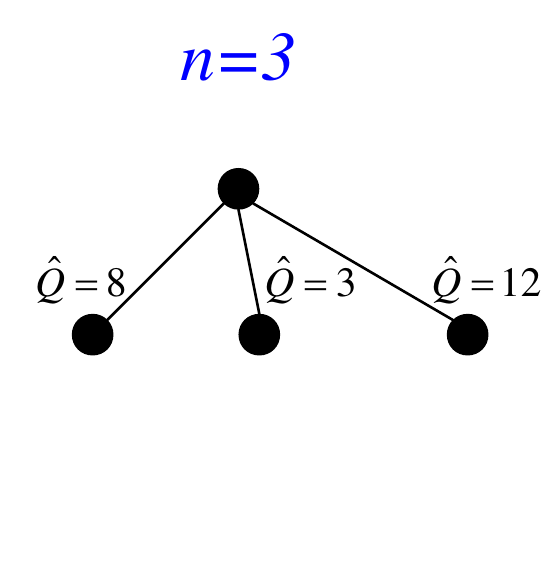}}
&
\imagetop{\includegraphics[width=2.6cm]{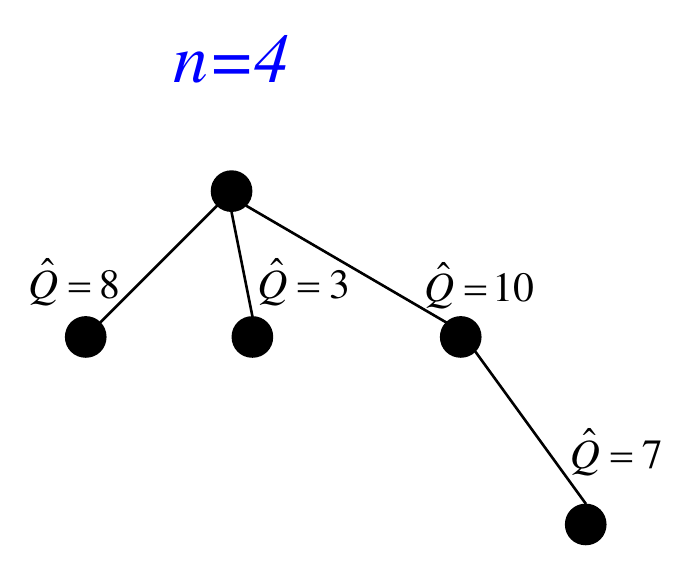}}
&
\imagetop{\includegraphics[width=3cm]{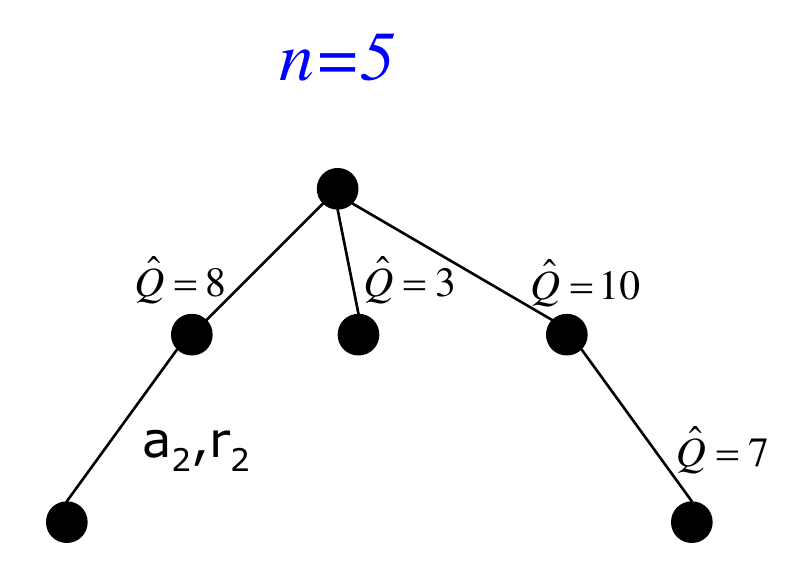}}
\\
\hline
\multicolumn{5}{|c|}{
\imagetop{\includegraphics[width=4cm]{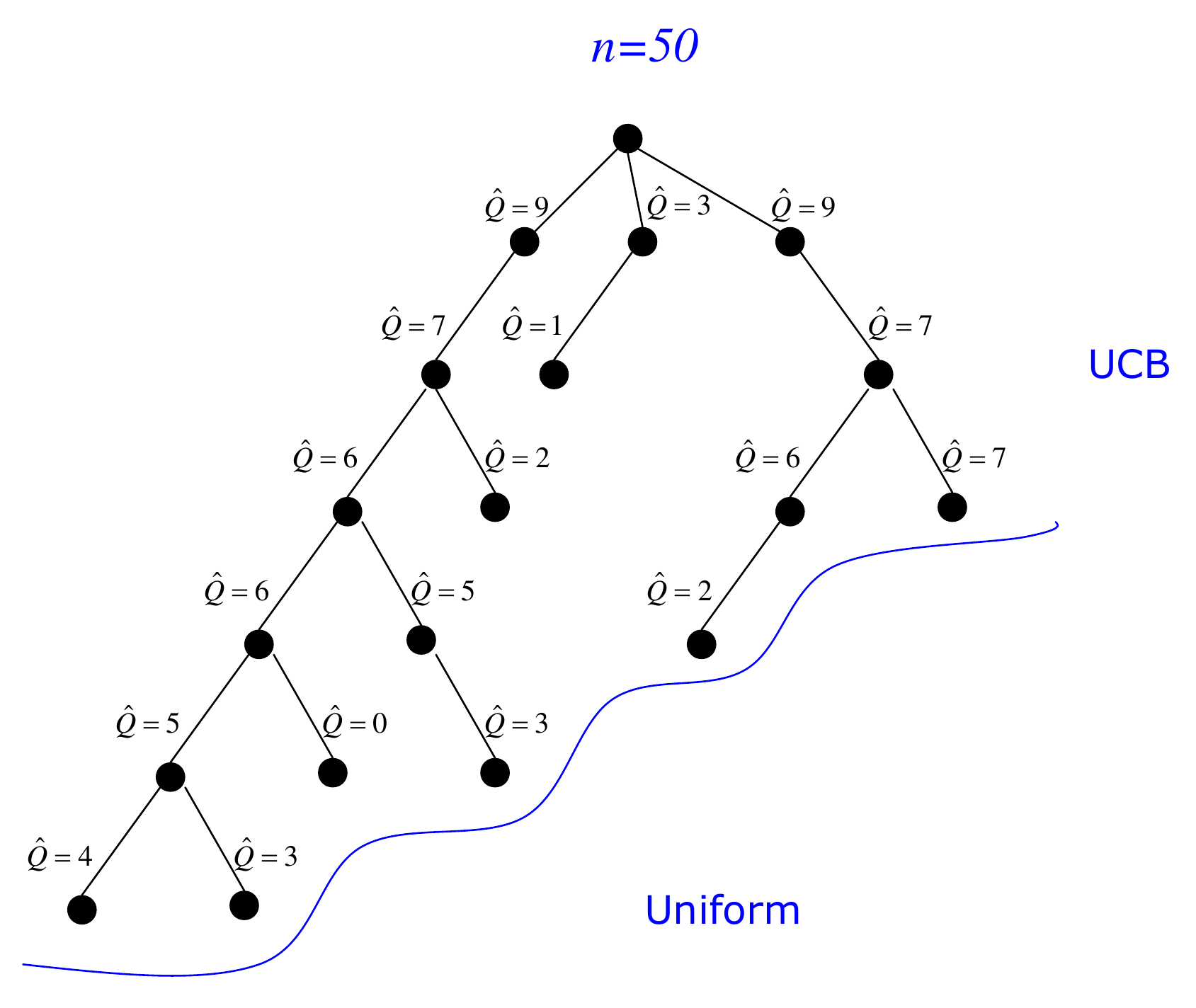}}
}\\
\hline
\end{tabular}
\end{center}
\caption{\label{fig:uctdynamics}Illustration of the $\UCT$ dynamics}
\end{figure}

The key property of $\UCT$ 
is that its exploration of the search space is obtained by considering a hierarchy of forecasters, each minimizing its own {\em cumulative regret}, that is, the loss of the total reward incurred by exploring the environment~\cite{auer:etal:ml02}. Each such pseudo-agent  forecaster corresponds to a state/steps-to-go pair $(\state,\horizvar)$. In that respect, according to Theorem~6 of~\citeA{uct}, $\UCT$ asymptotically achieves the best possible (logarithmic) cumulative regret. However, as recently pointed out in  numerous works~\cite{bubeck:munos:colt10,munos:aistat12,tolpin:shimony:aaai12,feldman:domshlak:arxiv12},  cumulative regret does not seem to be the right objective for online MDP planning, and this is because the rewards ``collected'' at the simulation phase are fictitious. Furthermore, the work of~\citeA{bubeck:etal:tcs11} on multi-armed bandits shows that minimizing cumulative regret and minimizing simple regret are somewhat competing objectives. Indeed, the same Theorem~6 of~\citeA{uct} claims only a polynomial-rate reduction of the probability of choosing a non-optimal action, and the  results  of~\citeA{bubeck:etal:tcs11} on simple regret minimization in MABs with stochastic rewards imply that $\UCT$ achieves only polynomial-rate reduction of the simple regret over time. Some attempts have recently been made to adapt $\UCT$, and $\MCTREE$-based planning in general, to  optimizing simple regret in online MDP planning directly, and some of these attempts were  empirically rather successful~\cite{tolpin:shimony:aaai12,hay:etal:uai12}. However, to the best of our knowledge, none of them breaks  $\UCT$'s barrier of the worst-case polynomial-rate reduction of simple regret over time.
%

\section{Simple Regret Minimization in MDPs}
We now show that exponential-rate reduction of simple regret in online MDP planning is achievable. To do so, we first motivate and introduce a family of $\MCTREE$ algorithms with a two-phase scheme for generating state space samples, and then describe a concrete  algorithm from this family, $\BRUSH$, that (1) guarantees that the probability of recommending a non-optimal action asymptotically convergences to zero at an exponential rate, and (2) achieves exponential-rate reduction of  simple regret over time.

\subsection{Exploratory concerns in online MDP planning}

The work of~\citeA{bubeck:etal:tcs11} on pure exploration in multi-armed bandit (MAB) problems was probably the first to stress that the minimal simple regret  can increase as the bound on the cumulative regret is decreases. At a high level,~\citeA{bubeck:etal:tcs11} show that efficient schemes for simple regret minimization in MAB should be as exploratory as possible, thus improving the expected quality of the recommendation issued at the end of the learning process. In particular, they showed that the simple round-robin sampling of MAB actions, followed by recommending the action with the highest empirical mean, yields exponential-rate reduction of simple regret, while the $\ucbone$ strategy that balances between exploration and exploitation yields  only polynomial-rate reduction of that measure. In that respect, the situation with MDPs is seemingly no different, and thus Monte-Carlo MDP planning should focus on exploration only. However, the answer to the question of what it means to be ``as exploratory as possible'' with MDPs is less straightforward than it is in  the special case of MABs.

For an intuition as to why the ``pure exploration dilemma'' in MDPs is somewhat complicated, 
consider the state/steps-to-go pairs $(s,\horizvar)$ as pseudo-agents, all acting on behalf of the root pseudo-agent $(\initstate,\horiz)$ that aims at  minimizing  its own simple regret in a stochastic MAB induced by the applicable  actions $\actions(\initstate)$. Clearly, if an oracle would provide $(\initstate,\horiz)$ with an optimal action $\optpolicy(\initstate,\horiz)$, then no further deliberation would be needed until after the execution of $\optpolicy(\initstate,\horiz)$. However, the task characteristics of $(\initstate,\horiz)$ are an exception rather than a rule. Suppose that an oracle provides us with  optimal actions for {\em all} pseudo-agents $(\state,\horizvar)$ {\em but} $(\initstate,\horiz)$. Despite the richness of this information, $(\initstate,\horiz)$ in some sense remains as clueless as it was before: To choose between the actions in $\actions(\initstate)$, $(\initstate,\horiz)$ needs, at the very least,  some ordinal information about the expected value of these alternatives. Hence, when sampling the futures, each non-root pseudo-agent $(\state,\horizvar)$ should be devoted to {\em two}  objectives:
\begin{enumerate}[(1)]
\item identifying  an optimal action $\optpolicy(\state,\horizvar)$, and
\item estimating the actual value of that action, because this information is needed by the {predecessor(s)} of $(\state,\horizvar)$ in $\stree$.
\end{enumerate}

Note that both these objectives are {\em exploratory}, yet the problem is that they are somewhat competing. In that respect, the choices made by $\UCT$ actually make sense:
Each sample $\probe$ issued by $\UCT$ at $(\state,\horizvar)$ is a priori devoted {\em both} to increasing the confidence in that some current candidate $\action^{\dagger}$ for $\optpolicy(\state,\horizvar)$ is indeed $\optpolicy(\state,\horizvar)$, as well as to improving the estimate of $\qfunc_{\horizvar}(\state,\action^{\dagger})$,
while as if assuming that $\optpolicy(\state,\horizvar)=\action^{\dagger}$. However, while such an overloading of the samples is unavoidable in the ``learning while acting'' setup of reinforcement learning, this should not necessarily be the case in online planning. Moreover, this sample overloading in $\UCT$ comes with a high price: As it was shown by~\citeA{CoquelinM:uai07}, the number of samples after which the bounds of $\UCT$ on both simple and cumulative regret become meaningful might be as high as hyper-exponential in $H$.

\subsection{Separation of Concerns at the Extreme}

Separating the two aforementioned exploratory concerns is at the focus of our investigation here. Let $\initstate$ be a state of an MDP $\mdp{\states,\actions,\transp,\reward}$ with rewards in $\left[0,1\right]$,  $K$ applicable actions at each state, $B$ possible outcome states for each action, and finite horizon $H$. First, to get a sense of what separation of exploratory concerns in online planning can buy us, we begin with a MAB perspective on MDPs, with each arm in the MAB corresponding to a ``flat" policy of acting for $H$ steps starting from the current state $\initstate$.  A ``flat" policy $\pi$ is a minimal partial mapping from state/steps-to-go pairs to actions that fully specifies an acting strategy in the MDP for $H$ steps, starting at $\initstate$. Sampling such an arm $\pi$ is straightforward as $\pi$ prescribes precisely which action should be applied at every state that can possibly be encountered along the execution of $\pi$. The reward of such an arm $\pi$ is stochastic, with support $\left[0,H\right]$, and the number of arms in this schematic MAB is $K'=K^{\sum_{i=0}^{H-1}B^i} \approx K^{B^H}$.

Now, consider a simple algorithm,  $\naiveuniform$,  which systematically samples each "flat" policy in a loop, and updates the estimation of the corresponding arm with the obtained reward. If stopped at iteration $n$, the algorithm recommends $\pi(s_0)$, where $\pi$ is the arm/policy with best empirical value $\hat{\mu}_{\pi,n}$.  
By the iteration $n$ of this algorithm, each arm will be sampled at least $\lfloor \frac{n}{K^{B^H}}\rfloor$ times. Therefore, using the Hoeffding's  inequality, the probability that the chosen arm $\pi$ is sub-optimal in our MAB  is bounded by 
\begin{equation}
\mathbb{P}\left\{\hat{\mu}_{\pi,n}>\hat{\mu}_{\pi^{\ast},n}\right\}=\mathbb{P}\left\{ \hat{\mu}_{\pi,n}-\hat{\mu}_{\pi^{\ast},n}-\left(-\Delta_{\pi}\right)\geq\Delta_{\pi}\right\}
\leq \exp\left(-\frac{\lfloor\frac{n}{K^{B^H}}\rfloor\Delta_{\pi}^2}{2H^2}\right),
\end{equation}
where $\Delta_{\pi} = \mu_{\pi^{\ast}}-\mu_{\pi}$, and thus the expected simple regret can be bounded as 
\begin{equation}\label{eq:naive_regret}
\mathbb{E}r_n\leq HK^{B^H} \exp\left(-\frac{\lfloor\frac{n}{K^{B^H}}\rfloor d^2}{2H^2}\right).
\end{equation}

Note that $\naiveuniform$  uses each sample $\rho= \left(\initstate,a_0,s_1,a_1,\ldots,a_{H-1},s_H\right)$ to update the estimation of only a single policy $\pi$. However, recalling that arms in our MAB problem are actually compound policies, the same sample can in principle be used to update the estimates of all  policies $\pi'$ that are consistent with $\rho$ in the sense that, for $0 \leq i \leq H-1$, 
$\pi'(s_i,H-i)$ is defined and it is defined as $\pi'(s_i,H-i) = a_{i}$. 
The resulting 
 algorithm,  $\craftyuniform$, 
generates samples by choosing the actions along them uniformly at random, 
and uses the outcome of each sample to update all the policies consistent with it. 
Note that sampling the arms in $\craftyuniform$ cannot be done systematically as in $\naiveuniform$  because the set of policies updated at each iteration is stochastic. 

Since the sampling is uniform, the probability of any policy to be updated by the  sample issued at any iteration of $\craftyuniform$ is $\frac{1}{K^H}$. For an arm $\pi'$, let $N_{\pi',n}$ denote the number of samples issued at the $n$ iterations of $\craftyuniform$ that are consistent with the policy $\pi'$.  The probability that $\pi$, the best empirical arm  after $n$ iterations,  is sub-optimal is bounded by
\begin{equation}
\mathbb{P}\left\{\hat{\mu}_{\pi,n}>\hat{\mu}_{\pi*,n}\right\}\leq\mathbb{P}\left\{ \hat{\mu}_{\pi,n}-\mu_{\pi}\geq\frac{\Delta_{\pi}}{2}\right\} + \mathbb{P}\left\{ \hat{\mu}_{\pi*,n}-\mu_{\pi*}\geq\frac{\Delta_{\pi}}{2}\right\}.
\end{equation}
Each of the two terms on the right-hand side can be bounded as:
\begin{equation}
\begin{split}
\mathbb{P}\left\{ \hat{\mu}_{\pi,n}-\mu_{\pi}\geq\frac{\Delta_{\pi}}{2}\right\}&\leq\mathbb{P}\left\{N_{\pi,n}\leq\frac{n}{2K^H}\right\}+\mathbb{P}\left\{N_{\pi,n}>\frac{n}{2K^H},\;\hat{\mu}_{\pi,n}-\mu_{\pi}\geq\frac{\Delta_{\pi}}{2}\right\}\\
&\stackrel{(\dagger)}{\leq} e^{-\frac{n}{2K^{2H}}}+\sum_{i=\frac{n}{2K^H}+1}^n \mathbb{P}\left\{N_{\pi,n}=i\right\}\mathbb{P}\left\{\hat{\mu}_{\pi,n}-\mu_{\pi}\geq\frac{\Delta_{\pi}}{2}\;\middle|\;N_{\pi,n}=i\right\}\\
&\leq e^{-\frac{n}{2K^{2H}}}+\mathbb{P}\left\{\hat{\mu}_{\pi,n}-\mu_{\pi}\geq\frac{\Delta_{\pi}}{2}\;\middle|\;N_{\pi,n}=\frac{n}{2K^H}+1\right\}\sum_{i=\frac{n}{2K^H}+1}^n \mathbb{P}\left\{N_{\pi,n}=i\right\}\\
&\leq e^{-\frac{n}{2K^{2H}}}+\mathbb{P}\left\{\hat{\mu}_{\pi,n}-\mu_{\pi}\geq\frac{\Delta_{\pi}}{2}\;\middle|\;N_{\pi,n}=\frac{n}{2K^H}+1\right\}\\
&\stackrel{(\ddagger)}{\leq} e^{-\frac{n}{2K^{2H}}}+e^{-\frac{n\Delta_{\pi}^2}{4K^{H}H^2}}\\
&\leq 2 e^{-\frac{n\Delta_{\pi}^2}{4K^{2H}H^2}}, 
\end{split}
\end{equation}
where $(\dagger)$ and $(\ddagger)$ are by the Hoeffding inequality. 
In turn, similarly to Eq.~\ref{eq:naive_regret}, the simple regret for $\craftyuniform$ is  bounded by
\begin{equation}\label{eq:crafty_regret}
\mathbb{E}r_n\leq 4HK^{B^H} e^{-\frac{nd^2}{4K^{2H}H^2}}.
\end{equation}
Since $H$ is a trivial upper-bound on $\mathbb{E}r_n$, the bound in Eq.~\ref{eq:crafty_regret} becomes effective only when $4K^{B^H}\exp\left(-\frac{nd^2}{4K^{2H}H^2}\right)<1$, that is, for 
\begin{equation}
\label{eq:transitioncrafty}
n>\left(K^2B\right)^H \cdot 4\left(\frac{H}{d}\right)^2 \log K.
\end{equation} 
Note that this transition period length is still much better than that of $\UCT$, which is hyper-exponential in $H$. Moreover, unlike in $\UCT$, the rate of the simple regret reduction is then exponential in the number of iterations.

\subsection{Two-phase sampling and $\BRUSH$}

While both the simple regret convergence rate, as well as the length of the transition period of  $\craftyuniform$, are more attractive than those of  $\UCT$, this in itself is not much of a help: $\craftyuniform$ requires explicit reasoning about $K^{B^H}$ arms, and thus it cannot be efficiently implemented. However, it does show the promise of  separation of  concerns in online planning. 
We now introduce an $\MCTREE$ family of algorithms, referred to as $\MCTREEnew$, that allows utilizing this promise to a large extent.

The  instances of the $\MCTREEnew$ family vary along four parameters: {\em switching point function} $\spfunc: \nat \rightarrow \{1,\dots,\horiz\}$, {\em exploration policy}, {\em estimation policy}, and {\em update policy}. 
With respect to these four parameters, the $\MCTREE$ components in $\MCTREEnew$ are as follows.
\begin{itemize}
\item Similarly to $\UCT$, each node/action pair $(\state,\action)$ is associated with variables $\pcount(\state,\action)$ and $\qcount(\state,\action)$. However, while counters $\pcount(\state,\action)$ are initialized to $0$, value accumulators $\qcount(\state,\action)$ are schematically initialized to $-\infty$.

\item $\genprobe$: Each iteration of $\BRUSH$ corresponds to a single state-space sample of the MDP, and these samples $\probe = \tuple{s_{0},a_{1},s_{1},\dots,a_{k},s_{k}}$ are all issued from the root node $\initstate$. The sample ends either when  a sink state is reached, that is, $A(s_{k})=\emptyset$, or when $k=H$. The generation of $\probe$ is done in two phases: At iteration $n$, the actions at states $s_{0},\dots,s_{\spfunc(n)-1}$ are selected according to the exploration policy of the algorithm, while the actions at states $s_{\spfunc(n)},\dots,s_{k-1}$ are selected according to its estimation policy.
\item $\expandtree$:  $\stree$ is expanded with the suffix of state sequence  $s_1,\ldots,s_{\spfunc(n)-1}$ that is new to $\stree$. 
\item $\updatestat$: For each state $s_{i} \in \{s_{0},\dots,s_{\spfunc(n)-1}\}$, the update policy of the algorithm prescribes whether it should be updated. If $s_{i}$ should be updated, then 
the counter $\pcount(\state_{i},\action_{i+1})$ is incremented and the estimated $Q$-value is updated according to Eq.~\ref{e:uctupdate} (p.~\pageref{e:uctupdate}).

\item $\recommend$: The recommended action is chosen uniformly at random among the actions $a$ maximizing $\qcount(s_{0},a)$. 
\end{itemize}
In what follows, for $n>0$, the $n$-th iteration of $\BRUSH$ will be called {\em $\cH$-iteration} if $\spfunc(n)=\cH$. At a high  level, the two phases of sample  generation  respectively target the two exploratory objectives of online MDP planning: While the sample prefixes aim at exploring the options, the sample suffixes aim at improving the value estimates for  the current candidates for $\optpolicy$. In particular, this separation allows us to introduce a specific $\MCTREEnew$ instance, $\BRUSH$,\footnote{Short for {\bf B}est {\bf R}ecommendation with {\bf U}niform {\bf E}xploration; the name is carried on from our first presentation of the algorithm in~\cite{feldman:domshlak:arxiv12}, where ``estimation'' was referred to as ``recommendation.''} that is tailored to simple regret minimization.
%
The $\BRUSH$ setting of  $\MCTREEnew$
is described below, and Figure~\ref{fig:bruedynamics} illustrates its dynamics.
\begin{itemize}
\item The switching point function $\spfunc: \nat \rightarrow \{1,\dots,\horiz\}$ is
\begin{equation}
\label{eq:nh1}
\spfunc(n) = H-((n-1) \!\!\mod H),
\end{equation}
that is, the depth of exploration is chosen by a round-robin on $\{1,\dots,H\}$, in reverse order.

\item At state $s$, the exploration policy samples an action uniformly at
random, while the estimation policy samples an action uniformly at
random, but only among the actions $a \in \actions({\state})$ that {\em maximize} $\qcount({\state},\action)$.

\item For a sample $\probe$ issued at iteration $n$, only the state/action pair $(s_{\spfunc(n)-1},a_{\spfunc(n)})$ immediately preceding  the switching state $s_{\spfunc(n)}$ along $\probe$ is updated. That is, the information obtained by the second phase of $\probe$ is used only for improving the estimate at state $s_{\spfunc(n)-1}$, and is {\em not} pushed further up the sample. While that may appear wasteful and even counterintuitive, this locality of update is required to satisfy the formal guarantees of $\BRUSH$ discussed below.
\end{itemize}

\begin{figure}[t]
\begin{center}
\begin{tabular}{|c|c|c|c|}
\hline
\imagetop{\includegraphics[width=3cm]{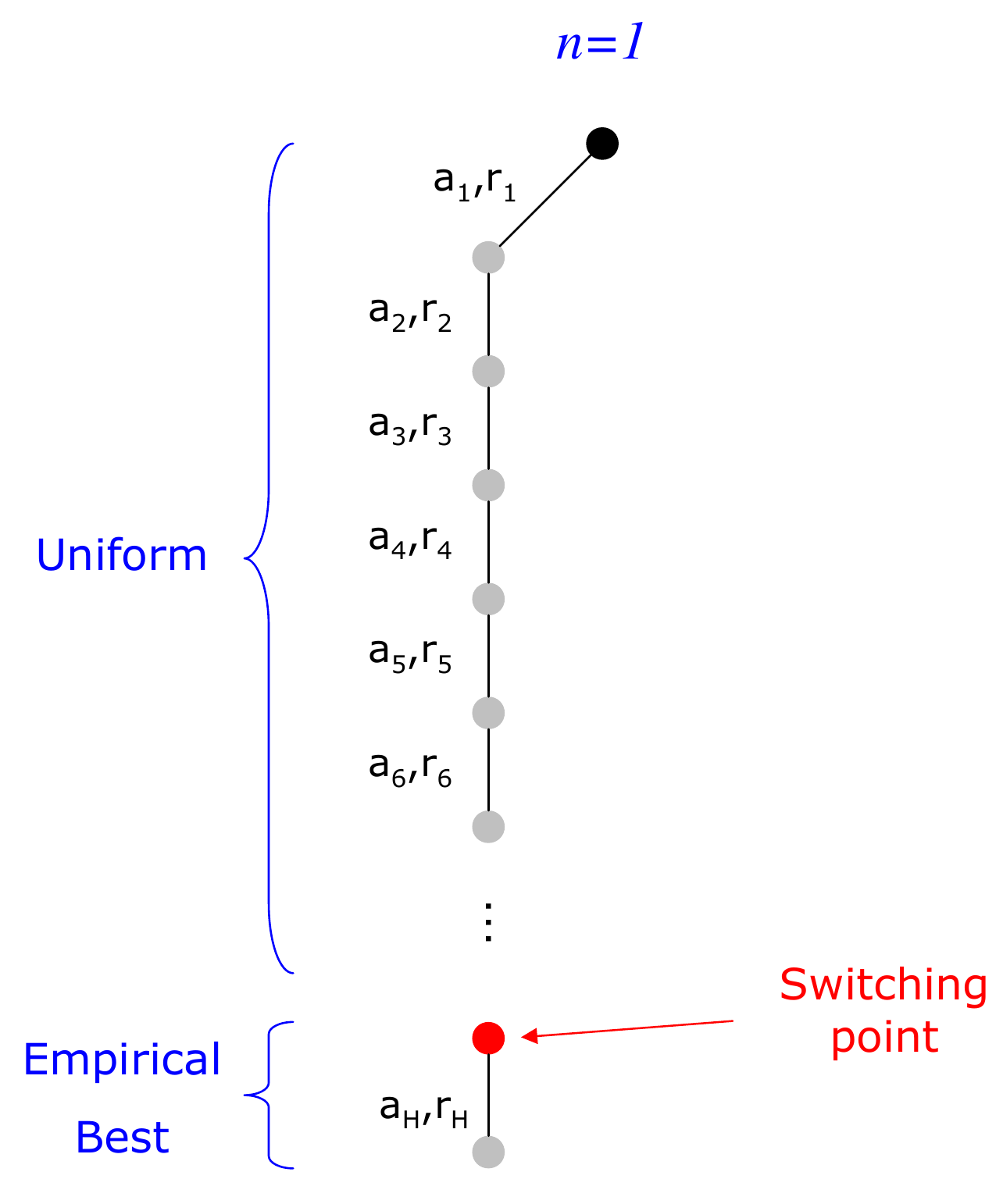}} &
\imagetop{\includegraphics[width=3cm]{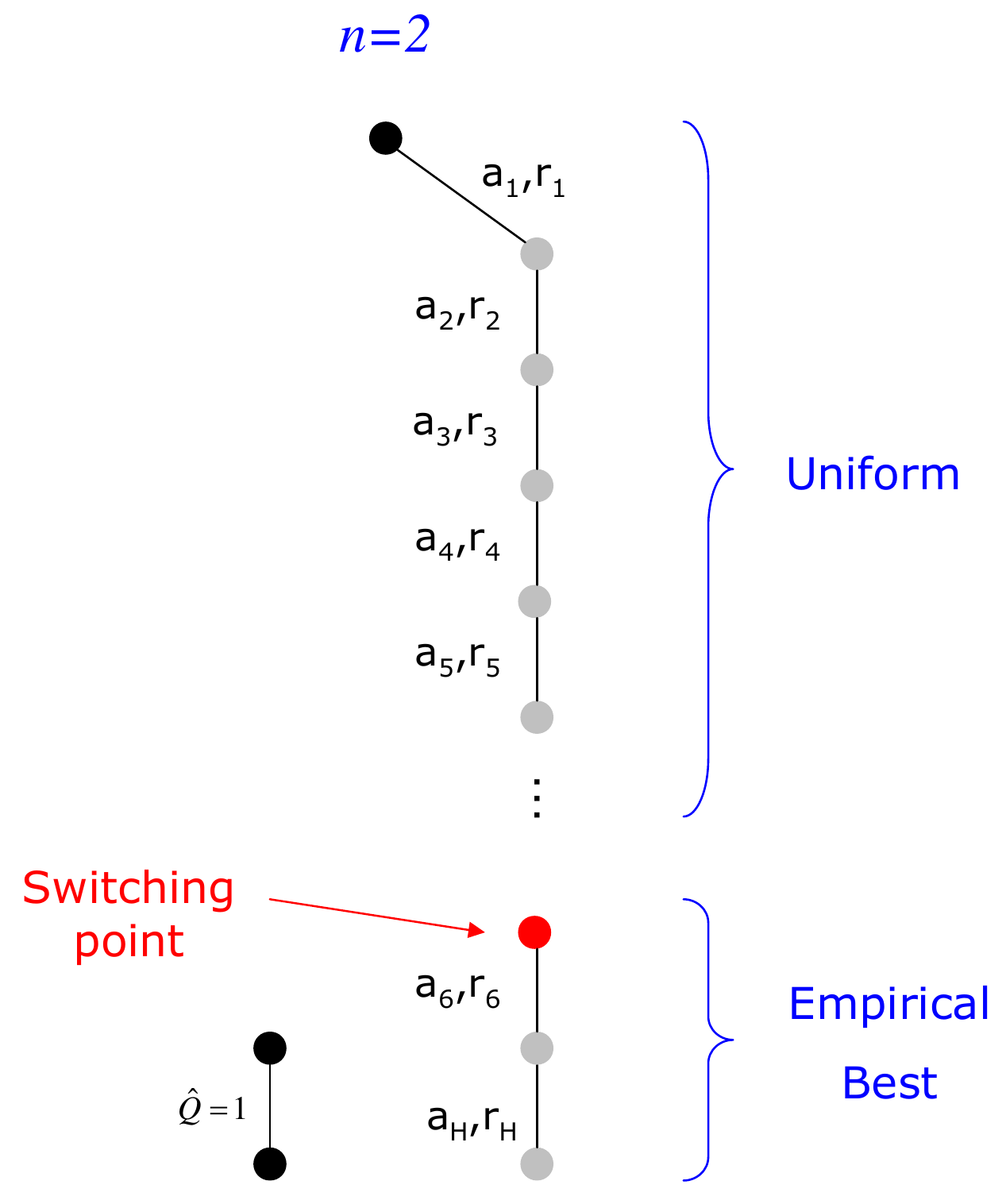}} &
\imagetop{\includegraphics[width=3cm]{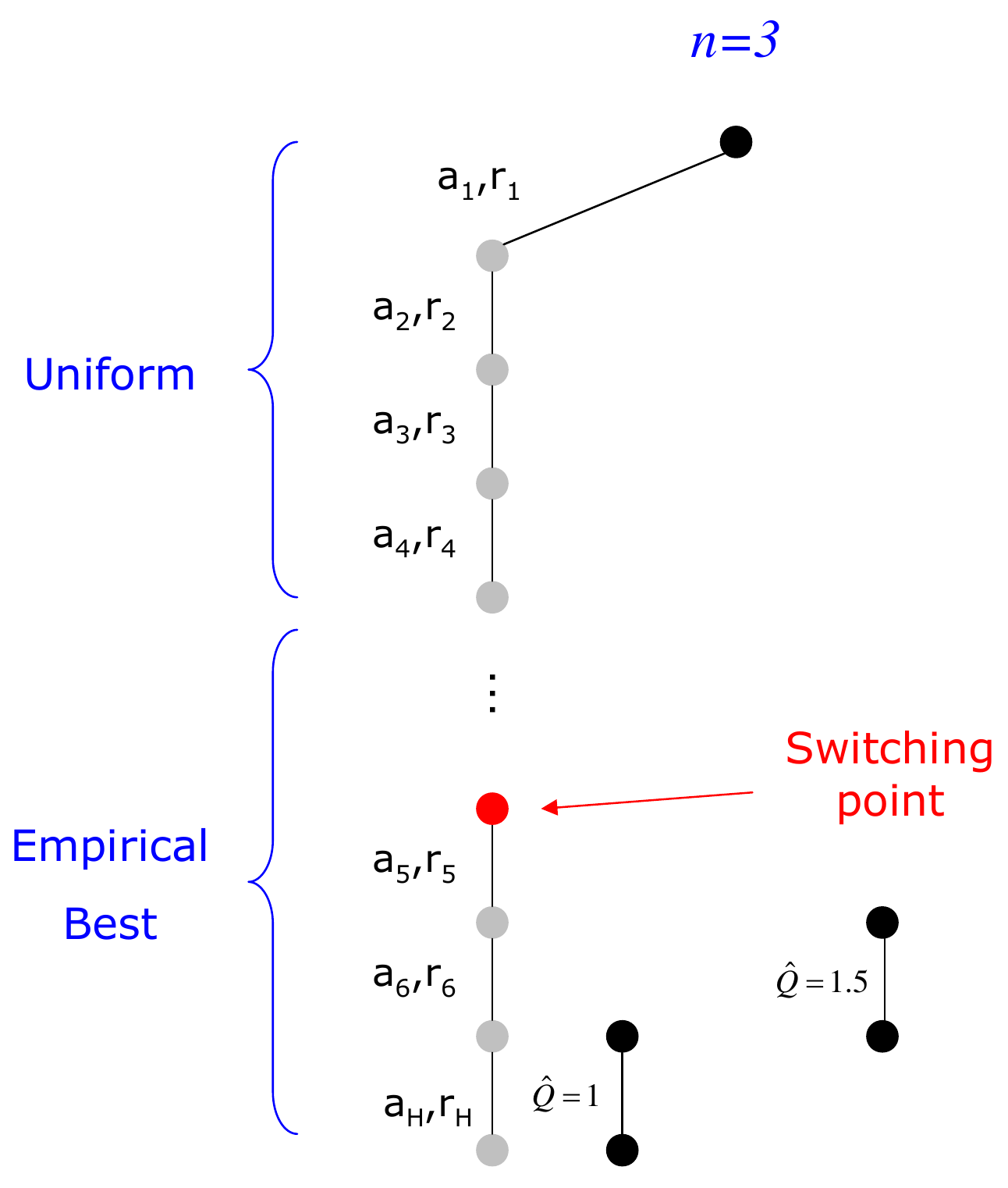}}
\\
 & & \\
\hline
\imagetop{\includegraphics[width=2cm]{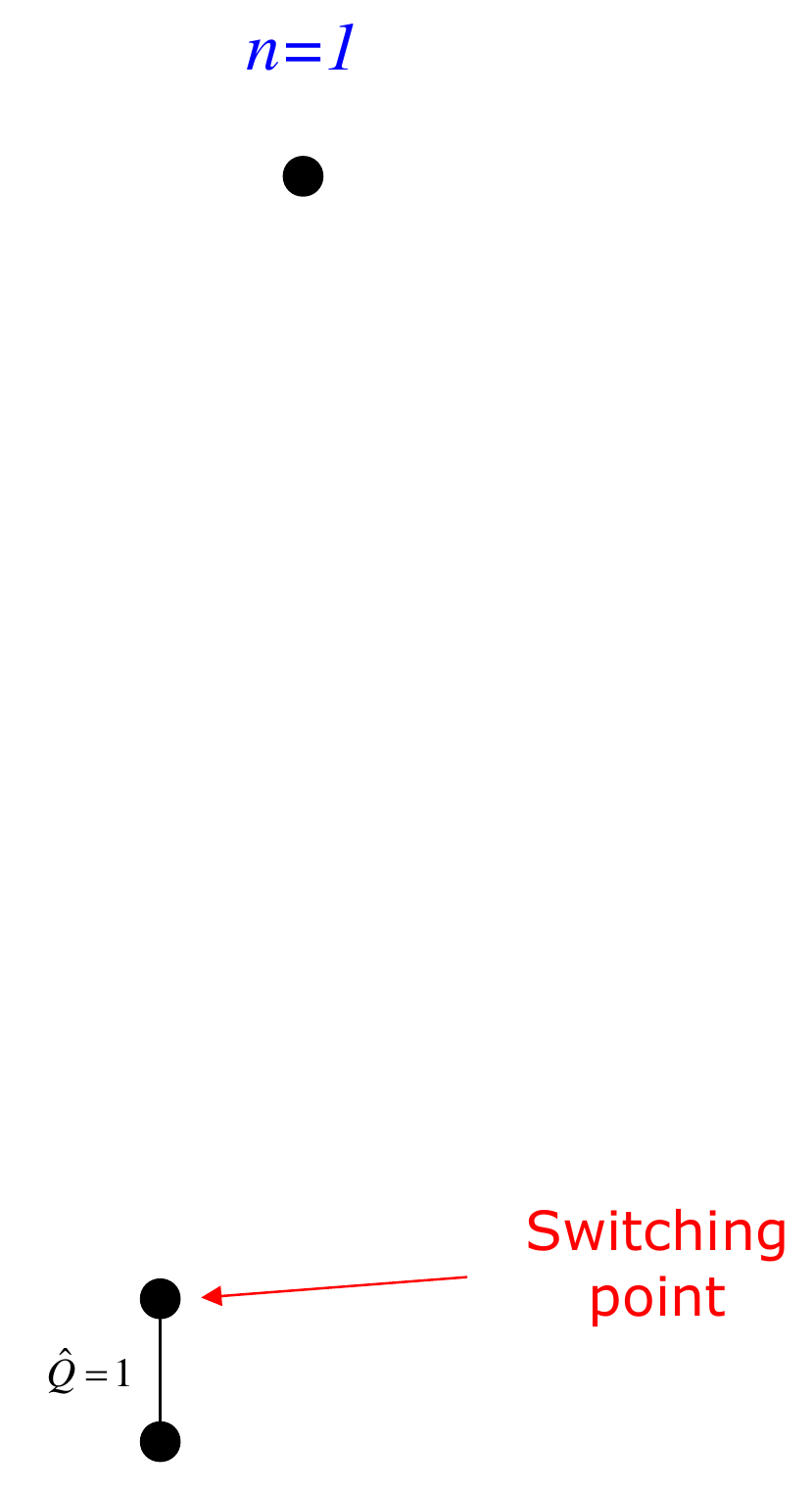}} &
\imagetop{\includegraphics[width=2cm]{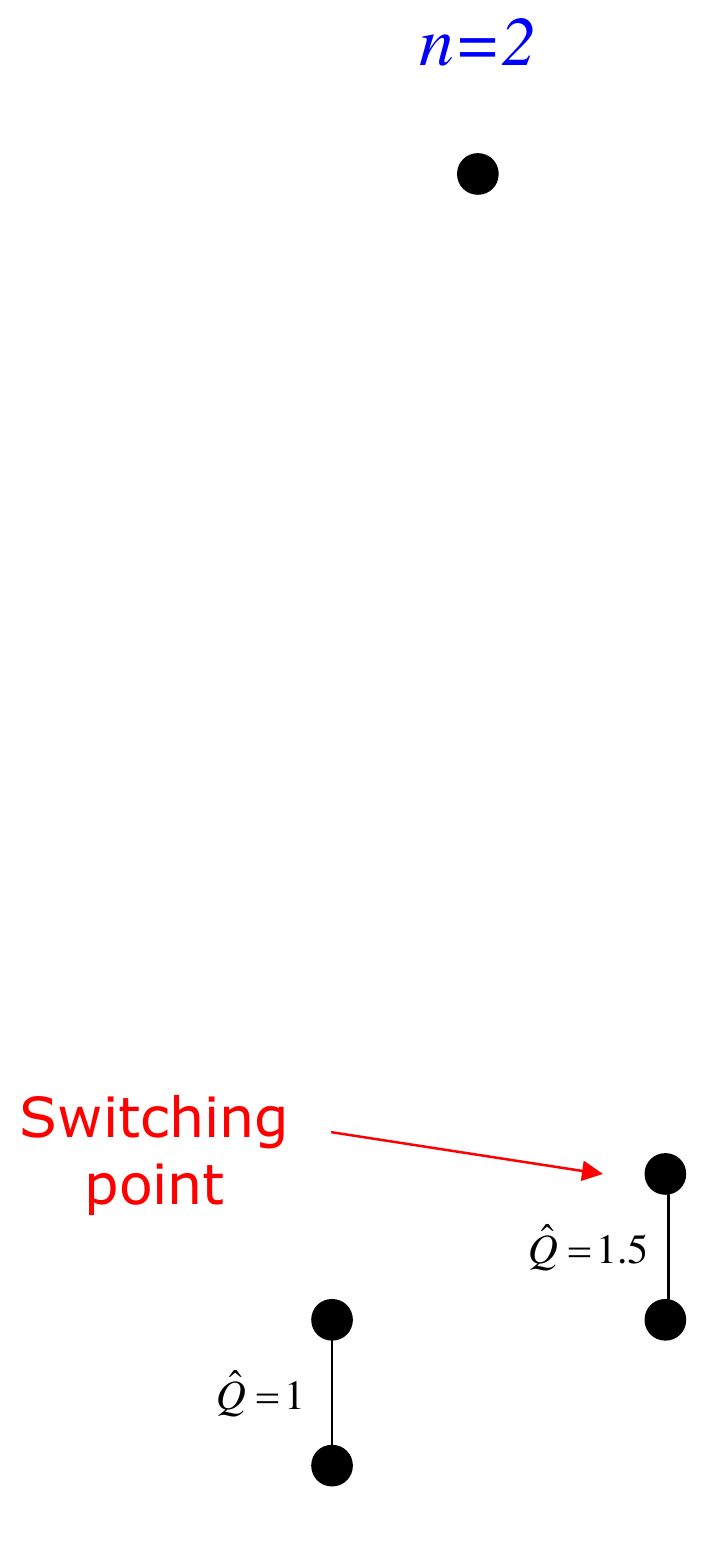}} &
\imagetop{\includegraphics[width=2.3cm]{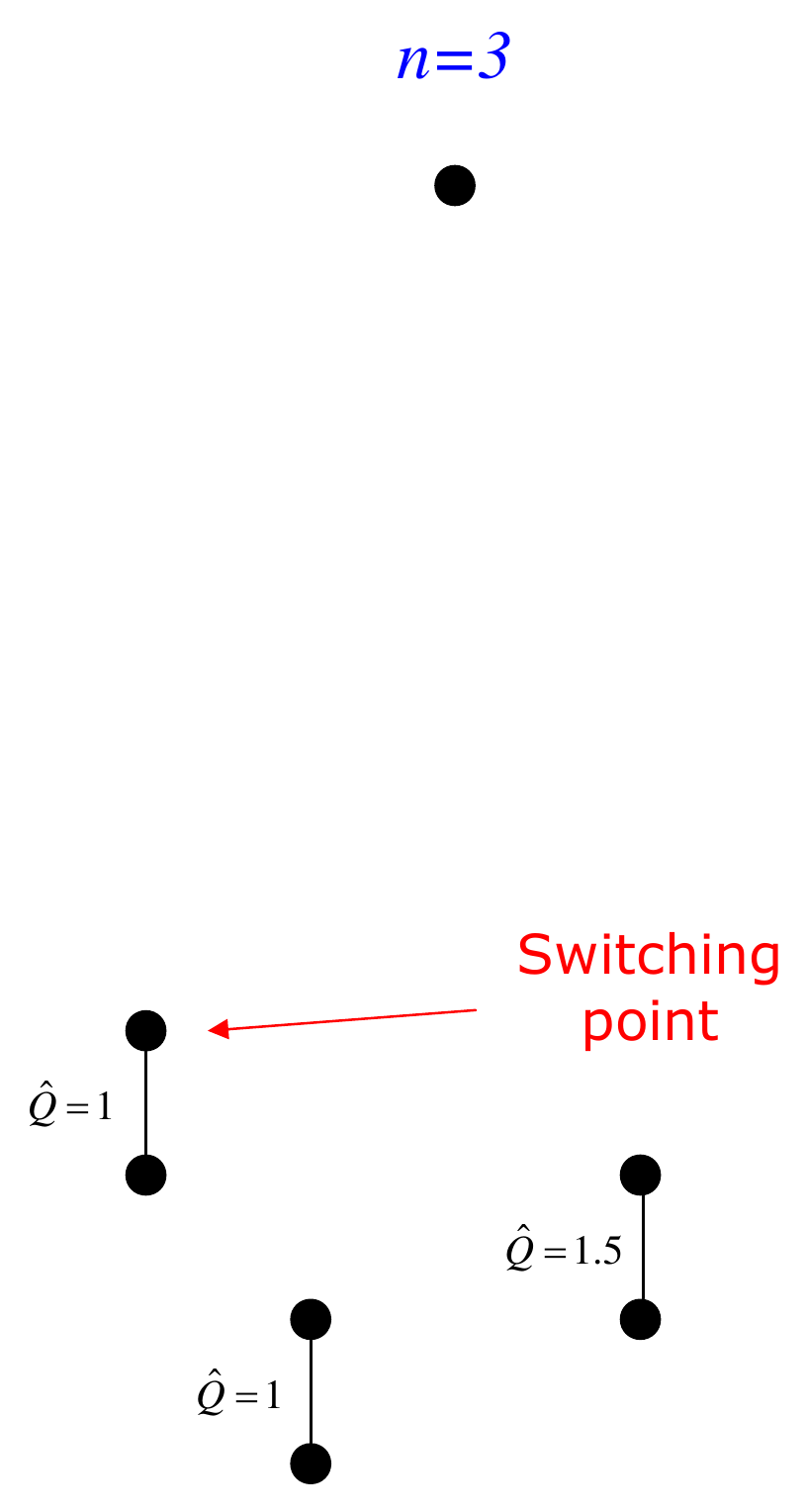}}
\\
 & & \\
\hline
\multicolumn{3}{c}{$\cdots$}\\
\hline
\imagetop{\includegraphics[width=2cm]{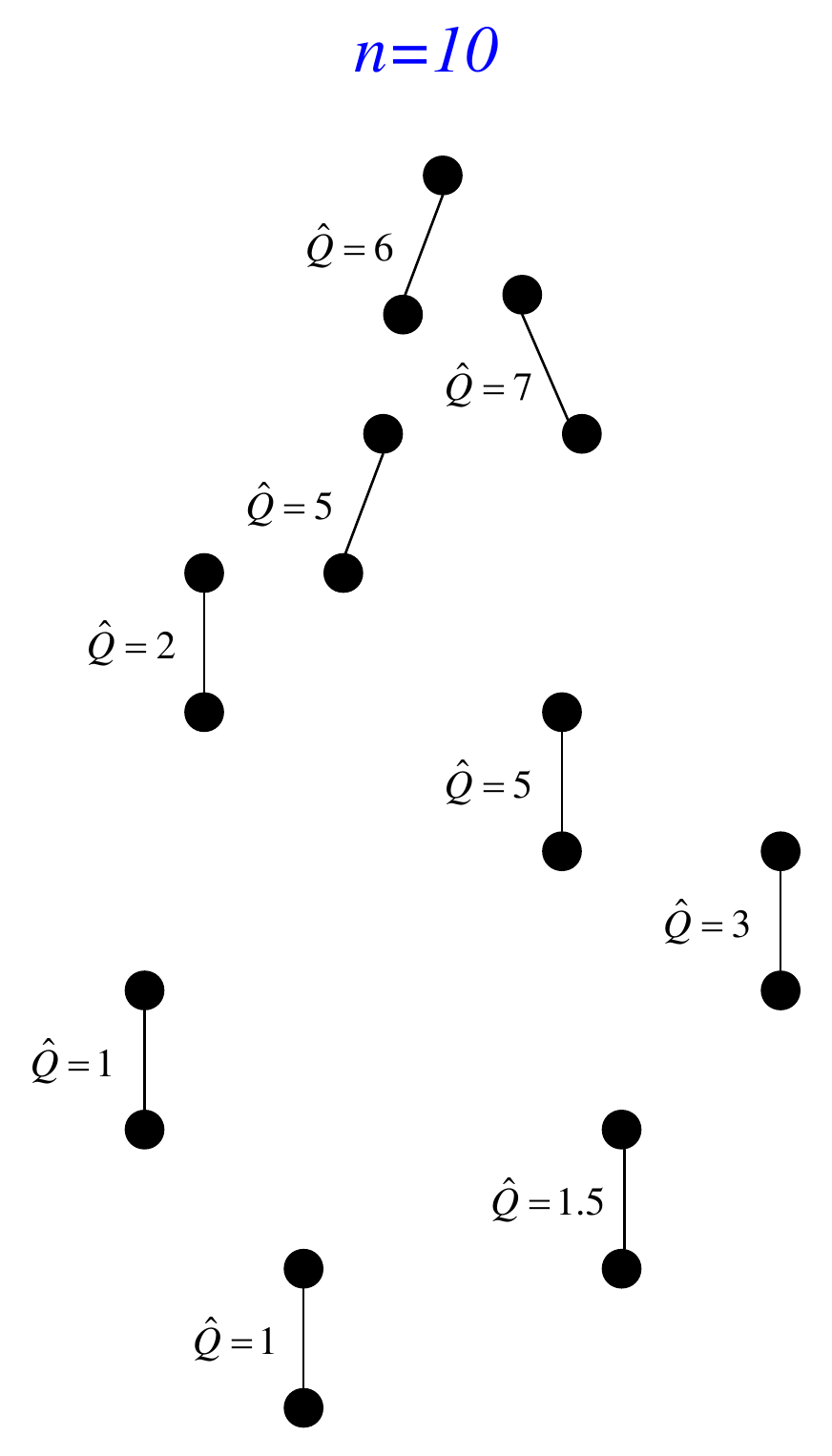}} &
\imagetop{\includegraphics[width=3.5cm]{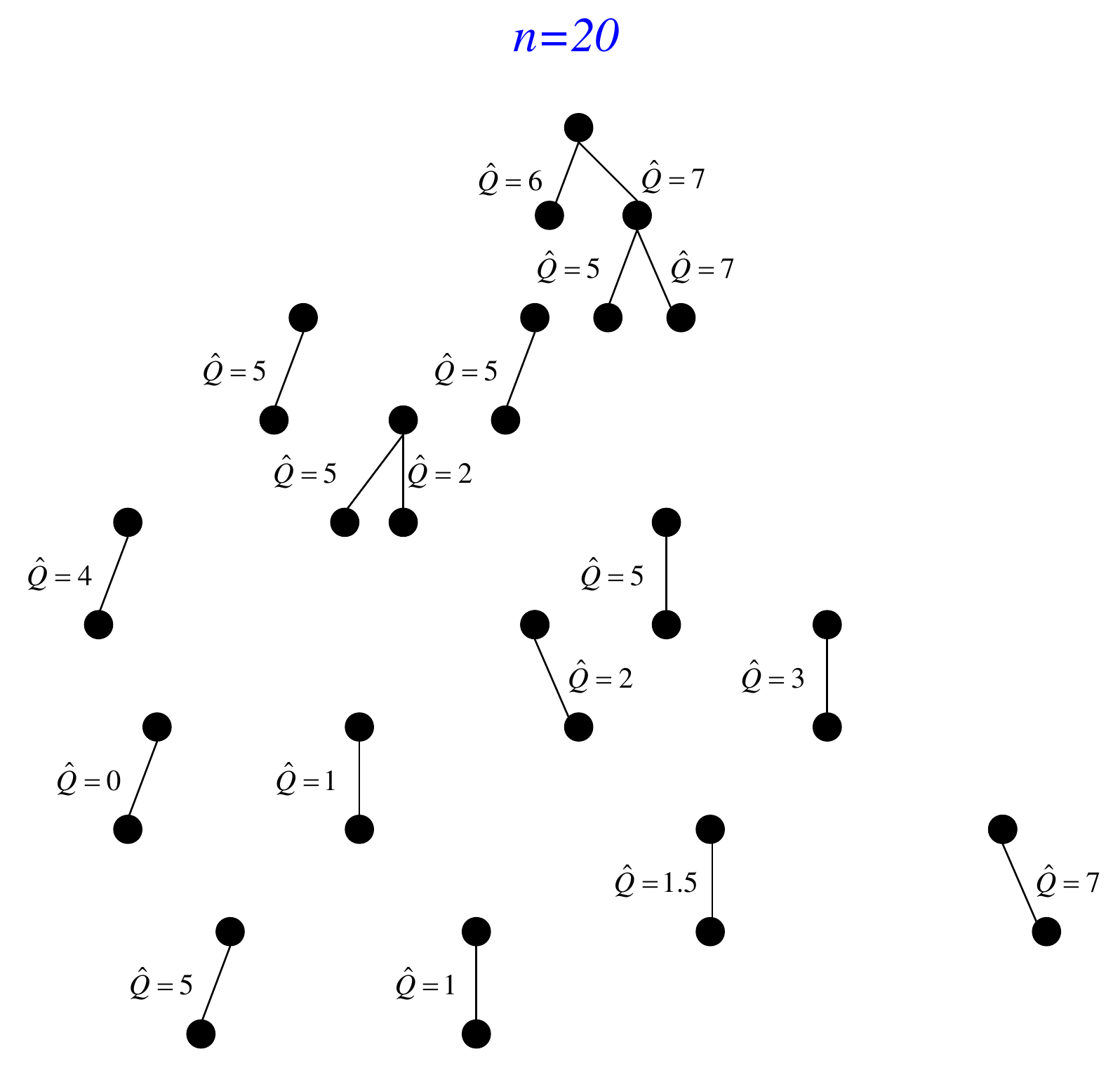}} &
\imagetop{\includegraphics[width=3.8cm]{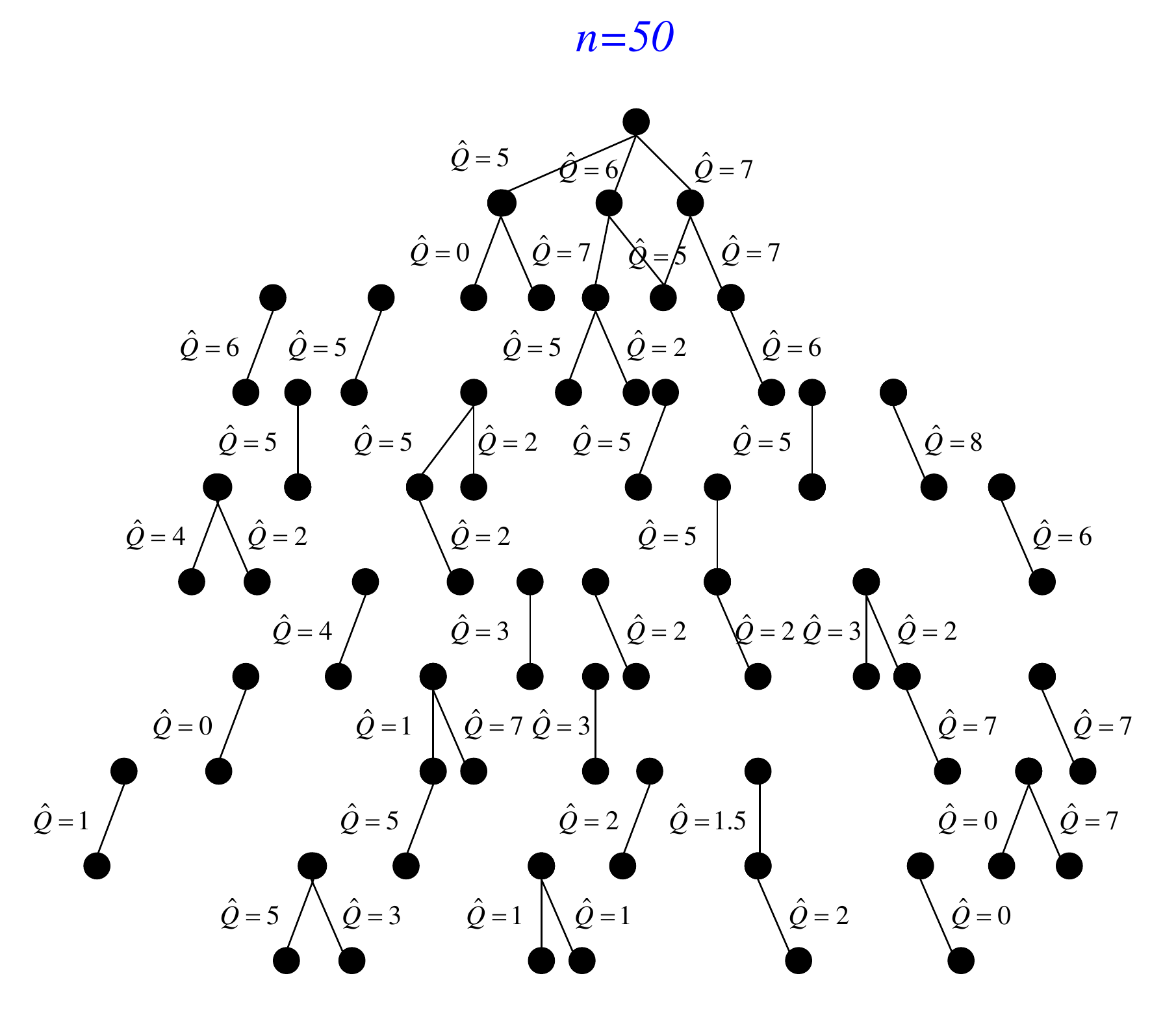}}\\
 & & \\
\hline
\end{tabular}
\end{center}
\caption{\label{fig:bruedynamics}Illustration of the $\BRUSH$ dynamics}
\end{figure}

Before we proceed with the formal analysis of $\BRUSH$, a few comments on it, as well as on the $\MCTREEnew$ sampling scheme in general, are in place. First, the template of $\MCTREEnew$ is rather general, and some of its parametrizations will not even guarantee convergence to the optimal action. This, for instance, will be the case with a (seemingly minor) modification of $\BRUSH$ to purely uniform estimation policy. In short,  $\MCTREEnew$ should be parametrized with care.
Second, while in what follows we focus on $\BRUSH$, other instances of $\MCTREEnew$ may appear to be empirically effective as well with respect to the reduction of simple regret over time. Some of them, similarly to $\BRUSH$, may also guarantee exponential-rate reduction of simple regret over time.
Hence, we clearly cannot, and do not, claim any uniqueness of $\BRUSH$ in that respect.
Finally, some other families of MCTS algorithms, more sophisticated that $\MCTREEnew$,  can  give rise to even more (formally and/or empirically) efficient optimizers of simple regret. The $\BRUSHa$ set of algorithms that we discuss later on is one such example.


\section{Upper Bounds on Simple Regret Reduction Rate with $\BRUSH$}

For the sake of simplicity, in our formal analysis of $\BRUSH$ we assume uniqueness of the optimal policy $\optpolicy$; that is, at each state $s$ and each number $\horizvar$ of steps-to-go, there is a single optimal action, and it is $\optpolicy(s,\horizvar)$. 
Let $\stree_{n}$ be the graph obtained by $\BRUSH$ after $n$ iterations, and let $\qcount_{\horizvar}(s,a)$ denote the accumulated value $\qcount(s,a)$ for $s$ at depth $\horiz-\horizvar$. 
For all state/steps-to-go pairs $(\state,\horizvar) \in \stree_{n}$, $\brushpolicy{n}(s,\horizvar)$ is a randomized strategy, uniformly choosing among actions $a$ maximizing $\qcount_{\horizvar}(s,a)$. 
We also use some additional auxiliary notation.
\begin{itemize}
\item[$K$] $= \max_{s\in S}{|A(s)|}$, i.e., the maximal number of actions per state. 
\item[$\minprob$] $= \min_{s,a,s':\transp(s,a,s')>0}\transp(s,a,s')$, i.e., the likelihood of the least likely (but still possible) outcome of an action in our problem.
\item [$\mindelta$] $=\min_{s,a}\sregret{1}{s}{a}$, i.e., the
smallest difference between the value of the optimal and a second-best action at a state with just one step-to-go.
\end{itemize}

Our key result on the $\BRUSH$ algorithm is Theorem~\ref{thm:brush_regret} below. The proof of Theorem~\ref{thm:brush_regret}, as well as of several required auxiliary claims, is given in Appendix~\ref{app:proofs}. Here we outline only the key issues  addressed by the proof, and provide a high-level flow of the proof in terms of a few central auxiliary claims.

\begin{theorem}
\label{thm:brush_regret}
Let $\BRUSH$ be called on a state $\initstate$ of an MDP $\mdp{\states,\actions,\transp,\reward}$ with rewards in $\left[0,1\right]$ and finite horizon $H$. There exist pairs of parameters $\paramgenone,\paramgentwo > 0$, dependent only on $\{p,d,K,H\}$, such that,
after $n > H$ iterations of $\BRUSH$, we have simple regret bounded as
  \begin{equation}\label{eq:brush_gen_reg}
    \expectation \regret{s,\reca{n}(\initstate,H)}{H}\leq 
    H \paramgenone\cdot e^{-\paramgentwo n},
\end{equation}
and choice-error probability bounded as
  \begin{equation}\label{eq:brush_gen_prob}
    \simpleprob \left\{\reca{n}(\initstate,H)\neq\optpolicy(\initstate,H)\right\}\leq 
    \paramgenone\cdot e^{-\paramgentwo n}.
  \end{equation}
In particular, these bounds hold for
\begin{equation}\label{eq:cgenone}
\paramgenone=\frac{4 K^{3H^{2}-2H}(H!)^3\prod_{h=1}^{H-1}(h!)^4 24^{H-1} 16^{(H-1)^2}}{d^{2H^2-4H+2}p^{3H^{2}-3H}},
\end{equation}
and
\begin{equation}\label{eq:cgentwo}
\paramgentwo=\frac{3d^{2H-2} p^{2H-1}}{2H 16^{H-1}(H!)^2K^{2H}}.
\end{equation}
\end{theorem}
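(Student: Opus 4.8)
The plan is to reduce the whole statement to the single exponential error bound \eqref{eq:brush_gen_prob} and then prove that bound by a bottom-up induction that exploits the reverse round-robin structure of $\spfunc$. Since rewards lie in $[0,1]$ and the horizon is $H$, we have the trivial bound $\sregret{H}{s}{a}\le H$ for every action, whence $\expectation\regret{s,\reca{n}(\initstate,H)}{H}\le H\cdot\simpleprob\{\reca{n}(\initstate,H)\neq\optpolicy(\initstate,H)\}$; thus \eqref{eq:brush_gen_reg} follows from \eqref{eq:brush_gen_prob} with the same $\paramgenone,\paramgentwo$, and the whole theorem reduces to the error bound. To prove \eqref{eq:brush_gen_prob} I would induct on the number of steps-to-go $\ell=1,\dots,H$, tracking for each reachable state $s$ at $\ell$ steps-to-go the failure event $\neg E_\ell$ that the empirical greedy action $\reca{n}(s,\ell)$ differs from $\optpolicy(s,\ell)$; the root error probability is the $\ell=H$ instance at $s=\initstate$.

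The structural observation that makes the induction go through is that a pair $(s,a)$ at depth $\cH-1$ is updated precisely on the $\spfunc(n)=\cH$ iterations whose exploration phase reaches $s$ and selects $a$, and the value it receives is the immediate reward plus the return of the current estimation (greedy) policy played from the switching state onward. This estimation suffix consults only greedy policies with $\ell-1,\dots,1$ steps-to-go; hence, conditioned on all those deeper greedy policies being optimal at every reachable state, each suffix return is an unbiased sample of $\optq{s,a}{\ell}$ supported in $[0,H]$. This is exactly where the locality of update is essential: because the suffix return is attributed only to the immediate predecessor and is not pushed further up the sample, $\qcount_{\ell}(s,a)$ is a clean empirical mean of conditionally i.i.d.\ samples with mean $\optq{s,a}{\ell}$, rather than a recursively self-contaminated average.

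The argument then assembles from three ingredients. (i) A counting bound: there are $\lfloor n/H\rfloor$ iterations with $\spfunc(n)=\cH$, and a fixed depth-$(\cH-1)$ pair is hit on each with probability at least $p^{\cH-1}K^{-\cH}$ (each of the $\cH$ explored actions is chosen with probability $1/K$ and each of the $\cH-1$ transitions has probability at least $p$), so a Chernoff bound gives that every reachable such pair is updated at least half its mean count except with probability exponentially small in $n$. Because every realizable outcome has probability at least $p$, the branching factor obeys $B\le 1/p$, so the number of reachable states at depth $d$ is at most $(K/p)^{d}$; this is what lets all union bounds be written through $\{p,d,K,H\}$ alone, with no dependence on $|S|$ or $B$, and is the origin of the large powers of $K$ and $p$ in $\paramgenone$. (ii) A concentration bound: conditioned on correct deeper greedy policies and on the good count, Hoeffding places every $\ell$-step estimate within $\mindelta/2$ of its true $Q$-value, and since the relevant optimality gap is at least $\mindelta$, the $\ell$-step greedy choice is then optimal. (iii) Inductive bookkeeping: a union bound over the at most $(K/p)^{H-\ell}$ reachable $\ell$-step states and their $K$ actions, combined with the induction hypothesis on the deeper levels, yields a recursion of the form $\simpleprob\{\neg E_\ell\}\le(\text{states})\cdot(\text{fresh failure terms})+\sum_{\ell'<\ell}\simpleprob\{\neg E_{\ell'}\}$; unrolling it to $\ell=H$ produces the explicit constants \eqref{eq:cgenone}--\eqref{eq:cgentwo}.

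The hard part is the hidden dependence inside ingredient (ii): the estimation policy evolves across iterations and is measurable with respect to the very samples used to build the estimates, so ``deeper greedy is correct'' is not an event one can simply condition on in advance. I would resolve this by fixing a threshold iteration $n_0$ proportional to $n$, using the counting bound and the lower-level analysis to show that after $n_0$ all deeper reachable greedy policies are simultaneously optimal except with probability exponentially small in $n$, and then splitting each estimator into its pre-$n_0$ and post-$n_0$ contributions. Under the good-count event at least half of a pair's updates fall after $n_0$ and carry the correct conditional mean, while the at most $n_0$ early, possibly contaminated samples shift the empirical mean by at most order $Hn_0/N_{s,a}$, which the good-count bound keeps below $\mindelta/4$. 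Keeping this contaminated prefix negligible while simultaneously holding all the accumulated counting and Hoeffding failure probabilities exponentially small, uniformly across the $H$ levels of the induction, is the delicate core of the proof and is precisely what forces the nested-power form of $\paramgenone$ and the small exponent $\paramgentwo$.
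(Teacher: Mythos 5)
Your skeleton coincides with the paper's at the top level: the reduction of \eqref{eq:brush_gen_reg} to \eqref{eq:brush_gen_prob} via the trivial bound $\regret{s,a}{H}\le H$, the induction over steps-to-go (the paper's Lemma~\ref{thm:brush_qaccuracy}), and the observation that the locality of update makes each suffix return conditionally unbiased given optimality of the deeper greedy choices. The genuine gap is in your ingredient (ii) and its proposed repair. The event ``all deeper reachable greedy policies are simultaneously optimal at every iteration after $n_0$'' is measurable only with respect to the \emph{whole} run, including the very post-$n_0$ returns you intend to average; conditioning on it destroys the independence (indeed even the martingale-difference structure) of those returns, so Hoeffding cannot be applied to the conditioned law. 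The statement ``the post-$n_0$ updates carry the correct conditional mean'' is true only in the form $\expectation\left[X_i\mid\mathcal{F}_{i-1}\right]=Q$ \emph{on} a per-sample good event $G_i\in\mathcal{F}_{i-1}$, and turning that into a deviation bound requires either a surrogate-martingale construction (replace $X_i$ by its conditional mean on $\neg G_i$, apply Azuma to the surrogate sequence, and undo the replacement on the global good event), or what the paper actually proves: the modified Hoeffding--Azuma inequality of Lemma~\ref{lemma:azuma_mod}, which tolerates conditional means that are wrong with probability $c_pe^{-c_e i}$ and absorbs the resulting bias into the prefactor. That inequality, fed by the per-sample bias decay of Lemma~\ref{thm:probe_bounds} (itself obtained through the negative-binomial child-count bound of Proposition~\ref{lemma:child_samples_bound}) and assembled in Lemma~\ref{thm:induction_step}, is the crux of the paper's proof; it is exactly the tool your sketch is missing, and without it (or the surrogate argument spelled out) your step fails. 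Note also that the paper dispenses with your threshold $n_0$ altogether by conditioning on the per-pair sample count $n_h(s,a)=t$ rather than on global iteration counts.

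A secondary shortfall concerns the constants. The theorem's ``in particular'' clause asserts the specific values \eqref{eq:cgenone}--\eqref{eq:cgentwo}, and your claim that unrolling your recursion ``produces'' them is unsubstantiated: those expressions are artifacts of the paper's particular chain of estimates (the prefactor and exponent of Lemma~\ref{lemma:azuma_mod}, the $e^{-tp_{h'}^2/(6p_h)}$ terms of Proposition~\ref{lemma:child_samples_bound}, and the final root argument). Your route --- Chernoff bounds on global per-level hit counts with probability $p^{\cH-1}K^{-\cH}$, union bounds over the at most $(K/p)^{j}$ states reachable at depth $j$, and nested thresholds shrinking across the $H$ levels --- would, if completed along the lines above, yield valid constants depending only on $\{p,d,K,H\}$, hence the existential part of the theorem, but of a demonstrably different form. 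So even with the concentration gap repaired you would have proved a variant of the statement, not the statement with the constants as given.
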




Before we proceed any further, some discussion of the statements in Theorem~\ref{thm:brush_regret} are in place. First, the parameters $\paramgenone$ and $\paramgentwo$ in the bounds established by Theorem~\ref{thm:brush_regret} are problem-dependent: in addition to the dependance on the horizon $H$ and the choice branching factor $K$ (which is unavoidable), the parameters $\paramgenone$ and $\paramgentwo$ also depend on the distribution parameters $p$ and $d$. While it is possible that this dependence can be partly alleviated,~\citeA{bubeck:etal:tcs11} showed that distribution-free exponential bounds on the simple regret reduction rate cannot be achieved even in MABs, that is, even in single-step-to-go MDPs (see Remark~2 of~\citeA{bubeck:etal:tcs11}, which is based on a lower bound on the cumulative regret established by~\citeR{auer:etal:siam02}).
Second, the specific parameters $\paramgenone$ and $\paramgentwo$ provided by
Eqs.~\ref{eq:cgenone} and~\ref{eq:cgentwo} are worst-case for MDPs with  parameters $d$, $p$, and $K$, and the bound in Eq.~\ref{eq:brush_gen_reg} becomes effective after
$$n > \frac{\ln(\paramgenone)}{\paramgentwo} = O\left[ \left( \frac{KH}{pd} \right)^{\varepsilon H^2} \right]$$
iterations, for some small constant $\varepsilon > 1$. While there is still some gap with this transition period length and the transition period length of the theoretical $\craftyuniform$ algorithm (see Eq.~\ref{eq:transitioncrafty}), this gap is not that large.\footnote{Some of this gap can probably be eliminated by more accurate bounding in the numerous bounding steps towards the proof of Theorem~\ref{thm:brush_regret}. However, all such   improvements we tried made the already lengthy proof of Theorem~\ref{thm:brush_regret} even more involved.}

The proof of Lemma~\ref{thm:brush_qaccuracy} below constitutes the crux of the proof of Theorem~\ref{thm:brush_regret}. Once we have proven this lemma, the proof of Theorem~\ref{thm:brush_regret} stems from it in a more-or-less direct manner.

\begin{lemma}\label{thm:brush_qaccuracy}
Let $\BRUSH$ be called on a state $\initstate$ of an MDP $\mdp{\states,\actions,\transp,\reward}$ with rewards in $\left[0,1\right]$ and finite horizon $H$. For each $h\in\range{H}$, there exist parameters $c_h,c_h'>0$,  dependent only on $\{p,d,K,H\}$, such that,  for each state $\state$ reachable from $\initstate$ in $H-h$ steps and any $t > 0$, it holds that
\begin{equation}\label{eq:assmpt_eq}
\begin{split}
    \mathbb{P}\left\{\widehat{Q}_h\left(s,a\right)-Q_h\left(s,a\right)\geq {d\over 2}\;\middle|\;n_h\left(s,a\right)=t\right\} &\leq c_he^{-c_h't}, \\
    \mathbb{P}\left\{\widehat{Q}_h\left(s,a\right)-Q_h\left(s,a\right)\leq -{d\over 2}\;\middle|\;n_h\left(s,a\right)=t\right\} &\leq c_he^{-c_h't}.
\end{split}
\end{equation}
In particular, these bounds hold for 
\begin{equation}\label{eq:c_recursive_carmel1}
c_{h}=\frac{K^{2Hh+h^{2}-2H-1}(h!)^3\prod_{i=1}^{h-1}(i!)^4 24^{h-1} 16^{(h-1)^2}}{d^{2(h-1)^{2}}\cdot p^{2Hh+h^{2}-2H-h}},
\end{equation}
and
\begin{equation}\label{eq:c_recursive_carmel2}
c_{h}'=\frac{3d^{2(h-1)} p^{H+h-1}}{16^{h-1}(h!)^2 K^{H+h-1}}.
\end{equation}
\end{lemma}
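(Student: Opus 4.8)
The plan is to prove Lemma~\ref{thm:brush_qaccuracy} by induction on the number of steps-to-go $h$, since the estimate $\widehat{Q}_h(s,a)$ is built from rollouts that invoke the greedy estimation policy at the strictly shallower levels $h-1,\dots,1$, whose accuracy is exactly what the inductive hypothesis controls. For the base case $h=1$, a state $s$ at depth $H-1$ is updated only during $H$-iterations, and because the estimation suffix is then empty, each of the $t$ updates to $(s,a)$ records a single-step reward $R(s,a,s')$ with $s'\sim\mathbb{P}(\cdot\mid s,a)$. Conditioning on $n_1(s,a)=t$ does not bias these rewards, since the outcomes drawn at $(s,a)$ are independent of the exploration events that determine how often $(s,a)$ is visited; hence both tails follow from Hoeffding's inequality for $[0,1]$-bounded variables, which already matches $c_1,c_1'$ in Eqs.~\ref{eq:c_recursive_carmel1}--\ref{eq:c_recursive_carmel2}.

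For the inductive step, fix a state $s$ at depth $H-h$ and write, given $n_h(s,a)=t$, the estimate as $\widehat{Q}_h(s,a)=\frac1t\sum_{j=1}^t X^{(j)}$, where $X^{(j)}=R(s,a,s'_j)+G^{(j)}$, $s'_j\sim\mathbb{P}(\cdot\mid s,a)$ is the first sampled outcome, and $G^{(j)}$ is the return of the greedy estimation rollout over the remaining $h-1$ steps from $s'_j$. The locality of the $\BRUSH$ update --- only the pair immediately before the switching point is modified --- is what gives each $X^{(j)}$ this clean form, an exploration prefix reaching $(s,a)$ followed by a pure estimation rollout, rather than a mixture of the two objectives. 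I would then split $\widehat{Q}_h(s,a)-Q_h(s,a)=\frac1t\sum_j\big(X^{(j)}-\mathbb{E}[X^{(j)}\mid\mathcal F_{j-1}]\big)+\frac1t\sum_j\big(\mathbb{E}[X^{(j)}\mid\mathcal F_{j-1}]-Q_h(s,a)\big)$, where $\mathcal F_{j-1}$ is the history before the $j$-th rollout. The first term averages bounded martingale differences (each $X^{(j)}\in[0,h]$) and is controlled by Azuma--Hoeffding; the second is a bias term. Because $Q_h$ assumes optimal continuation, $\mathbb{E}[X^{(j)}\mid\mathcal F_{j-1}]\le Q_h(s,a)$, so the bias is nonpositive. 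The \emph{upper} tail $\widehat{Q}_h-Q_h\ge d/2$ therefore needs only the martingale bound, whereas the \emph{lower} tail is the hard direction, since a persistently suboptimal greedy rollout could produce a large negative bias that mimics a genuine downward fluctuation.

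To control the bias for the lower tail, I would intersect with a ``good event'' $\mathcal G$ on which every estimate $\widehat{Q}_{h'}(s'',\cdot)$ consulted by the greedy policy during the $t$ rollouts is within $d/2$ of its true value. On $\mathcal G$, this accuracy together with the action gap $d$ forces the greedy estimation policy to agree with $\optpolicy$ at every visited state, so $G^{(j)}$ is an unbiased sample of the optimal continuation value, the bias vanishes, and the lower tail also reduces to Azuma--Hoeffding. It remains to bound $\mathbb{P}(\neg\mathcal G)$, which I would do in two coupled steps. First, a visit-count concentration argument: a deeper state relevant to the rollouts is reached by one extra exploration step, lowering its per-iteration visit rate by a factor $\approx p/K$ relative to $s$, so once $(s,a)$ has been updated $t$ times at level $h$ every such deeper state has, with high probability, been updated $\Omega((p/K)\,t)$ times (the counts are sums of Bernoulli trials, so this is a binomial/negative-binomial tail estimate). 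Second, the inductive hypothesis applied at each deeper state with that visit count, union-bounded over the finitely many reachable states and actions. Propagating these probabilities yields the per-level degradation $c_h'\propto \frac{d^2 p}{h^2 K}\,c_{h-1}'$ and the corresponding super-exponential growth of the prefactor $c_h$, matching Eqs.~\ref{eq:c_recursive_carmel1}--\ref{eq:c_recursive_carmel2}.

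The main obstacle I anticipate is exactly this inter-level coupling: making the two failure sources --- deeper states that are insufficiently explored, and deeper states that are well explored but still inaccurate --- close cleanly into the stated constants while respecting the adaptivity of the samples. The rollouts are generated on-policy against estimates that are themselves evolving, so the $X^{(j)}$ are neither independent nor identically distributed; the martingale formulation absorbs the dependence, but one must still verify that conditioning on $n_h(s,a)=t$ jointly with $\mathcal G$ preserves the $\le Q_h(s,a)$ bound on the conditional means, and that the deeper visit-count lower bound holds uniformly over which states the data-dependent greedy trajectories actually traverse. Preventing the union over reachable states from inflating the rate, and invoking the gap argument only where $d/2$-accuracy genuinely implies greedy optimality, is where the bulk of the bookkeeping --- and the factors $K$, $H!$, and $16^{(h-1)^2}$ appearing in $c_h$ --- will come from.
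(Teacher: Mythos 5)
Your overall skeleton --- induction on $h$, Hoeffding at the leaves, visit-count concentration for descendant nodes (a binomial/negative-binomial tail, as in the paper's Proposition~\ref{lemma:child_samples_bound}), and a union bound combined with the inductive hypothesis to show each greedy rollout follows $\optpolicy$ with probability exponentially close to $1$ (the paper's Lemma~\ref{thm:probe_bounds}) --- matches the paper's proof, and your observation that the bias is nonpositive, so the \emph{upper} tail needs only a martingale bound, is correct (and slightly sharper than the paper, which treats both tails symmetrically).

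The lower tail, however, is where your argument has a genuine gap, and it is precisely the point where the paper needs its new tool. Your good event $\mathcal{G}$ requires \emph{every} estimate consulted during \emph{all} $t$ rollouts through $(s,a)$ to be $d/2$-accurate. This event shrinks as $t$ grows, so $\mathbb{P}\left\{\neg\mathcal{G}\right\}$ is non-decreasing in $t$; worse, it is bounded below by a constant independent of $t$, because the first few rollouts through $(s,a)$ occur early in the run (the round-robin visits each level every $H$ iterations regardless of $t$), at which point the descendant estimates are based on $O(1)$ samples and are inaccurate with constant probability. Hence the split $\mathbb{P}\left\{\text{lower tail}\right\}\leq\mathbb{P}\left\{\text{lower tail}\cap\mathcal{G}\right\}+\mathbb{P}\left\{\neg\mathcal{G}\right\}$ cannot yield an $e^{-c_h' t}$ bound. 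Nor can you repair this by demanding accuracy only in late rollouts: $\widehat{Q}_h(s,a)$ in $\BRUSH$ averages \emph{all} $t$ samples, including the early biased ones, so you must control the random, mutually dependent number of biased contributions rather than condition them away. This is exactly what the paper's modified Hoeffding--Azuma inequality (Lemma~\ref{lemma:azuma_mod}) accomplishes: it assumes only the per-index bound $\mathbb{P}\left\{\mathbb{E}\left[X_i\mid X_1,\dots,X_{i-1}\right]\neq\mu\right\}\leq c_p e^{-c_e i}$ --- which is what your visit-count-plus-induction step delivers, rollout by rollout --- and then runs a moment-generating-function recursion in which the potentially biased early samples are absorbed into the multiplicative prefactor $1+c_p\frac{2h^2}{\delta^2 c_e^2}$, while the tilting parameter $\lambda$ must be taken of order $c_e$ so that the bias terms remain summable; this is why the bias-decay rate $c_e$ (and hence $c_h'$) appears inside the final exponent $\frac{3\delta^2 c_e}{2h^2}$, producing the recursion $c_{h+1}'\propto \frac{d^2 p}{K(h+1)^2}c_h'$ that you assert. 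Without this lemma, or an equivalent aggregation argument, a ``good event plus Azuma'' decomposition does not close the induction.
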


The proof for Lemma~\ref{thm:brush_qaccuracy} is by induction on $h$. Starting with the induction basis for $h=1$, it is easy to verify that, by the Chernoff-Hoeffding inequality,
\begin{equation}\label{eq:h1_bound}
\mathbb{P}\left\{\left\vert\estq{s,a}{1}-Q_1\left(s,a\right)\right\vert\geq {d\over 2}\;\middle|\;\nsa{s,a}{1}=t\right\}\leq 2 e^{-{d^2\over 2}t},
\end{equation}
that is, the assertion is satisfied with $\paramone{1}=1$ and $\paramtwo{1}=\frac{d^2}{2}$. Now, assuming the claim holds for $h\geq 1$, below we outline the proof for $h+1$, relegating the actual proof in full detail to Appendix~\ref{app:proofs}.

In the proof for $h>1$, it is crucial to note the invalidity of applying the Chernoff-Hoeffding bound directly, as  was done in Eq.~\ref{eq:h1_bound}. There are two reasons for this. 
\begin{description}
\item[(F1)] For $h=1$, $\Estq$ is an {\em unbiased} estimator of $\Optq$, that is,
$\expectation\Estq = \Optq$. In contrast, the estimates inside the tree (at nodes with $h>1$) are {biased}. This bias stems from $\Estq$ possibly being based on numerous sub-optimal choices in the sub-tree rooted in $(s,h)$.
\item[(F2)] For $h=1$, the summands accumulated by $\Estq$ are independent. This is not so for $h>1$, where the accumulated reward depends on the selection of actions in subsequent nodes, which in turn depends on previous rewards.
\end{description}
However, we show that these deficiencies of $h>1$ can still be overcome through
a novel modification of the seminal Hoeffding-Azuma inequality. 

%

\begin{lemma}[Modified Hoeffding-Azuma inequality]\label{lemma:azuma_mod}
Let $\{X_i\}_{i=1}^\infty$ 
be a sequence of random variables with support $[0,h]$ and $\mu_i\triangleq\mathbb{E}X_i$. If $\lim_{i\rightarrow\infty}{\mu_{i}}=\mu$, and
%
\begin{equation}
\label{e:azuma1}
\mathbb{P}\left\{\mathbb{E}\left[X_i\;\middle|\;X_1,\ldots,X_{i-1}\right] \neq \mu\right\}\leq c_p e^{-c_ei},
\end{equation}
for some $0 < c_{p}$ and $0 < c_{e} \leq 1$, then, for all $0<\delta\leq\frac{h}{2}$, it holds that
\begin{eqnarray}
\label{eq:azumainternal1}
\mathbb{P}\left\{\sum_{i=1}^tX_i\geq\mu t+t\delta\right\}&\leq& \left[ 1 + c_p\frac{2h^2}{\delta^2 c_e^2} \right] \cdot e^{-\frac{3\delta^2 c_e}{2h^2}t},\\
\label{eq:azumainternal2}
\mathbb{P}\left\{\sum_{i=1}^tX_i\leq\mu t-t\delta\right\}&\leq& \left[ 1 + c_p\frac{2h^2}{\delta^2 c_e^2}\right] \cdot e^{-\frac{3\delta^2 c_e}{2h^2}t}.
\end{eqnarray}
\end{lemma}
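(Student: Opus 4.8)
The plan is to prove only the upper tail~(Eq.~\ref{eq:azumainternal1}); the lower tail~(Eq.~\ref{eq:azumainternal2}) then follows by applying the same argument to the variables $h-X_i$. Write $Y_i \triangleq \mathbb{E}[X_i\mid X_1,\dots,X_{i-1}]$ for the conditional mean and let $B_i=\{Y_i\neq\mu\}$ be the ``bad step'' event; note $B_i$ is measurable with respect to $X_1,\dots,X_{i-1}$ and, by the hypothesis in Eq.~\ref{e:azuma1}, satisfies $\mathbb{P}(B_i)\le c_p e^{-c_e i}$. I would decompose each centered increment as
\[
X_i-\mu=\underbrace{(X_i-Y_i)}_{\text{martingale difference}}+\underbrace{(Y_i-\mu)}_{\text{bias}},
\]
so that $\sum_{i=1}^t X_i-\mu t=M_t+D_t$ with $M_t=\sum_{i=1}^t(X_i-Y_i)$ and $D_t=\sum_{i=1}^t(Y_i-\mu)$. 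The first summand is a genuine martingale and the second a bias term that is nonzero only on bad steps.

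For the martingale part, each increment $X_i-Y_i$ has conditional mean zero and lies in an interval of length $h$ (as $X_i\in[0,h]$), so the classical Hoeffding--Azuma inequality gives $\mathbb{P}\{M_t\ge x\}\le e^{-2x^2/(th^2)}$. For the bias part, $0\le Y_i-\mu\le h$ on $B_i$ and $Y_i-\mu=0$ otherwise, so $D_t\le hN_t$ where $N_t=|\{i\le t:B_i\}|$. The key to controlling the (dependent, only marginally bounded) count $N_t$ is a pigeonhole/union bound: if $N_t\ge k$ then some bad step occurs at an index $\ge k$, hence
\[
\mathbb{P}\{N_t\ge k\}\le\sum_{i\ge k}\mathbb{P}(B_i)\le c_p\sum_{i\ge k}e^{-c_e i}\le\frac{2c_p}{c_e}e^{-c_e k},
\]
using $1-e^{-c_e}\ge c_e/2$ for $0<c_e\le1$. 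This converts the bad-step count into an exponentially decaying tail.

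I would then combine the two estimates through a free integer threshold $k$: since $\sum X_i-\mu t=M_t+D_t$ and $D_t\le hN_t$, on the event $N_t\le k$ a sum exceeding $\mu t+t\delta$ forces $M_t\ge t\delta-hk$, so for every $0\le k\le t\delta/h$,
\[
\mathbb{P}\Bigl\{\textstyle\sum_{i=1}^t X_i\ge\mu t+t\delta\Bigr\}\le e^{-\frac{2(t\delta-hk)^2}{th^2}}+\frac{2c_p}{c_e}e^{-c_e k}.
\]
The same two-term split can be obtained mechanically from the nonnegative supermartingale $e^{\lambda(S_t-\mu t)-\lambda^2h^2t/8}\prod_{i\le t}\bigl(1+(e^{\lambda h}-1)\mathbf{1}[B_i]\bigr)^{-1}$, whose expectation is at most $1$, followed by Markov's inequality and optimization over $\lambda$. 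The remaining task is to pick $k$ so that \emph{both} exponents dominate the target rate $\frac{3\delta^2 c_e}{2h^2}$: taking $hk=t\delta\bigl(1-\tfrac{\sqrt{3c_e}}{2}\bigr)$ makes the Azuma exponent exactly $\frac{3\delta^2 c_e}{2h^2}t$, and the surviving geometric prefactor $\frac{2c_p}{c_e}$ collapses into $c_p\frac{2h^2}{\delta^2 c_e^2}$ because $c_e\delta^2\le h^2$, giving the stated prefactor $1+c_p\frac{2h^2}{\delta^2 c_e^2}$.

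The step I expect to be most delicate is exactly this final balancing and the attendant constant-chasing. Because each $B_i$ is measurable with respect to the past, the bias $D_t$ and the fluctuation $M_t$ are \emph{not} independent, so one cannot simply multiply their tail bounds; the worst case is instead governed by how the deviation budget $t\delta$ is optimally apportioned between bias and fluctuation, and it is this apportionment that forces the precise exponent $\tfrac32$ and the $c_e^{-2}$ prefactor. Reconciling the two exponents at the claimed rate is where the hypotheses $c_e\le1$ (needed for $1-e^{-c_e}\ge c_e/2$) and $\delta\le h/2$ are consumed, and where the crude bound $D_t\le hN_t$ must be pushed through carefully — this is the main bookkeeping burden, and is tightest precisely in the regime of small $c_e$ that arises when the lemma is invoked in the proof of Lemma~\ref{thm:brush_qaccuracy}.
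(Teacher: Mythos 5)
Your decomposition $\sum_{i=1}^t X_i-\mu t=M_t+D_t$ with $D_t\le hN_t$, the conditional Azuma bound for $M_t$, and the pigeonhole/union bound $\mathbb{P}\{N_t\ge k\}\le\frac{2c_p}{c_e}e^{-c_ek}$ are all individually correct, and they do yield the valid intermediate inequality
\[
\mathbb{P}\left\{\sum_{i=1}^t X_i\ge\mu t+t\delta\right\}\;\le\; e^{-\frac{2(t\delta-hk)^2}{th^2}}+\frac{2c_p}{c_e}e^{-c_ek},
\qquad 0\le k\le t\delta/h .
\]
But the final balancing step---the one you flag as delicate---genuinely fails inside the parameter range the lemma covers, and you never check it. Your choice $hk=t\delta\bigl(1-\tfrac{\sqrt{3c_e}}{2}\bigr)$ makes the Azuma exponent equal the target $\frac{3\delta^2c_e}{2h^2}t$, but the bias term then decays only at rate $c_ek=\frac{c_e\delta}{h}\bigl(1-\tfrac{\sqrt{3c_e}}{2}\bigr)t$, which meets the target only if $1-\tfrac{\sqrt{3c_e}}{2}\ge\frac{3\delta}{2h}$. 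Take $c_e=1$ and $\delta=h/2$ (both allowed by the hypotheses): the target rate is $\tfrac38 t$, your $k\approx 0.067\,t$, and the surviving term $\frac{2c_p}{c_e}e^{-c_ek}\approx 2c_pe^{-0.067t}$ decays far too slowly, so your final bound (whose prefactor is independent of $t$) is false for large $t$. Worse, no choice of $k$ repairs this: you need simultaneously $hk\le t\delta\bigl(1-\tfrac{\sqrt{3c_e}}{2}\bigr)$ and $k\ge\frac{3\delta^2}{2h^2}t$, which is infeasible whenever $\frac{3\delta}{2h}+\frac{\sqrt{3c_e}}{2}>1$; in the instance above the two constraints read $k\le 0.067\,t$ and $k\ge 0.375\,t$.

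The loss is intrinsic to bounding the bias by $h$ per bad step and union-bounding the count: each bad step then costs $h$ out of the deviation budget $t\delta$, while its probability is discounted only at rate $c_e$ per unit of count. The paper never leaves the exponential moment: it establishes the recursion $f(t)\le e^{\lambda^2h^2/8}f(t-1)+c_pe^{t(\lambda h-c_e)}$ for $f(t)=\mathbb{E}\bigl[e^{\lambda\sum_{i=1}^t(\mu-X_i)}\bigr]$ (splitting on whether step $t$ is good or bad), unrolls it, and applies Markov with $\lambda=\frac{2\delta c_e}{h^2}$. Because $\lambda h=\frac{2\delta c_e}{h}\le c_e$ (this is exactly where $\delta\le h/2$ is consumed), the crude inflation $e^{\lambda h\tau}$ charged to a bad step at time $\tau$ is absorbed by that step's own probability decay $e^{-c_e\tau}$, the geometric series converges, and the full rate $\frac{3\delta^2c_e}{2h^2}$ survives with prefactor $1+c_p\frac{2h^2}{\delta^2c_e^2}$---including in the regime $c_e=1$, $\delta=h/2$ that breaks your argument. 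Your supermartingale variant does not escape this either, since extracting a tail bound from it again forces a split on $N_t$ and hence the same two-term tradeoff. To prove the lemma as stated you would need to keep the bad events inside the moment-generating function, i.e., essentially reprove the paper's recursion; your present route can only deliver a strictly weaker exponent in part of the admissible parameter range.
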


Together with Lemma~\ref{thm:probe_bounds} below, the inequalities provided by Lemma~\ref{lemma:azuma_mod} allow us to prove the induction hypothesis in the proof of the central Lemma~\ref{thm:brush_qaccuracy}. 
Note that the specific bound in Lemma~\ref{lemma:azuma_mod} is selected so to  maximize  the exponent coefficient. For any $0\leq\beta\leq 1$, the probabilities of interest in Eqs.~\ref{eq:azumainternal1}-\ref{eq:azumainternal2} can also be bounded by
\[
\left[ 1+ \frac{c_p}{c_e\left(1-\beta\right)} e^{-\frac{c_e\left(1-\beta\right)}{2h^2}}\right]e^{-\frac{3\delta^2 c_e\beta}{2h^2}t };
\]
for further details, we refer the reader to Discussion~\ref{dsc:estimate_bound} in Appendix~\ref{app:proofs}.


\begin{definition}
\label{def:Xevent}
Let $\mdp{\states,\actions,\transp,\reward}$ be an MDP with rewards in $\left[0,1\right]$, planned for initial state $\initstate \in \states$ and finite horizon $H$. Let $\state$ be a state reachable from $\initstate$ with $h$ steps still to go, let $a$ be an action applicable in $s$, and let $\reca{t}$ be a policy induced by running $\BRUSH$ on $\initstate$ until exactly $t>0$ samples have finished their exploration phase with applying action $a$ at $s$ with $h-1$ steps still to go. Given that,
\begin{itemize}
\item $X_{t,h}(s,a)$ is a random variable, corresponding to the reward obtained by taking  $a$ at  $s$, and then following $\reca{t}$ for the remaining $h-1$ steps.
\item $E_{t,h}\left(s,a\right)$ is the event in which $X_{t,h}(s,a)$ is sampled along the optimal actions at each of the $h-1$ choice points delegated to $\reca{t}$.
\item $\delta_{t,h}\left(s,a\right)=Q_{h}\left(s,a\right)-\mathbb{E}\left[X_{t,h}(s,a)\right].$
\end{itemize}
\end{definition}

\begin{lemma}
\label{thm:probe_bounds}
Let $\mdp{\states,\actions,\transp,\reward}$ be an MDP with rewards in $\left[0,1\right]$, planned for initial state $\initstate \in \states$ and finite horizon $H$. Let $\state$ be a state reachable from $\initstate$ with $h+1$ steps still to go, and $a$ be an action applicable in $s$. Considering $E_{t,h+1}\left(s,a\right)$ and $\delta_{t,h+1}\left(s,a\right)$ as in Definition~\ref{def:Xevent}, for any $t > 0$,
if Lemma~\ref{thm:brush_qaccuracy} holds for horizon $h$, then
\begin{eqnarray}
\mathbb{P}\left\{\neg E_{t,h+1}\left(s,a\right)\right\}&\leq&2Kh\left(2+c_{h}\right)e^{-\frac{pc_{h}'}{6K}t},\\
\delta_{t,h+1}\left(s,a\right)&\leq& 2Kh^2\left(2+c_{h}\right)e^{-\frac{pc_{h}'}{6K}t}.
\end{eqnarray}
\end{lemma}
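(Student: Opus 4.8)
The plan is to derive the second inequality from the first and then to concentrate almost all of the effort on bounding $\mathbb{P}\{\neg E_{t,h+1}(s,a)\}$. For the reduction, I would couple the greedy roll-out defining $X_{t,h+1}(s,a)$ with an optimal roll-out that reuses the same transition randomness. On $E_{t,h+1}(s,a)$ the estimation policy $\reca{t}$ agrees with $\optpolicy$ at every one of the $h$ visited choice points, so the two roll-outs coincide pathwise and contribute $0$ to $Q_{h+1}(s,a)-\mathbb{E}[X_{t,h+1}(s,a)]$; off $E_{t,h+1}(s,a)$ the two rewards differ by at most $h$ (the immediate reward of $a$ is shared, and the remaining $h$ rewards lie in $[0,1]$). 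Hence $\delta_{t,h+1}(s,a)\le h\,\mathbb{P}\{\neg E_{t,h+1}(s,a)\}$, and the second bound follows from the first up to the extra factor $h$, which is exactly the difference between the $h$ and the $h^2$ in the two right-hand sides.

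For the first bound I would decompose $\neg E_{t,h+1}(s,a)$ by the first step at which $\reca{t}$ deviates from $\optpolicy$. Writing $A_\ell$ for the event that the roll-out follows optimal actions for the first $\ell-1$ steps after $a$ and then picks a sub-optimal greedy action at step $\ell$, at a state $u$ with $m:=h-\ell+1$ steps-to-go, a union bound gives $\mathbb{P}\{\neg E_{t,h+1}(s,a)\}\le\sum_{\ell=1}^{h}\mathbb{P}\{A_\ell\}$, and $\mathbb{P}\{A_\ell\}\le\sum_u \mathbb{P}\{\text{reach }u\}\,\mathbb{P}\{\reca{t}\text{ mis-ranks at }u\}$, the sum running over states $u$ reachable from $s$ in $\ell$ optimal steps (the reaching probability uses the true transitions and is independent of the frozen estimates). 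A greedy mis-ranking at $u$ forces either $\widehat{Q}_m(u,\optpolicy(u,m))-Q_m(u,\optpolicy(u,m))\le -d/2$ or $\widehat{Q}_m(u,a')-Q_m(u,a')\ge d/2$ for one of the at most $K$ actions $a'$; I would split each such event on whether the node count $n_m(u,a')$ exceeds a threshold $\tau_u$, bounding the large-count part by the induction hypothesis (Lemma~\ref{thm:brush_qaccuracy} at horizon $m\le h$, i.e.\ by $c_m e^{-c_m'\tau_u}$) and the small-count part $\mathbb{P}\{n_m(u,a')<\tau_u\}$ by a Chernoff bound. The count is accumulated by the $(H-m+1)$-iterations passing through $u$; relating their number at the freeze time to $t$ through the round-robin schedule (the common root-to-$s$ visitation factor cancels) and using that uniform exploration applies any fixed action with probability $1/K$, I obtain $\mathbb{E}[n_m(u,a')]\gtrsim t\,\mathbb{P}\{\text{reach }u\}/K^{\ell}$, which makes $\tau_u\approx\tfrac12\mathbb{E}[n_m(u,a')]$ the natural threshold.

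The main obstacle, and the step I would treat most carefully, is summing over $u$ without paying the transition branching factor, since there can be exponentially many reachable states at level $\ell$. The key is that $\mathbb{P}\{\text{reach }u\}\ge p^{\ell}$ for every reachable $u$, so there are at most $p^{-\ell}$ of them; writing $x_u=\mathbb{P}\{\text{reach }u\}$, each contribution has the form $\sum_u x_u\,e^{-\beta x_u}$ with $\beta$ proportional to $t/K^{\ell}$ (carrying an extra $c_m'$ in the Lemma part), and under $x_u\ge p^{\ell}$, $\sum_u x_u\le 1$ this is maximised by the widest configuration of $p^{-\ell}$ equiprobable states, giving a bound of order $e^{-\beta p^{\ell}}$, i.e.\ an exponent proportional to $(p/K)^{\ell}$. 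Thus level $\ell$ contributes at rate $\tfrac{c_m'}{2}(p/K)^{\ell}$ from the Lemma part and a strictly faster rate from the Chernoff part. It then remains to verify that $\ell=1$ is the slowest level: from the explicit recursion for $c_m'$ one checks $c_m'(p/K)^{\ell}\ge c_h'(p/K)$ for all $m=h-\ell+1\le h$, since $c_m'/c_h'\ge(K/p)^{h-m}$ with surplus factor $16^{h-m}(h!/m!)^2 d^{-2(h-m)}\ge 1$, so every level is dominated by the rate $c_h'p/K$. Summing the at most $K$ action-events over the $h$ levels, folding the Chernoff coefficients and the worst-case $c_m\le c_h$ into the stated $(2+c_h)$, and absorbing the remaining numerical slack into the factor $6$ in the exponent, yields $\mathbb{P}\{\neg E_{t,h+1}(s,a)\}\le 2Kh(2+c_h)e^{-\frac{p c_h'}{6K}t}$, as required.
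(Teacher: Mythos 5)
Your proposal is correct, and for the first (main) inequality it is essentially the paper's own argument: a union bound over the deviation step, reduction of a mis-ranking at a visited node to $d/2$-deviation events over at most $K$ actions, a split on the descendant's sample count --- small-count part by concentration, large-count part by the inductive hypothesis (this combination is exactly the paper's Proposition~\ref{prop:ichs}, resting on Proposition~\ref{lemma:child_samples_bound}) --- and the level-domination step via the explicit recursions for $c_h,c_h'$, which you verify just as the paper does. Two remarks. First, the ``main obstacle'' you identify (summing over states $u$ at level $\ell$) is a non-issue in the paper's formulation: it applies $x_u\ge p^{\ell}$ \emph{inside} Proposition~\ref{prop:ichs} to obtain a bound uniform in $u$, and then simply marginalizes over the random visited state rather than summing; your $\sum_u x_u e^{-\beta x_u}\le e^{-\beta p^{\ell}}$ computation is the same fact applied one step later. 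Relatedly, the one place where you gloss over a real (if standard) detail is the small-count step: conditioned on $n_{h+1}(s,a)=t$, the descendant count is a Bernoulli sum with a \emph{negative-binomially distributed} number of terms, so the paper's Proposition~\ref{lemma:child_samples_bound} must first control that random number of interleaved iterations before applying Hoeffding --- a plain ``Chernoff bound'' on a fixed-length sum does not literally apply. Second, your derivation of the $\delta_{t,h+1}$ bound genuinely differs from the paper's: you couple the greedy and optimal roll-outs (shared transition randomness, identical paths on $E_{t,h+1}(s,a)$, gap at most $h$ off it) to get $\delta_{t,h+1}(s,a)\le h\,\mathbb{P}\left\{\neg E_{t,h+1}(s,a)\right\}$ directly, whereas the paper invokes the telescoping identity of Proposition~\ref{lemma:accum_rewards} to write $\delta_{t,h+1}(s,a)$ exactly as a sum of expected per-step regrets and then bounds each regret by $h$ times a mis-ranking probability. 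Your coupling is cleaner and self-contained, at the price of giving only an inequality; the paper's identity is exact and keeps the two bounds of the lemma structurally parallel. Both routes yield the same factor $h$ and hence the same $2Kh^2\left(2+c_h\right)$ constant.
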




Together with a modified version of the Hoeffding-Azuma bound in Lemma~\ref{lemma:azuma_mod}, the bounds established in Lemma~\ref{thm:probe_bounds} allow us to derive concentration bounds for $\widehat{Q}_{h+1}$ around $Q_{h+1}$ as in Lemma~\ref{thm:induction_step} below, which serves the key building block for proving
the induction hypothesis in the proof of Lemma~\ref{thm:brush_qaccuracy}.

\begin{lemma}\label{thm:induction_step}
Let $\BRUSH$ be called on a state $\initstate$ of an MDP $\mdp{\states,\actions,\transp,\reward}$ with rewards in $\left[0,1\right]$ and finite horizon $H$. For each state $\state$ reachable $\initstate$ with $h+1$ steps still to go, each action $a$ applicable,  and any $t > 0$, it holds that
\begin{equation}
\mathbb{P}\left\{\left\vert\widehat{Q}_{h+1}\left(s,a\right)-Q_{h+1}\left(s,a\right) \right\vert\geq \frac{d}{2}\;\middle|\;n_{h+1}\left(s,a\right)=t\right\}
\leq \left(3456 \cdot \frac{K^3(h+1)^3 c_h}{d^2 p^2c_h'^2} \right) e^{-\frac{d^2 pc_{h}'}{16(h+1)^2K}}.
\end{equation}
\end{lemma}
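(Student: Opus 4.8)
The plan is to recognize that, conditioned on $n_{h+1}(s,a)=t$, the estimate $\widehat{Q}_{h+1}(s,a)$ is exactly the empirical mean $\frac1t\sum_{i=1}^t X_i$ of the $t$ returns collected at $(s,a)$, where the $i$-th such return is distributed as $X_{i,h+1}(s,a)$ from Definition~\ref{def:Xevent} and has support $[0,h+1]$ (it is a reward accumulated over $h+1$ steps). The target of concentration is $\mu\triangleq Q_{h+1}(s,a)$, and the event $\{|\widehat{Q}_{h+1}(s,a)-Q_{h+1}(s,a)|\ge d/2\}$ is precisely $\{|\sum_{i=1}^t X_i-\mu t|\ge t\delta\}$ with $\delta=d/2$. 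Since the returns are neither unbiased nor independent (the obstacles (F1) and (F2) discussed before Lemma~\ref{lemma:azuma_mod}), I would not invoke Chernoff--Hoeffding directly but instead apply the modified Hoeffding--Azuma inequality of Lemma~\ref{lemma:azuma_mod} with support parameter $h+1$.

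The crux is verifying its two hypotheses. First, $\mathbb{E}[X_i]=Q_{h+1}(s,a)-\delta_{i,h+1}(s,a)$, and since Lemma~\ref{thm:probe_bounds} gives that $\delta_{i,h+1}(s,a)$ decays exponentially in $i$, the means $\mu_i$ converge to $\mu=Q_{h+1}(s,a)$, as required. Second, with respect to the natural filtration the greedy estimation policy $\reca{i}$ driving the $i$-th return is determined by the past, and on the event $E_{i,h+1}(s,a)$ all $h$ downstream choices delegated to $\reca{i}$ are optimal, so that $\mathbb{E}[X_i\mid\text{past}]=Q_{h+1}(s,a)=\mu$ holds exactly. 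Hence $\mathbb{P}\{\mathbb{E}[X_i\mid\text{past}]\neq\mu\}\le\mathbb{P}\{\neg E_{i,h+1}(s,a)\}$, which by Lemma~\ref{thm:probe_bounds} is at most $2Kh(2+c_h)e^{-\frac{pc_h'}{6K}i}$. This identifies the parameters of Lemma~\ref{lemma:azuma_mod} as $c_p=2Kh(2+c_h)$ and $c_e=\frac{pc_h'}{6K}$; one then checks $0<c_e\le1$ using $p\le1$, $K\ge1$, and the boundedness of $c_h'$ from Eq.~\ref{eq:c_recursive_carmel1}.

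With these parameters in hand, I would apply both tail bounds of Lemma~\ref{lemma:azuma_mod} (Eqs.~\ref{eq:azumainternal1}--\ref{eq:azumainternal2}) at $\delta=d/2\le(h+1)/2$ and add them. The exponent becomes $\frac{3\delta^2 c_e}{2(h+1)^2}=\frac{d^2pc_h'}{16K(h+1)^2}$, matching the claimed rate. The combined prefactor is $2\bigl[1+c_p\frac{2(h+1)^2}{\delta^2 c_e^2}\bigr]$; substituting $c_p,c_e,\delta$ produces a term proportional to $\frac{K^3h(h+1)^2(2+c_h)}{d^2p^2c_h'^2}$, which collects into the stated constant $3456\cdot\frac{K^3(h+1)^3 c_h}{d^2p^2c_h'^2}$ via the crude estimates $h\le h+1$ and $2+c_h\le 3c_h$ (valid since $c_h\ge1$ by Eq.~\ref{eq:c_recursive_carmel1}), absorbing the additive $1$ into the dominant term. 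I regard reproducing the exact numerical factor as routine bookkeeping rather than a substantive step.

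The main obstacle is the verification of the second hypothesis, that is, controlling the conditional mean $\mathbb{E}[X_i\mid\text{past}]$, which is exactly where the bias (F1) and the dependence (F2) are confronted. The whole argument hinges on the exact identity $\mathbb{E}[X_i\mid\text{past}]=Q_{h+1}(s,a)$ holding on $E_{i,h+1}(s,a)$, together with the exponential decay of $\mathbb{P}\{\neg E_{i,h+1}\}$ supplied by Lemma~\ref{thm:probe_bounds} (which itself presumes Lemma~\ref{thm:brush_qaccuracy} at horizon $h$). Getting the filtration right, so that $\reca{i}$ is measurable with respect to the conditioning and the martingale-difference structure underlying Lemma~\ref{lemma:azuma_mod} genuinely applies to the collected returns, is the delicate point; once it is secured, the remainder is the constant manipulation described above.
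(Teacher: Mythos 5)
Your proposal is correct and takes essentially the same route as the paper's own proof: it invokes Lemma~\ref{thm:probe_bounds} to identify the parameters $c_p=2Kh(2+c_h)$ and $c_e=\frac{pc_h'}{6K}$ for the conditional-mean hypothesis of Lemma~\ref{lemma:azuma_mod}, applies that lemma with $\delta=\frac{d}{2}$ and support $[0,h+1]$, and recovers the stated constant via $h\le h+1$ and $2+c_h\le 3c_h$, exactly as the paper does. If anything, your treatment is slightly more explicit than the paper's (the union bound over the two tails and the absorption of the additive constant), and the one informal step you share with the paper --- passing from the bound on $\mathbb{P}\{\neg E_{i,h+1}\}$ to a bound on $\mathbb{P}\{\mathbb{E}[X_i\mid\text{past}]\neq\mu\}$ --- is asserted at the same level of rigor in the published proof.
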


\section{Learning With Forgetting and $\BRUSH(\alpha)$}

When we consider the evolution of action value estimates in $\BRUSH$  over time (as well as in all other Monte-Carlo algorithms for online MDP planning), we can see that, 
in internal nodes these estimates are based on biased samples that stem from the  selection of non-optimal actions at descendant nodes. This bias tends to shrink as more samples are accumulated down the tree. Consequently, the estimates become more accurate, the probability of selecting an optimal action increases accordingly, and the bias of ancestor nodes shrinks in turn. An interesting question in this context is: shouldn't we weigh differently samples obtained at different stages of the sampling process? 
Intuition tells us that biased samples still provide us with valuable information, especially when they are all we have, but the value of this information decreases as we obtain more and more accurate samples. Hence, in principle, putting more weight on samples with smaller bias could increase the accuracy of our estimates. The key question, of course, is which of all possible weighting schemes are both reasonable to employ and preserve the exponential-rate reduction of expected simple regret.

Here we describe $\BRUSH\left(\alpha\right)$, an algorithm that generalizes $\BRUSH \equiv \BRUSH(1)$ by basing the estimates only on the $\alpha$ fraction of most recent samples. 
We discuss the value of this addition both from the perspective of the formal guarantees, as well as from the perspective of empirical prospects.
$\BRUSH(\alpha)$ differs from $\BRUSH$ in two points:

\begin{itemize}
\item In addition to the variables $\pcount(\state,\action)$ and $\qcount(\state,\action)$, each node/action pair $(\state,\action)$ in $\BRUSH(\alpha)$ is associated with a {\em list} $\slist(\state,\action)$ of rewards, collected at each of the $\pcount(\state,\action)$ samples that are responsible for the current estimate $\qcount(\state,\action)$.
\item When a sample $\probe = \tuple{s_{0},a_{1},s_{1},\dots,a_{k},s_{k}}$ is issued at iteration $n$, and  $\updatestat$ updates the variables at $x = (s_{\spfunc(n)-1},a_{\spfunc(n)})$, that update is done not according to Eq.~\ref{e:uctupdate} as in $\BRUSH$, but according to:
\begin{equation}
\label{e:brushalphaupdate}
\begin{split}
\pcount(x) & \leftarrow\;\;  \pcount(x) + 1,\\
\slist(x) [ \pcount(x) ] & \leftarrow\;\; \sum_{i=\spfunc(n)-1}^{k-1}\reward(s_{i},a_{i+1},s_{i+1}),\\
\qcount(x) & \leftarrow \;\;
\frac{1}{\lceil \alpha \cdot\pcount(x) \rceil} \sum_{i= \pcount(x)-\lceil \alpha \cdot\pcount(x)\rceil}^{\pcount(x)}{  \slist(x)[i]  }.
\end{split}
\end{equation}
\end{itemize}

\begin{theorem}\label{thm:brue_alpha}
Let $\BRUSH\left(\alpha\right)$ be called on a state $\initstate$ of an MDP $\mdp{\states,\actions,\transp,\reward}$ with rewards in $\left[0,1\right]$ and finite horizon $H$. There exist pairs of parameters $\paramgenone,\paramgentwo > 0$, dependent only on $\{\alpha, p,d,K,H\}$, such that,
after $n > H$ iterations of $\BRUSH$, we have simple regret bounded as
  \begin{equation}\label{eq:brushalpha_gen_reg}
    \expectation \regret{s,\reca{n}(\initstate,H)}{H}\leq 
    H \paramgenone\cdot e^{-\paramgentwo n},
\end{equation}
and choice-error probability bounded as
  \begin{equation}\label{eq:brushalpha_gen_prob}
    \simpleprob \left\{\reca{n}(\initstate,H)\neq\optpolicy(\initstate,H)\right\}\leq 
    \paramgenone\cdot e^{-\paramgentwo n}.
  \end{equation}
\end{theorem}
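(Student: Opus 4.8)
The plan is to reduce Theorem~\ref{thm:brue_alpha} to a forgetting-aware analogue of the central Lemma~\ref{thm:brush_qaccuracy}, and then recover the regret and error-probability bounds exactly as in the proof of Theorem~\ref{thm:brush_regret}. Note that $\BRUSH(\alpha)$ departs from $\BRUSH$ \emph{only} in the $\updatestat$ step (Eq.~\ref{e:brushalphaupdate}): the switching-point function $\spfunc$ of Eq.~\ref{eq:nh1}, the uniform exploration policy, the max-based estimation policy, and the single-pair locality of updates are all inherited. Consequently the combinatorial facts driving the final step of Theorem~\ref{thm:brush_regret} are unchanged: round-robin switching makes each depth an $\cH$-iteration once per $H$ iterations, and uniform exploration reaches a fixed state/action pair with a probability carrying the usual $p/K$ factors, so that $n_H(\initstate,a)=\Theta(n/(HK))$. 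Thus, once a forgetting analogue of Lemma~\ref{thm:brush_qaccuracy} is in hand, giving $\mathbb{P}\{|\widehat{Q}_H(\initstate,a)-Q_H(\initstate,a)|\geq d/2 \mid n_H(\initstate,a)=t\}\leq c_H e^{-c_H' t}$ with $c_H,c_H'$ now depending on $\alpha$, a union bound over the $K$ root actions together with $t=\Theta(n/(HK))$ yields Eqs.~\ref{eq:brushalpha_gen_reg}--\ref{eq:brushalpha_gen_prob}, the regret bound following from the error-probability bound after multiplying by the trivial range $H$ of the value.

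I would prove the forgetting analogue of Lemma~\ref{thm:brush_qaccuracy} by the same induction on $h$. Writing the reward collected at the $i$-th update of $(s,a)$ as $X_{i,h}(s,a)$ in the sense of Definition~\ref{def:Xevent}, Eq.~\ref{e:brushalphaupdate} makes $\widehat{Q}_h(s,a)$, when $n_h(s,a)=t$, the \emph{window average} $\frac{1}{\ell}\sum_{i=t-\ell+1}^{t} X_{i,h}(s,a)$ with $\ell=\lceil\alpha t\rceil$. For the base case $h=1$ the $X_{i,1}(s,a)$ are i.i.d.\ with mean $Q_1(s,a)$, so the window contains $\ell\geq\alpha t$ independent samples and Chernoff--Hoeffding gives $2e^{-\frac{d^2}{2}\alpha t}$; hence $c_1=1$ and $c_1'=\alpha d^2/2$, the mild $\alpha$-scaling being the price of forgetting at the leaf. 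For the inductive step I would reuse Lemma~\ref{thm:probe_bounds} verbatim with the $\alpha$-dependent $c_h,c_h'$ from the inductive hypothesis: that lemma bounds the per-sample bias $\delta_{t,h+1}(s,a)$ and the failure probability $\mathbb{P}\{\neg E_{t,h+1}(s,a)\}$ purely through sub-tree accuracy, and is therefore insensitive to how $(s,a)$ itself is updated.

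The one genuinely new ingredient is a \emph{windowed} version of the modified Hoeffding--Azuma inequality of Lemma~\ref{lemma:azuma_mod}: instead of $\sum_{i=1}^{t}X_i$ I must control $\sum_{i=m+1}^{m+\ell}X_i$ over the window starting at $m=t-\ell\approx(1-\alpha)t$. Re-running the martingale argument on this shifted index range, the martingale-difference part concentrates with the window length $\ell=\lceil\alpha t\rceil$, while the contribution of the ``bad'' events $\{\mathbb{E}[X_i\mid X_1,\dots,X_{i-1}]\neq\mu\}$ becomes $\sum_{i=m+1}^{m+\ell}c_p e^{-c_e i}\leq \frac{c_p}{c_e}e^{-c_e m}$, which is \emph{exponentially} small in $t$ at rate $c_e(1-\alpha)$ rather than the fixed $O(c_p/c_e)$ constant of the full-history case. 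This is exactly the mechanism by which forgetting discards the heavily biased early samples: the windowed bound takes the form $[1+\frac{c_p}{c_e}e^{-c_e m}\cdot(\text{poly})]\,e^{-\Theta(\delta^2 c_e \ell/h^2)}$, and feeding it into the analogue of Lemma~\ref{thm:induction_step} yields concentration at a rate set by balancing $\alpha$ (the effective window size) against $(1-\alpha)$ (the bias decay), from which the $\alpha$-dependent recursions for $c_{h+1},c_{h+1}'$ are read off; since the theorem only asserts \emph{existence} of $\paramgenone,\paramgentwo$, any fixed $\alpha\in(0,1]$ suffices, while optimizing the balance improves the exponential factor as advertised.

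The main obstacle, as in the $\BRUSH$ analysis, is the dependence structure: the summands $X_{i,h+1}(s,a)$ are neither independent nor unbiased (facts (F1)--(F2)), and the forgetting window is a \emph{data-dependent} suffix interacting with the adaptive estimation policy. The care needed is to verify that conditioning on $n_{h+1}(s,a)=t$ and restricting to the last $\ell$ rewards preserves the filtration structure the windowed martingale argument requires --- i.e.\ that on the window the event $\{\mathbb{E}[X_{i,h+1}\mid \mathcal{F}_{i-1}]=Q_{h+1}(s,a)\}$ still coincides with $E_{i,h+1}(s,a)$ of Definition~\ref{def:Xevent} --- and that the windowed bad-event estimate composes correctly with Lemma~\ref{thm:probe_bounds}. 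Once these measurability points are checked and the windowed inequality is established, the induction closes and Theorem~\ref{thm:brue_alpha} follows; the explicit $\alpha$-dependent constants can be extracted by the same bookkeeping that produced Eqs.~\ref{eq:cgenone}--\ref{eq:cgentwo}, now carrying the additional $\alpha$ and $(1-\alpha)$ factors.
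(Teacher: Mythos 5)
Your proposal matches the paper's proof essentially step for step: the paper likewise reduces Theorem~\ref{thm:brue_alpha} to a forgetting analogue of the accuracy lemma (its Lemma~\ref{thm:brushalpha_qaccuracy}), proves it by the same induction while reusing Lemma~\ref{thm:probe_bounds} unchanged, and introduces exactly your windowed martingale bound as the single new ingredient (Lemma~\ref{lemma:azuma_mod_alpha}), which produces the same $\alpha$-versus-$(1-\alpha)$ tradeoff in the recursions for $c_{h+1},c_{h+1}'$ and then yields the theorem by the same root-level argument as for Theorem~\ref{thm:brush_regret}. The only cosmetic deviations are at the leaves, where the paper notes there is no bias and so keeps the $\BRUSH$ constants instead of paying your factor $\alpha$ in $c_1'$, and in the precise decay rate of the bad-event term, where the paper's bookkeeping gives $e^{-c_e(1-\alpha)^2 t}$ rather than your heuristic $e^{-c_e(1-\alpha)t}$; neither affects the validity or the structure of the argument.
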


The proof for Theorem~\ref{thm:brue_alpha}  follows from Lemma~\ref{thm:brushalpha_qaccuracy} below similarly to the way Theorem~\ref{thm:brush_regret} follows from Lemma~\ref{thm:brush_qaccuracy}.
Note that in Theorem~\ref{thm:brue_alpha}
we do not provide explicit expressions for the constants $c$ and $c'$ as we did in Theorem~\ref{thm:brush_regret} (for $\alpha=1$). This is because the expressions that can be extracted from the recursive formulas in this case do not bring much insight. However, we discuss the potential benefits of choosing $\alpha<1$ in the context of our proof of Theorem~\ref{thm:brue_alpha}.

\begin{lemma}\label{thm:brushalpha_qaccuracy}
Let $\BRUSH(\alpha)$ be called on a state $\initstate$ of an MDP $\mdp{\states,\actions,\transp,\reward}$ with rewards in $\left[0,1\right]$ and finite horizon $H$. For each $h\in\range{H}$, 
there exist parameters $c_h,c_h'>0$,  dependent only on $\{\alpha, p,d,K,H\}$, such that,  for each state $\state$ reachable from $\initstate$ in $H-h$ steps and any $t > 0$, it holds that
\begin{equation}\label{eq:assmpt_eq_new}
\begin{split}
    \mathbb{P}\left\{\widehat{Q}_h\left(s,a\right)-Q_h\left(s,a\right)\geq {d\over 2}\;\middle|\;n_h\left(s,a\right)=t\right\} &\leq c_he^{-c_h't}, \\
    \mathbb{P}\left\{\widehat{Q}_h\left(s,a\right)-Q_h\left(s,a\right)\leq -{d\over 2}\;\middle|\;n_h\left(s,a\right)=t\right\} &\leq c_he^{-c_h't}.
\end{split}
\end{equation}
\end{lemma}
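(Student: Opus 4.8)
The plan is to prove Lemma~\ref{thm:brushalpha_qaccuracy} by induction on $h$, following the skeleton of the proof of Lemma~\ref{thm:brush_qaccuracy} and isolating the single place where the two algorithms differ, namely the update rule~(\ref{e:brushalphaupdate}): where $\BRUSH$ aggregates a running average over all $t$ samples, $\BRUSH(\alpha)$ averages only the $\lceil\alpha t\rceil$ most recent ones. For the base case $h=1$ the sub-tree below $(s,a)$ is empty, so the windowed estimate $\widehat{Q}_1(s,a)$ is still an average of independent, unbiased single-step rewards; the only change from Eq.~\ref{eq:h1_bound} is that the average is taken over $\lceil\alpha t\rceil$ rather than $t$ samples, so Chernoff--Hoeffding gives a one-sided bound $e^{-(d^2/2)\lceil\alpha t\rceil}\le e^{-(\alpha d^2/2)t}$, establishing the claim with $c_1=1$ and $c_1'=\alpha d^2/2$.

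For the inductive step, assume the two inequalities of Eq.~\ref{eq:assmpt_eq_new} hold for horizon $h$. I would first note that Lemma~\ref{thm:probe_bounds} transfers unchanged: its proof uses only the concentration of $\widehat{Q}_h$ around $Q_h$ and the fact that $\brushpolicy{t}$ breaks ties among $\widehat{Q}_h$-maximizers, neither of which refers to how $\widehat{Q}_h$ was aggregated. Hence, writing $c_h,c_h'$ for the $\BRUSH(\alpha)$ constants, for every index $i$ both $\mathbb{P}\{\neg E_{i,h+1}(s,a)\}$ and the bias $\delta_{i,h+1}(s,a)$ of Definition~\ref{def:Xevent} are at most $2Kh^2(2+c_h)e^{-\frac{pc_h'}{6K}i}$, exactly as before.

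The crux, and the only genuine departure from Lemma~\ref{thm:induction_step}, is the windowed average. Conditioning on $n_{h+1}(s,a)=t$, the estimate is $\widehat{Q}_{h+1}(s,a)=\frac{1}{m}\sum_{i=t-m+1}^{t}X_{i,h+1}(s,a)$ with $m=\lceil\alpha t\rceil$, so every contributing index satisfies $i\ge t-m+1\ge(1-\alpha)t$. I would apply Lemma~\ref{lemma:azuma_mod} to this window, re-indexed by $j=i-(t-m)$ relative to the natural sampling filtration. The probe bound of the previous paragraph, evaluated only at indices $i\ge(1-\alpha)t$, furnishes an effective $c_p$ of order $e^{-\frac{pc_h'}{6K}(1-\alpha)t}$, which is exponentially small in $t$; taking $\delta=d/2$ and $c_e=pc_h'/(6K)$ in Lemma~\ref{lemma:azuma_mod} then drives its bracketed prefactor to $1$ and leaves an exponent $\frac{3(d/2)^2c_e}{2(h+1)^2}\,m\ge\frac{d^2 pc_h'}{16(h+1)^2K}\,\alpha t$. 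This yields $c_{h+1}e^{-c_{h+1}'t}$ for both tails, with $c_{h+1}'$ of order $\alpha\cdot\frac{d^2 pc_h'}{(h+1)^2K}$ and $c_{h+1}$ absorbing the nearly trivial prefactor. The structural contrast with $\BRUSH$ ($\alpha=1$, Lemma~\ref{thm:induction_step}) is that forgetting cleans the prefactor to $1$ but averages fewer samples; balancing these opposing effects — together with the alternative $\beta$-parametrized bound noted immediately after Lemma~\ref{lemma:azuma_mod} — is precisely where the net rate is tuned, which is why no closed form for the constants is given.

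The hard part will be the filtration bookkeeping induced by the moving window. The probe bound is a full-history statement, whereas Lemma~\ref{lemma:azuma_mod} conditions on the prefix of the sequence to which it is applied; re-indexing the window as a fresh sequence and invoking the lemma as a black box would silently marginalize over the heavily biased pre-window history and could reintroduce exactly the bias that forgetting removed. The safe route is to carry the full filtration $\mathcal{F}_{i-1}$ through the estimate $\mathbb{E}[X_{i,h+1}\mid\mathcal{F}_{i-1}]$, verifying that the window samples remain a martingale-difference segment with respect to it and that conditioning on the realized count $n_{h+1}(s,a)=t$ — a quantity fixed only once the window is determined — does not distort the increments. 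This is the same conditioning care already needed in Lemma~\ref{thm:brush_qaccuracy}, now compounded by the dependence of the averaging window on $t$.
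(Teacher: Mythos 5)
Your proposal follows the paper's own route: induction on $h$, with the base case at the leaves, Lemma~\ref{thm:probe_bounds} carried over to $\BRUSH(\alpha)$, and a windowed Hoeffding--Azuma argument for the inductive step; your final constants (rate of order $\alpha\, d^2 p c_h'/(h^2K)$, prefactor absorbing a $(1-\alpha)^{-1}$ term) land essentially where the paper's do, namely $c_{h+1}'=\min\bigl\{\tfrac{3d^2pc_h'\alpha}{48Kh^2},\tfrac{pc_h'(1-\alpha)^2}{6K}\bigr\}$ and $c_{h+1}=1+\tfrac{12K^2h(2+c_h)}{pc_h'(1-\alpha)}$. However, there is a genuine gap at the crux. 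The one new technical ingredient this lemma needs is a concentration inequality for the partial sum $\sum_{i=t-\lceil\alpha t\rceil}^{t}X_i$ when the bias condition~(\ref{e:azuma1}) is stated along the \emph{full} sample index, and you never actually establish it. Your primary plan --- re-index the window as a fresh sequence and invoke Lemma~\ref{lemma:azuma_mod} as a black box with an ``effective $c_p$'' of order $e^{-c_e(1-\alpha)t}$ --- fails for exactly the reason you concede: Lemma~\ref{lemma:azuma_mod} conditions on the prefix of the sequence it is applied to, and after marginalizing over the pre-window history the quantity $\mathbb{E}[Y_j\mid Y_1,\ldots,Y_{j-1}]$ is a smeared average over biased and unbiased pre-window branches, so the event $\{\mathbb{E}[Y_j\mid Y_1,\ldots,Y_{j-1}]\neq\mu\}$ can have probability far exceeding $c_pe^{-c_ej}$ (in general it need not be small at all). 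Your ``safe route'' --- carry the full filtration --- is not a repair of that black-box application but a different statement requiring its own proof: one must redo the moment-generating-function recursion of Lemma~\ref{lemma:azuma_mod} over the window indices with full-history conditioning, tracking the crude $e^{\lambda h\lceil\alpha t\rceil}$ bound on the bad events and summing the bias terms over the window. This is precisely the paper's Lemma~\ref{lemma:azuma_mod_alpha} (modified Hoeffding--Azuma for partial sums), which yields the prefactor $\bigl[1+\tfrac{c_p}{c_e(1-\alpha)}e^{-c_e(1-\alpha)^2t}\bigr]$ and exponent $\tfrac{3\delta^2c_e}{2h^2}\alpha t$; without it your induction step has no valid concentration tool to apply.

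Two smaller points. First, the transfer of Lemma~\ref{thm:probe_bounds} is not entirely free: its proof (through Proposition~\ref{prop:ichs}) needs the constants to satisfy $c_{h}'\leq p^{H-h}/K^{H-h}$ and the monotonicity relations $c_i>c_{i-1}$, $c_i'<\tfrac{p}{K}c_{i-1}'$, which for $\BRUSH(\alpha)$ the paper arranges by taking $c_1'=p^{H}/K^{H}$ at the leaves (observing that forgetting is pointless there, since leaf estimates are unbiased); your base-case choice $c_1'=\alpha d^2/2$ does not automatically meet this side condition, so you should take a minimum with $p^H/K^H$ or verify the condition along the recursion. Second, your remark that the bracketed prefactor is ``driven to $1$'' is harmless but imprecise: the prefactor contains a $t$-dependent exponential, and the clean inductive form $c_{h+1}e^{-c_{h+1}'t}$ is obtained either by bounding that exponential by $1$ (giving the paper's constants) or by folding its rate into the minimum defining $c_{h+1}'$, which is what the paper's displayed choice does.
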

The proof for Lemma~\ref{thm:brushalpha_qaccuracy} is by induction, following the same line of the proof for Lemma~\ref{thm:brush_qaccuracy}. In fact, it deviates  from the latter only in
the application of the modified Hoeffding-Azuma inequality, which has to be further modified to capture the partial sums as in $\BRUSH(\alpha)$.

\begin{lemma}[Modified Hoeffding-Azuma inequality for partial sums]\label{lemma:azuma_mod_alpha}
Let $\{X_i\}_{i=1}^\infty$ be a sequence of random variables with support $[0,h]$ and $\mu_i\triangleq\mathbb{E}X_i$. If $\lim_{i\rightarrow\infty}{\mu_{i}}=\mu$, and
%
\begin{equation}
\label{eq:azuma1_alpha}
\mathbb{P}\left\{\mathbb{E}\left[X_i\;\middle|\;X_1,\ldots,X_{i-1}\right] \neq \mu\right\}\leq c_p e^{-c_ei},
\end{equation}
for some $0 < c_{p}$ and $0 < c_{e} \leq 1$, then, for all $0<\delta\leq\frac{h}{2}$, it holds that
\begin{eqnarray}
\mathbb{P}\left\{\sum_{i=t-\lceil\alpha t\rceil}^tX_i\geq\mu t+t\delta\right\}&\leq&  \left[1 +
\frac{c_p }{c_e(1-\alpha)} e^{-c_e(1-\alpha)^2t}\right]e^{-\frac{3\delta^2 c_e}{2h^2}\alpha t},\\
\mathbb{P}\left\{\sum_{i=t-\lceil\alpha t\rceil}^tX_i\leq\mu t-t\delta\right\}&\leq& \left[1 +
\frac{c_p }{c_e(1-\alpha)} e^{-c_e(1-\alpha)^2t}\right]e^{-\frac{3\delta^2 c_e}{2h^2}\alpha t}.
\end{eqnarray}
\end{lemma}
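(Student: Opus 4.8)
The plan is to prove Lemma~\ref{lemma:azuma_mod_alpha} by reducing it to the machinery behind the full-sum version, Lemma~\ref{lemma:azuma_mod}, and re-running that argument over the averaging window $I=\{t-\lceil\alpha t\rceil,\dots,t\}$. First I would recall the engine of Lemma~\ref{lemma:azuma_mod}. Writing $C_i=\mathbb{E}[X_i\mid X_1,\dots,X_{i-1}]$ and $M_i=X_i-C_i$, the sequence $\{M_i\}$ is a martingale difference sequence whose increments are conditionally confined to an interval of length $h$ (since $X_i\in[0,h]$ and $C_i$ is fixed given the past), while hypothesis~(\ref{eq:azuma1_alpha}) controls the ``bias'' variables through $\mathbb{P}\{C_i\neq\mu\}\leq c_p e^{-c_e i}$. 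The decomposition $\sum_{i=1}^t X_i=\sum_{i=1}^t M_i+\sum_{i=1}^t(C_i-\mu)+\mu t$ separates a genuine martingale fluctuation from a bias term $\sum_{i=1}^t(C_i-\mu)$, which vanishes on every index with $C_i=\mu$ and is at most $h$ in absolute value otherwise. The full-sum proof balances these two contributions with a cutoff index $m$: indices below $m$ are bounded crudely (each term by $h$), and on the complementary event that no bias event occurs beyond $m$ — of probability at least $1-\tfrac{c_p}{c_e}e^{-c_e m}$ by summing the geometric tail of~(\ref{eq:azuma1_alpha}) — the remaining block is a bounded martingale to which Hoeffding--Azuma applies. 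Choosing $m\propto\delta^2 t/h^2$ makes both contributions exponentially small in $t$, with the binding (slower) rate coming from the bias term, yielding the exponent $\tfrac{3\delta^2 c_e}{2h^2}t$; the alternative allocation parametrized by $\beta$ is precisely the trade-off recorded in Discussion~\ref{dsc:estimate_bound}.

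The only genuinely new ingredient for the partial-sum statement is that the sum ranges over the late window $I$ rather than over $\{1,\dots,t\}$. I would re-index the window by $j=i-(t-\lceil\alpha t\rceil)$, so that $i=(1-\alpha)t+j$ with $j$ running over roughly $\alpha t$ consecutive indices. Under this re-indexing the martingale-difference structure of $\{M_i\}_{i\in I}$ is preserved, while the bias-event bound transforms as $\mathbb{P}\{C_i\neq\mu\}\leq c_p e^{-c_e i}=\big(c_p e^{-c_e(1-\alpha)t}\big)e^{-c_e j}$. In other words, the window satisfies the \emph{same} hypothesis as in Lemma~\ref{lemma:azuma_mod}, but with the constant $c_p$ damped by the factor $e^{-c_e(1-\alpha)t}$ that reflects the late start of the averaging window. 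Applying the full-sum argument to the $\approx\alpha t$ re-indexed variables then produces a main exponent governed by the window length, $\tfrac{3\delta^2 c_e}{2h^2}\alpha t$, exactly as claimed, while the damped constant $c_p e^{-c_e(1-\alpha)t}$ is what turns the otherwise $t$-independent prefactor of Lemma~\ref{lemma:azuma_mod} into the exponentially vanishing correction $1+\tfrac{c_p}{c_e(1-\alpha)}e^{-c_e(1-\alpha)^2 t}$; the precise power of $(1-\alpha)$ and the polynomial factor follow from the specific cutoff placement inside the window and the geometric-tail bookkeeping. The lower-tail inequality is obtained symmetrically, replacing $M_i$ by $-M_i$.

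The main obstacle is the one that already makes Lemma~\ref{lemma:azuma_mod} delicate: $\{X_i\}$ is \emph{not} a martingale difference sequence, and the conditional means only converge to $\mu$ rather than equalling it, so one must cleanly separate the martingale fluctuation from the conditional-mean bias and then tune the cutoff $m$ so that the geometric bias-tail and the Hoeffding--Azuma deviation are simultaneously exponential in $t$, with the bias term remaining the binding constraint so the stated $c_e$-dependent exponent survives. The difficulty specific to the partial-sum version is the bookkeeping of the window's position and length: I must verify that re-indexing does not disturb the conditional boundedness of the increments, that the damping factor is extracted uniformly over the window, and that the internal cutoff is placed so that the main exponent scales with $\alpha t$ (not $t$) while the prefactor decays at the stated rate. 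Once these are in place the reduction to Lemma~\ref{lemma:azuma_mod} is mechanical, and Lemma~\ref{thm:brushalpha_qaccuracy} then follows by substituting these partial-sum tail bounds into the induction at exactly the point where Lemma~\ref{lemma:azuma_mod} was used in the proof of Lemma~\ref{thm:brush_qaccuracy}.
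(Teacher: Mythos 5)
Your window strategy---view the late block $\{t-\lceil\alpha t\rceil,\dots,t\}$ as a sequence whose bias probabilities are uniformly damped because every index in it is at least $(1-\alpha)t$, and then re-run the full-sum machinery over that block---is indeed the spirit of the paper's proof. The gap is in the engine you propose to re-run: it is not the argument that proves Lemma~\ref{lemma:azuma_mod}, and it cannot deliver the constants the lemma asserts. The paper's proof of Lemma~\ref{lemma:azuma_mod} is a moment-generating-function recursion: splitting on the past-measurable event $E_i=\{C_i=\mu\}$ at the last index only, it derives $f(t)\le e^{\lambda^2h^2/8}f(t-1)+c_pe^{t(\lambda h-c_e)}$ for $f(t)=\mathbb{E}\bigl[e^{\lambda\sum_{i\le t}(\mu-X_i)}\bigr]$, unrolls this recursion, and applies Markov's inequality with $\lambda=2\delta c_e/h^2$. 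The crucial feature is that $\lambda h\le c_e$ (because $\delta\le h/2$), so each bias term $c_pe^{\tau(\lambda h-c_e)}$ is summable over $\tau$: the bias inflates only the multiplicative prefactor and never touches the exponent, which comes entirely from the Markov step, $-\lambda\delta+\lambda^2h^2/8\le-\tfrac{3\delta^2c_e}{2h^2}$. Your cutoff scheme instead pays the bias \emph{additively}, through $\mathbb{P}\{\exists i>m:\ C_i\neq\mu\}\le\tfrac{c_p}{c_e}e^{-c_em}$, so the final exponent is the minimum of $c_em/t$ and an Azuma exponent, with $m$ capped by the requirement that the crude bound $mh$ stay below a fraction of the deviation budget $t\delta$. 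That cap makes the stated constant unreachable on part of the allowed parameter range: for $\delta=h/2$ and $c_e=1$ the lemma claims exponent $\tfrac{3\delta^2c_e}{2h^2}=\tfrac38$ per sample, while optimizing your split, $\min\bigl(\theta c_e\delta/h,\ 2(1-\theta)^2\delta^2/h^2\bigr)$ over $\theta$, yields at most about $0.19$. Your scheme (and your claim that ``the bias term remains binding'') works only when $c_e$ is small, whereas the lemma is stated for all $0<c_e\le 1$; so the plan proves a qualitatively similar but strictly weaker inequality. A secondary rigor point: invoking Hoeffding--Azuma ``on the complementary event that no bias event occurs beyond $m$'' is not legitimate as written, since conditioning on that trajectory-dependent event destroys the martingale-difference property; one must instead work with $M_i\mathbf{1}_{E_i}$, which is a martingale difference because $E_i$ is past-measurable.

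The reduction step itself also has a hole if it is meant as a black-box application of Lemma~\ref{lemma:azuma_mod} to the re-indexed variables $X'_j=X_{t-\lceil\alpha t\rceil+j}$. The hypothesis~(\ref{eq:azuma1_alpha}) conditions each variable on \emph{its own} past, so you would need $\mathbb{P}\{\mathbb{E}[X'_j\mid X'_1,\dots,X'_{j-1}]\neq\mu\}\le c'_pe^{-c_ej}$ for a damped $c'_p$. This does not follow from the original hypothesis: by the tower property, $\mathbb{E}[X'_j\mid X'_1,\dots,X'_{j-1}]=\mathbb{E}[C_i\mid X'_1,\dots,X'_{j-1}]$ averages the rare event $\{C_i\neq\mu\}$ over the unobserved early history, and this smearing generically makes the window-conditional mean differ from $\mu$ \emph{everywhere} (by an exponentially small amount---but the hypothesis is about exact equality, so the relevant probability can be $1$ and no damped $c'_p$ exists). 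The paper avoids both problems by not re-indexing at all: it reruns the MGF recursion over the window while keeping the conditioning on the full history $X_1,\dots,X_{i-1}$, so the bias probabilities enter at their original indices $\tau\ge(1-\alpha)t$, and the geometric sum $c_p\sum_{\tau\ge(1-\alpha)t}e^{-c_e(1-\alpha)\tau}\le\tfrac{c_p}{c_e(1-\alpha)}e^{-c_e(1-\alpha)^2t}$ produces exactly the stated prefactor. Note that your reduction, were it valid, would give damping $e^{-c_e(1-\alpha)t}$ rather than the lemma's $e^{-c_e(1-\alpha)^2t}$---a further sign that the stated bound is not obtained by the black-box route you describe.
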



Considering the benefits of ``sample forgetting" as in $\BRUSH(\alpha)$,
let us compare the bound in Lemma~\ref{lemma:azuma_mod_alpha} to the bound
\[e^{-\frac{3\delta^2 c_e\beta}{2h^2}t }\left[ 1+ \frac{c_p}{c_e\left(1-\beta\right)} e^{-\frac{c_e\left(1-\beta\right)}{2h^2}}\right],\]
provided by Lemma~\ref{lemma:azuma_mod} for $\BRUSH$, that is, when {\em all} accumulated  samples are averaged. While both bounds are very similar, the exponent of the second exponential term  is multiplied  for $\BRUSH(\alpha<1)$ by $\left(1-\alpha\right) t$.
This poses a tradeoff: Decreasing $\alpha$ reduces the sampling bias, and thus decreases  the term $\frac{c_p}{c_e}$, but increases the other exponential term with no leading constant.  Obviously, since there is no bias at leaf nodes, it makes no sense to set $\alpha<1$ there. However, as we go further up the tree, the bias tends to grow ($\frac{c_p}{c_e}>>1$), but we also expect to have more samples ($t$ is larger). Thus, from the perspective of formal guarantees, it seems appealing to choose  smaller values of $\alpha$. Nevertheless, we do not try to optimize here the value of $\alpha$: First,  optimizing bounds doesn't necessarily lead to optimized  empirical accuracy. Second, the underlying optimization would have to be specific to each horizon $h$ and each sample  size $t$ (which is obviously out of the question), and thus anyway we would have to consider only some rough approximations to this optimization problem.
Finally,  biased samples in practice might be more valuable than what the theory suggests, as long as all actions at the same state/steps-to-go decision point experience a similar bias.


\section{Experimental Evaluation}
We have evaluated $\BRUSH$ empirically on the MDP sailing domain~\cite{PeretG:ecai04} that was used in previous works for evaluating MC planning algorithms~\cite{PeretG:ecai04,uct,tolpin:shimony:aaai12}, as well as on random game trees used in the original  empirical evaluation of $\UCT$~\cite{uct}.

In the sailing domain, a sailboat navigates to a destination on an 8-connected grid representing a marine environment, under fluctuating wind conditions. The goal is to reach the destination as quickly as possible, by choosing at each grid location a neighbor location to move to. The duration of each such move depends on the direction of the move ({\em ceteris paribus}, diagonal moves take $\sqrt{2}$ more time than straight moves), the direction of the wind relative to the sailing direction (the sailboat cannot sail against the wind and moves fastest with a tail wind), and the tack. The direction of the wind changes over time, but its strength is assumed to be fixed. This sailing problem can be formulated as a goal-driven MDP over finite state space and a finite set of actions, with each state capturing the position of the sailboat, wind direction, and tack.

In a goal-driven MDP, the lengths of the paths to a terminal state are not necessarily bounded, and thus it is not entirely clear to what depth $\BRUSH$ shall construct its  tree.
In the sailing domain, we chose $H$ to be $4\times n$, where $n$ is the grid-size of the  problem instance, as it is unlikely that the optimal path between any two locations on the grid will be larger than a complete encircling of the considered area. We note, however, that the recommendation-oriented samples $\bar{\rho}$ always end at a terminal state, similar to the rollouts issued by $\UCT$ and $\GCT$.

\begin{figure*}[t]
\begin{center}
\begin{tabular}{cc}
\begin{minipage}{0.47\textwidth}
\includegraphics[width=\textwidth]{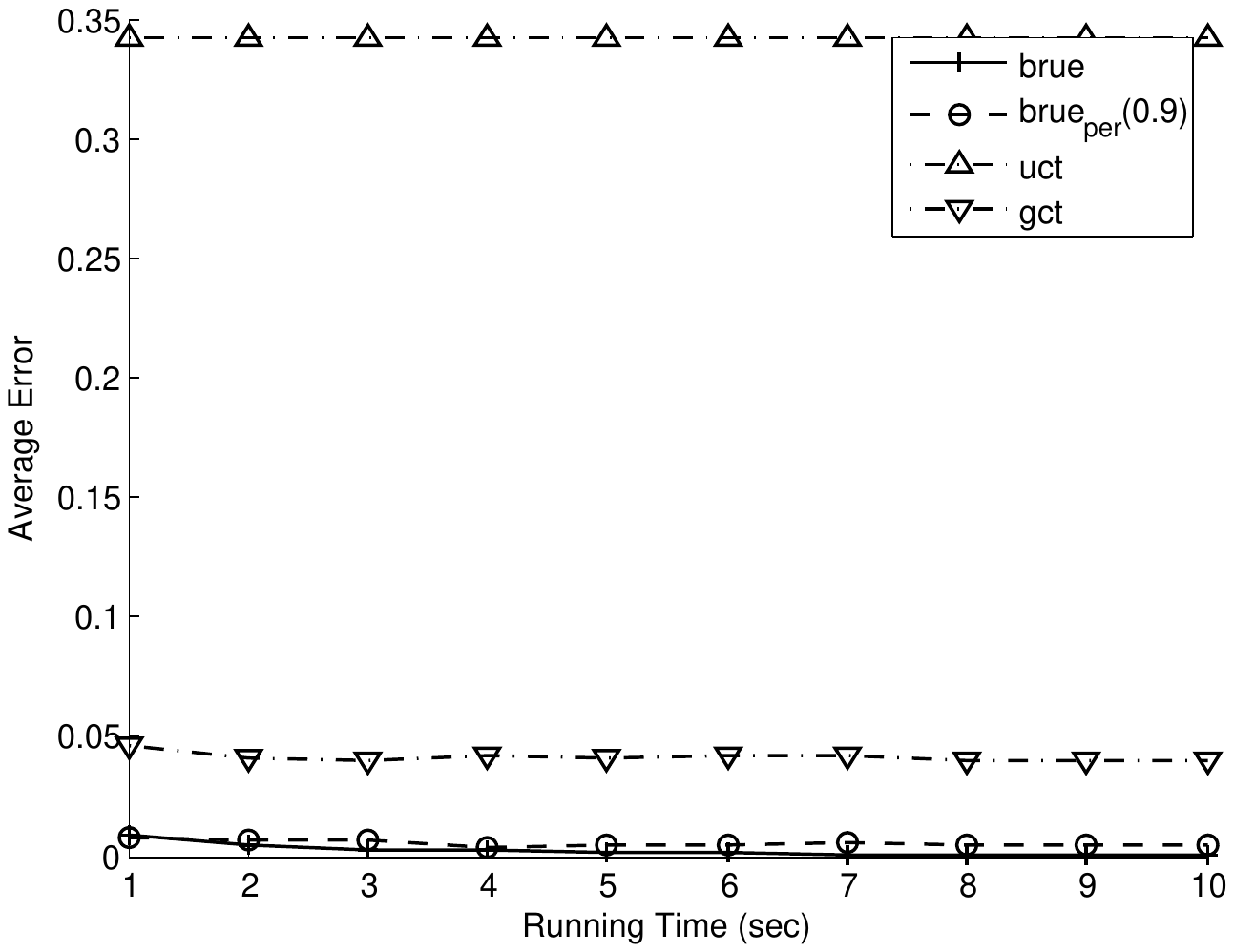}
\end{minipage}
&
\begin{minipage}{0.47\textwidth}
\includegraphics[width=\textwidth]{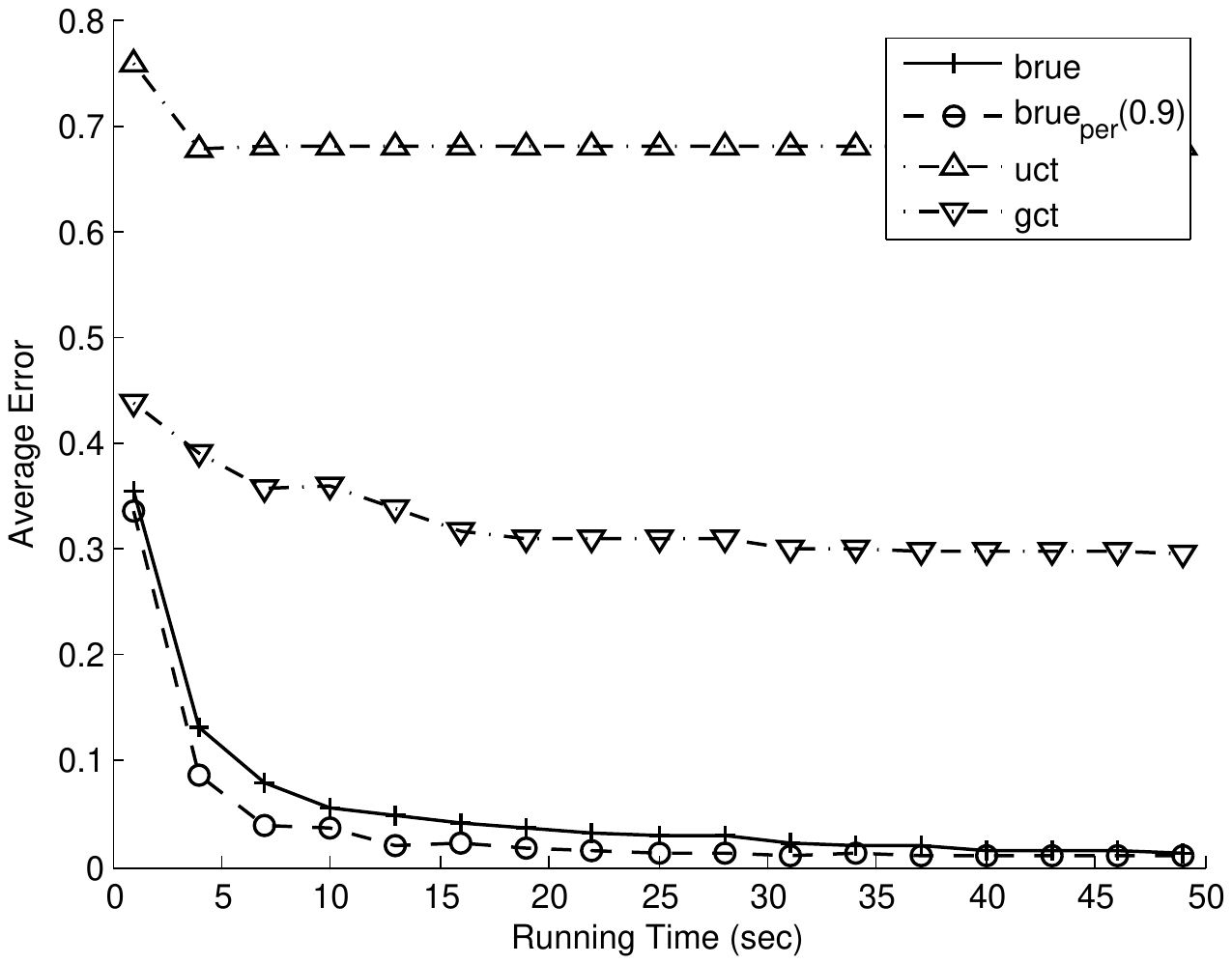}
\end{minipage}\\
{ $5\times 5$} & { $10\times 10$}\\
\begin{minipage}{0.47\textwidth}
\includegraphics[width=\textwidth]{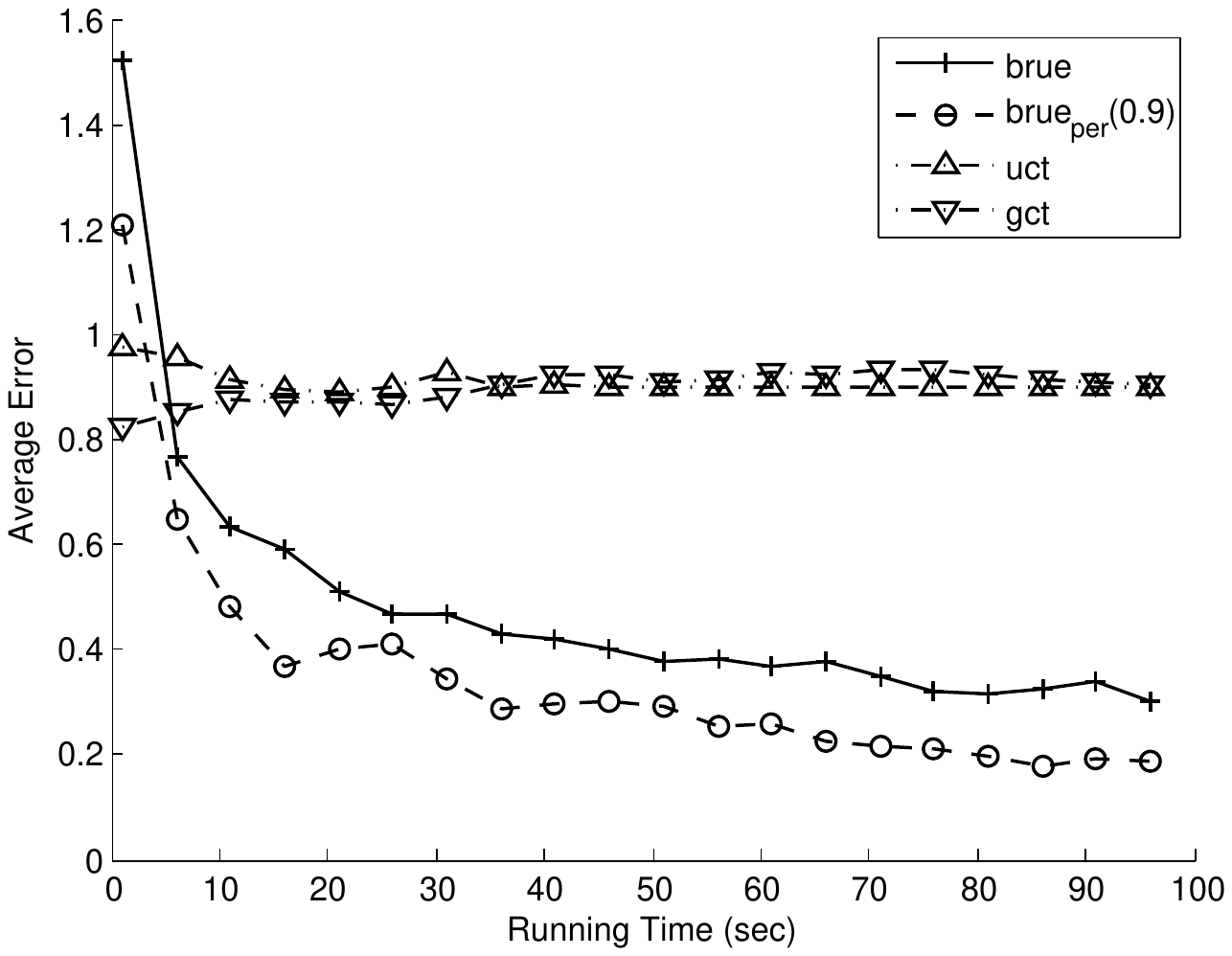}
\end{minipage}
&
\begin{minipage}{0.47\textwidth}
\includegraphics[width=\textwidth]{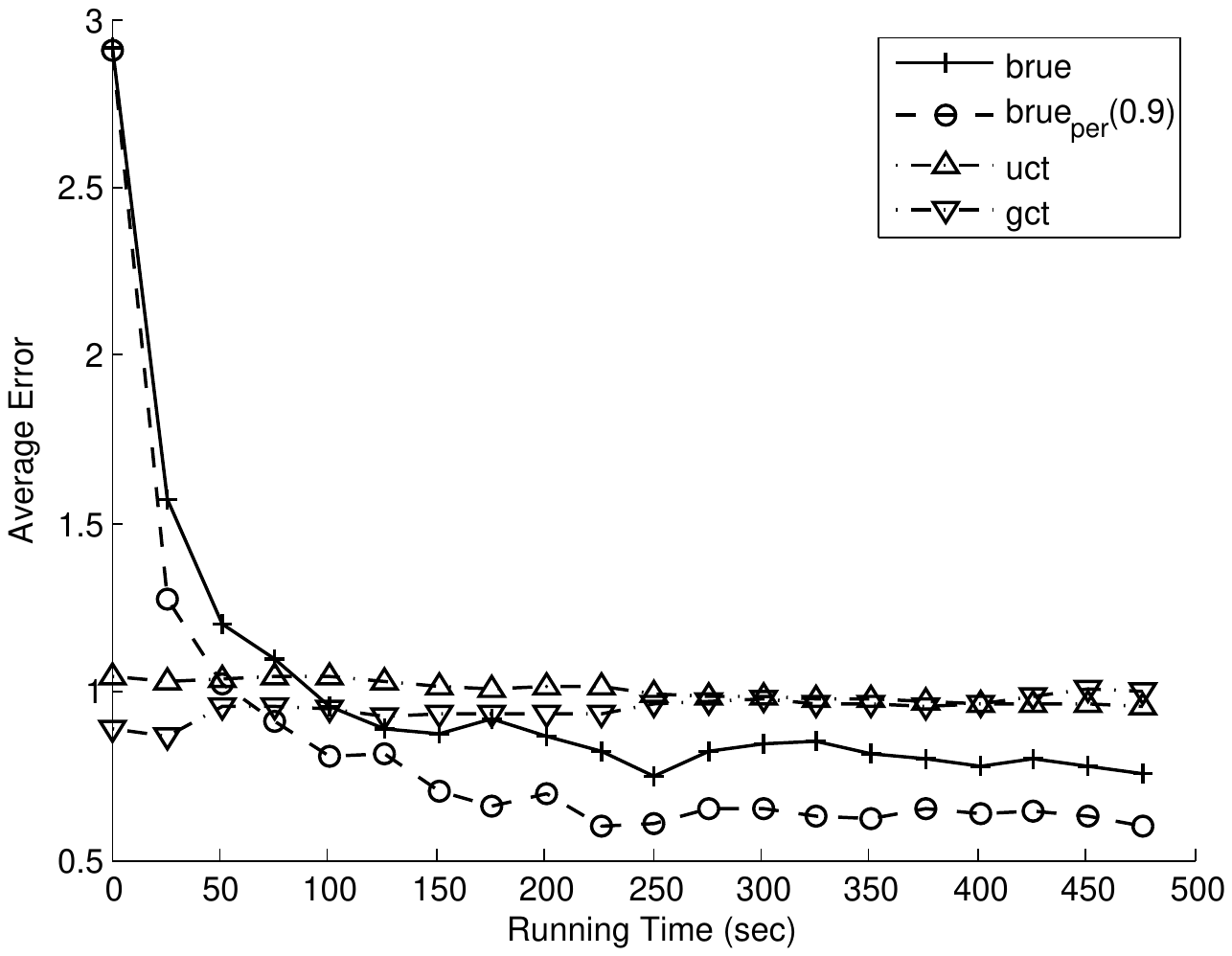}
\end{minipage}\\
{ $20\times 20$} &
{ $40\times 40$}
\end{tabular}
\end{center}
\caption{\label{fig:sailing} Empirical performance of $\BRUSH$, $\BRUSH(0.9)$, $\UCT$, and $\GCT$ (denoted as {\sf GCT}, for short) in terms of the average error on sailing domain problems on $n\times n$ grids with $n \in \{5,10,20,40\}$.}
\end{figure*}

\begin{figure*}[t]
\begin{center}
\begin{tabular}{cc}
\begin{minipage}{0.47\textwidth}
\includegraphics[width=\textwidth]{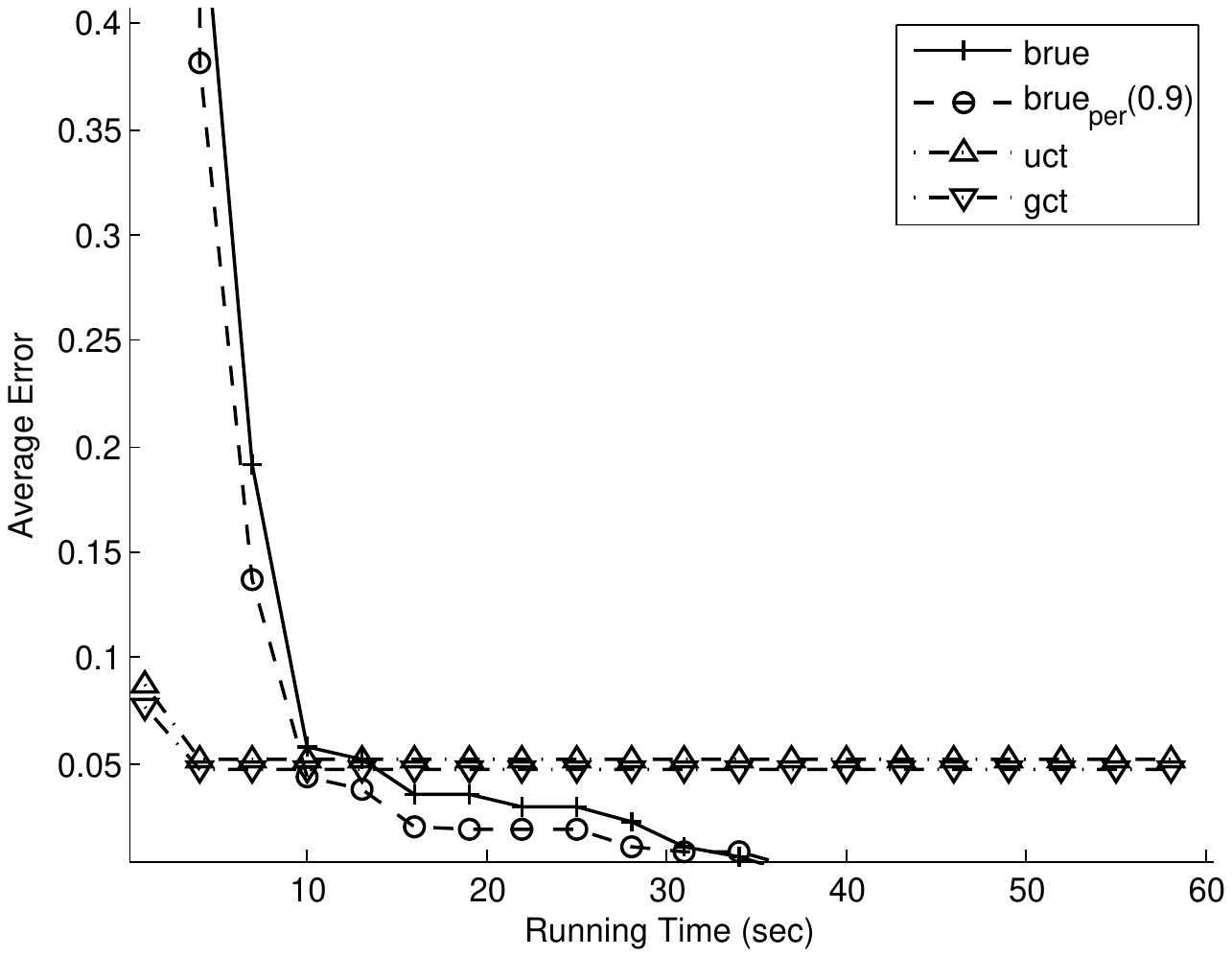}
\end{minipage}
&
\begin{minipage}{0.47\textwidth}
\includegraphics[width=\textwidth]{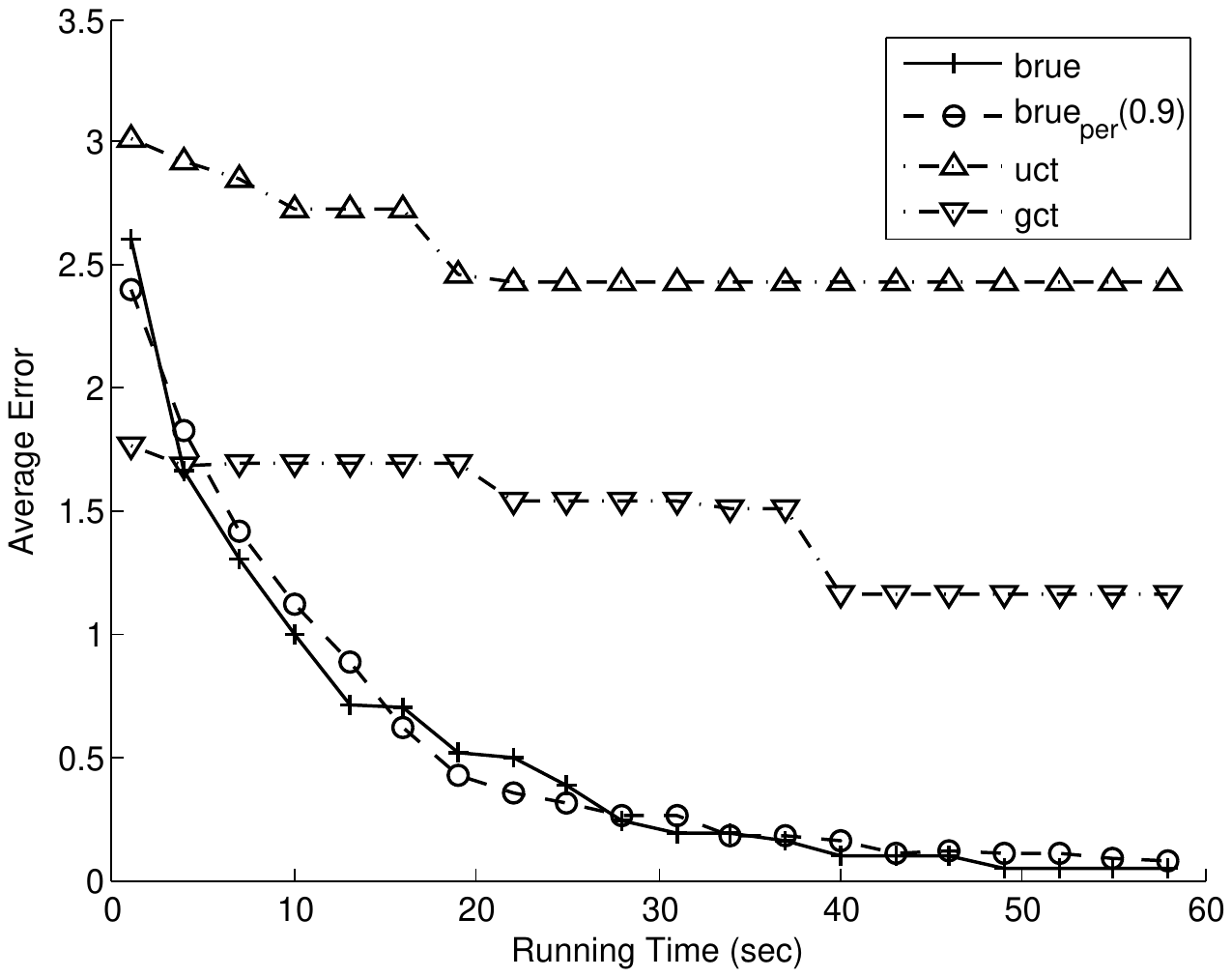}
\end{minipage}\\
 { $B=6/D=6$} &  { $B=2/D=16$}
\end{tabular}
\end{center}
\caption{\label{fig:pgame} Empirical performance of $\BRUSH$, $\UCT$, and $\GCT$ (denoted as {\sf GCT}) in terms of the average error on the random game trees with branching factor $B$ and tree depth $D$.}
\end{figure*}

Snapshots of the results for different grid sizes 
are shown 
in Figure~\ref{fig:sailing}.
We compared $\BRUSH$ with two $\MCTREE$-based algorithms: the $\UCT$ algorithm, and a recent modification of $\UCT$, $\GCT$, obtained from the former by replacing the $\ucbone$ policy {\em at the root node} with the $\epsilon$-greedy policy~\cite{tolpin:shimony:aaai12}. The motivation behind the design of $\GCT$ was to improve the  empirical simple regret of $\UCT$, and the results for $\GCT$ reported by \cite{tolpin:shimony:aaai12} (and confirmed by our experiments here) are very impressive.
We also show the results for $\BRUSH_{\mbox{\scriptsize per}}(0.9)$, a slight modification of $\BRUSH(0.9)$ with a more permissive update scheme: Instead of updating only the state-action node at the level of the switching point, we also update any ancestor for which either not all applicable actions have been sampled or the chosen action was identical to the best empirical one.

All four algorithms were implemented within a single software infrastructure.
As suggested by more recent works on $\UCT$, the exploration coefficient for $\UCT$ and $\GCT$ (parameter $c$ in Eq.~\ref{e:uctselect}) was set to the empirical best value of an action at the decision point~\cite{keller:eyerich:icaps12}. (This setting of the exploration coefficient resulted in better performance of both $\UCT$ and $\GCT$ than with the settings reported on the sailing domain in the respective original publications.) The $\epsilon$ parameter in $\GCT$ was set to $0.5$ as in the experiments of~\citeR{tolpin:shimony:aaai12}.
Each algorithm was run on 1000 randomly chosen initial states $\initstate$, and the performance of the algorithm was assessed in terms of the average error $Q(\initstate,a)-V(\initstate)$, that is, the difference between the true values of the action $a$ chosen by the algorithm and that of the optimal action $\optpolicy(\initstate)$.
Consistently with the results reported by Tolpin and Shimony~\citeyear{tolpin:shimony:aaai12}, on the smaller tasks $\GCT$ outperformed $\UCT$ by a very large margin, with the latter exhibiting very little improvement over time even on the smallest, $5\times 5$, grids.
The difference between $\GCT$ and $\UCT$ on the larger tasks was less notable.
In turn, $\BRUSH$ substantially outperformed $\GCT$, with the improvement being consistent except for relatively short planning deadlines, and $\BRUSH_{\mbox{\scriptsize per}}(0.9)$ performed even better than $\BRUSH$.

The above allows us to conclude that $\BRUSH$ is not only attractive in  terms of  the formal performance guarantees, but can also be very effective in practice for online planning.  Likewise, the  ``learning with forgetting'' extension of $\BRUSH(\alpha)$ also has its practical merits.
Under the same parameter setting of $\UCT$ and $\GCT$, we have also evaluated the three algorithms in a domain of random game trees whose goal is a simple modeling of two-person zero-sum games such as Go, Amazons and Globber. In such games, the winner is decided by a global evaluation of the end board, with the evaluation employing this or another feature counting procedure; the rewards thus are associated only with the terminal states.
The rewards are calculated by first assigning values to moves, and then summing up these  values along the paths to the terminal states. Note that the move values are used for the tree construction only and are not made available to the players. The values are chosen uniformly from $\left[0,127\right]$ for the moves of MAX, and from $\left[-127,0\right]$ for the moves of MIN. The players act so to (depending on the role)
maximize/minimize their individual payoff: the aim of MAX is to reach terminal $s$ with as high $R(s)$ as possible, and the objective of MIN is similar, {\em mutatis mutandis}.
%
This simple game tree model is similar in spirit to many other game tree models used in previous work~\cite{uct,pgame}, except that the success/failure of the players in measured not on a ternary scale of win/lose/draw, but via the actual payoffs they receive.
We  ran some experiments with two different settings of the branching factor $(B)$ and tree depths $(D)$. As in the sailing domain, we compared the convergence rate obtained by $\BRUSH$, $\UCT$ and $\GCT$. Figure~\ref{fig:pgame} plots the average error rate for two configurations, $B=6,D=6$ and $B=2,D=16$, with the average in each setting obtained over 500 trees. The results here appear encouraging as well, with $\BRUSH$ overtaking the other two algorithms more quickly on the deeper trees.

\section{SUMMARY}

We have introduced $\BRUSH$, a simple Monte-Carlo algorithm for online planning in MDPs that  guarantees exponential-rate reduction of the performance measures of interest, namely the simple regret and the probability of erroneous action choice. This improves over previous algorithms such as $\UCT$, which guarantee only polynomial-rate reduction of these measures. The algorithm has been formalized for finite horizon MDPs, and it was analyzed as such. However, our empirical evaluation shows that it also performs well on goal-driven MDPs and two-person games.

A few questions remain for future work. In the setting of $\gamma$-discounted MDPs with infinite horizons, a straightforward way to employ $\BRUSH$ is to fix a horizon $H$, use the algorithm as is, and derive guarantees on the aforementioned measures of interest by simply accounting for the additive gap of $\gamma^{H}R_{\max}/(1-\gamma)$ between the state/action values under horizon $H$ and those under an infinite horizon. However, this is not necessarily the best way to plan online for infinite-horizon MDPs, and thus this setting requires further inspection. Second, it is not unlikely that the state-space independent factors $\paramone{h}$, and $\paramtwo{h}$ in the guarantees of $\BRUSH$ can be  improved by employing more sophisticated combinations of exploration and estimation samples. Another important point to consider is the speed of convergence to the optimal action, as opposed to the speed of convergence to ``good" actions. $\BRUSH$ is geared towards identifying the optimal action, although in many large MDPs, ``good" is often the best one can hope for. To identify the optimal solution, $\BRUSH$ devotes samples equally to all depths. However, focusing on nodes closer to the root node may improve the quality of the recommendation if the planning time is severely limited. Finally, the core tree sampling scheme employed by $\BRUSH$ differs from the more standard scheme employed in previous work. While this difference plays a critical role in establishing the formal guarantees of $\BRUSH$, it is still unclear whether 
that difference is {\em necessary} for establishing exponential-over-time reduction of the performance measures.

\subsection*{Acknowledgements}
This work is partially supported by and carried out at the Technion-Microsoft  Electronic Commerce Research Center, as well as partially supported by the Air Force Office of Scientific Research, USAF, under grant number FA8655-12-1-2096.

\bibliographystyle{theapa}
\bibliography{../../../phd}

\newpage
\appendix
\appendixpage
\section{Proof of Theorem~\ref{thm:brush_regret}}\label{app:proofs}

The proof of Theorem~\ref{thm:brush_regret} relies on the inductive assumption with respect to the correctness of Lemma~\ref{thm:brush_qaccuracy}, as well as on several auxiliary claims that we prove in what follows. The dependence diagram below depicts the overall flow of the proof, with the more central claims being depicted with rectangular nodes.

\begin{center}
\begin{minipage}{\textwidth}
\includegraphics[width=\textwidth]{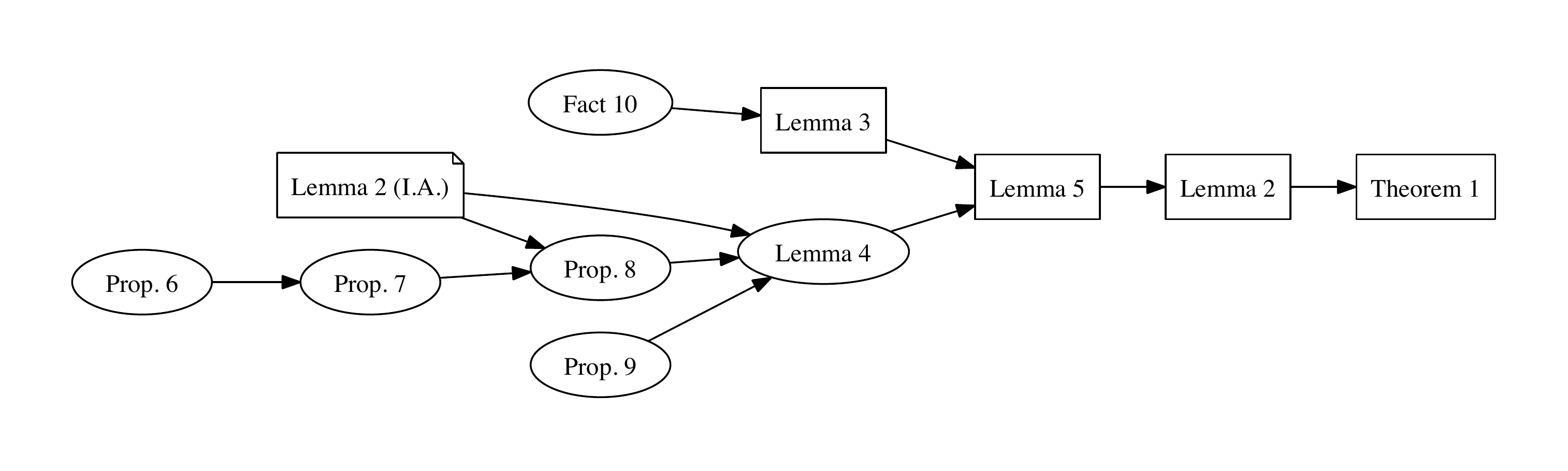}
\end{minipage}
\end{center}

\begin{proposition}[Concentration inequality for negative-binomial distributions]\label{lemma:nb_cdf}
Let $\negbinomial\left(t,p\right)$ be a random variable with negative-binomial distribution.
\begin{equation}
\mathbb{P}\left\{\negbinomial\left(t,p\right)\leq\frac{3t}{4p}\right\}\leq e^{-\frac{tp}{6}}
\end{equation}
\end{proposition}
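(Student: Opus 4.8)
The plan is to exploit the two standard descriptions of the negative-binomial law. Writing $\negbinomial(t,p)$ as the number of $\bernulli(p)$ trials needed to collect $t$ successes, it decomposes as $\negbinomial(t,p)=\sum_{i=1}^{t}G_i$, a sum of $t$ i.i.d.\ geometric variables with $\mathbb{E}G_i=1/p$, so $\mathbb{E}\,\negbinomial(t,p)=t/p$ and the event in question is a lower-deviation event: the target $\tfrac{3t}{4p}=\tfrac34\cdot\tfrac{t}{p}$ sits a factor $\tfrac34$ below the mean. Equivalently, ``the $t$-th success arrives by trial $m$'' is the same event as ``the first $m$ trials already contain at least $t$ successes,'' which gives the identity $\mathbb{P}\{\negbinomial(t,p)\le m\}=\mathbb{P}\{\binomial(m,p)\ge t\}$ with $m=\lfloor\tfrac{3t}{4p}\rfloor$. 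I would carry both viewpoints and use whichever moment generating function is cleaner.

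Before estimating anything I would dispose of the degenerate regime. Since $\negbinomial(t,p)\ge t$ deterministically and $\tfrac{3t}{4p}<t$ exactly when $p>\tfrac34$, the left-hand probability is identically $0$ for $p>\tfrac34$, so the claim is trivial there; at $p=\tfrac34$ it equals $p^{t}=(\tfrac34)^{t}\le e^{-t/8}=e^{-tp/6}$. Hence it suffices to treat $0<p\le\tfrac34$, where $m=\lfloor\tfrac{3t}{4p}\rfloor$ satisfies $mp\le\tfrac{3t}{4}$, i.e.\ $t$ lies above the binomial mean by the fixed relative margin $\tfrac43$.

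For the remaining range I would run the Chernoff bound. From the geometric side, $\mathbb{E}e^{-sG}=\tfrac{p}{e^{s}-1+p}$ for $s>0$, so Markov's inequality yields $\mathbb{P}\{\negbinomial(t,p)\le\tfrac{3t}{4p}\}\le \exp(\tfrac{3ts}{4p})\big(\tfrac{p}{e^{s}-1+p}\big)^{t}$; equivalently, from the binomial side, $\mathbb{P}\{\binomial(m,p)\ge t\}\le e^{-st}(1-p+pe^{s})^{m}$, whose optimized (relative-entropy) form is $\exp\big(-m\,D(\tfrac{4p}{3}\,\|\,p)\big)$. Taking logarithms and dividing by $t$ reduces the statement to producing one multiplier $s=s(p)>0$ whose exponent is at most $-\tfrac{p}{6}$. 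The optimal choice solves $e^{s}=\tfrac{3(1-p)}{3-4p}$ and gives the optimal rate $g(p)=\ln\tfrac43+\tfrac{3-4p}{4p}\ln\tfrac{3-4p}{3(1-p)}$, so everything collapses to the single-variable inequality $g(p)\ge\tfrac{p}{6}$ on $(0,\tfrac34]$.

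I expect the verification of that scalar inequality to be the main obstacle. The optimal rate $g(p)$ is essentially $p$-free: it tends to $\ln\tfrac43-\tfrac14\approx 0.038$ as $p\to0$ and to $\ln\tfrac43\approx 0.288$ at $p=\tfrac34$, so although it comfortably exceeds $\tfrac{p}{6}$ at both endpoints, a clean closed-form comparison over the whole interval is awkward. To keep the argument elementary I would avoid the exact optimizer, fix the relaxed choice $s=\ln\tfrac43$, and expand $\ln(1+p(e^{s}-1))$ via $\ln(1+x)\le x-\tfrac{x^{2}}{2}+\tfrac{x^{3}}{3}$ for $0<x<1$; this produces an exponent of the form $-(\ln\tfrac43-\tfrac14)-c\,p+O(p^{2})$, from which $g(p)\ge\tfrac{p}{6}$ follows for small $p$ via the constant term, with any residual gap on a moderate subinterval $[p_{0},\tfrac34]$ closed by a short separate estimate exploiting the near-determinism of the tail there. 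The probabilistic content is entirely standard Chernoff bounding; the real work is choosing $s$ so that the bookkeeping reproduces exactly the constant $\tfrac16$ demanded by the statement.
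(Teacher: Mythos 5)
Your reduction of the negative-binomial event to the binomial tail, $\mathbb{P}\{\negbinomial(t,p)\le m\}=\mathbb{P}\{\binomial(m,p)\ge t\}$ with $m=\lfloor 3t/(4p)\rfloor$, and your dismissal of the degenerate regime $p>3/4$, are both correct and coincide with the paper's first step. The gap is in the final step, which is where the inequality actually has to be established. You reduce the claim to the scalar inequality $g(p)\ge p/6$ on $(0,3/4]$, where $g(p)=\ln\frac43+\frac{3-4p}{4p}\ln\frac{3-4p}{3(1-p)}$ is the optimized Chernoff exponent, but you never prove it, and the concrete strategy you propose cannot prove it. With the relaxed choice $s=\ln\frac43$ the exponent rate is $\ln\frac43-\frac{3}{4p}\ln\bigl(1+\frac{p}{3}\bigr)$, which at $p=0.45$ is approximately $0.0547<p/6=0.075$; in fact it drops below $p/6$ for all $p\gtrsim 0.3$, so the fixed-$s$ bound fails on a large portion of the range, not a negligible one. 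Nor can that residue be closed by a ``short separate estimate'': the true margin $g(p)-p/6$ itself dips to about $4\times 10^{-4}$ near $p\approx 0.45$ (e.g.\ $g(0.45)\approx 0.07538$ versus $p/6=0.075$), so there is essentially no slack to give away, and there is no ``near-determinism'' at such $p$ --- that only appears as $p\to 3/4$.

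The repair is to not optimize the exponent at all. Your target inequality $g(p)\ge p/6$ is equivalent (multiply both sides by $4p/3$) to Pinsker's inequality for the binary relative entropy, $D\bigl(\tfrac{4p}{3}\,\big\|\,p\bigr)\ge 2\bigl(\tfrac{4p}{3}-p\bigr)^2=\tfrac{2p^2}{9}$, and the packaged form of ``Chernoff plus Pinsker'' is exactly Hoeffding's additive inequality. That is the paper's one-line proof: the binomial with $\frac{3t}{4p}$ trials has mean $\frac{3t}{4}$, so the event $\bigl\{\binomial\bigl(\tfrac{3t}{4p},p\bigr)\ge t\bigr\}$ is a deviation of $t/4$ above the mean, and Hoeffding gives
\begin{equation*}
\mathbb{P}\left\{\binomial\left(\tfrac{3t}{4p},p\right)\ge t\right\}\le \exp\left(-\frac{2\,(t/4)^2}{3t/(4p)}\right)=e^{-\frac{tp}{6}}.
\end{equation*}
Replacing your optimal-$s$ computation and the unproved scalar inequality with this single application of Hoeffding closes the argument; everything else in your write-up can stay.
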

\begin{proof}
It is well known that the event in which the number of Bernoulli trials required to obtain the $t$-th success is smaller than some positive integer $b$ is equivalent to the event that the number of successes in $b$ Bernoulli trials is at least $t$. Therefore, for any $0<\delta<1$,
\begin{equation}
\begin{split}
    \mathbb{P}\left\{\negbinomial\left(t,p\right)\leq \delta\frac{t}{p}\right\}&=\mathbb{P}\left\{\binomial\left(\delta\frac{t}{p},p\right)\geq t\right\}\\
    &=\mathbb{P}\left\{\binomial\left(\delta\frac{t}{p},p\right)\geq  t\delta + \left(t-t\delta\right)\right\}\\
    &\leq e^{-\frac{2t^2(1-\delta)^2p}{t\delta}}\\
    & \text{by the Hoeffding inequality},
\end{split}
\end{equation}
and choosing $\delta=\frac{3}{4}$ yields the result.
\end{proof}

\begin{proposition}[Number of Child Samples Bound]\label{lemma:child_samples_bound}
Let $\BRUSH$ be called on a state $\initstate$ of an MDP $\mdp{\states,\actions,\transp,\reward}$ with rewards in $\left[0,1\right]$ and finite horizon $H$. Let  $(s,h)$ be a node reachable from $(\initstate,H)$, and in turn, $(s',h')$ be a node reachable from $(s,h)$ via an action sequence that starts with applying action $a$ at $s$. Then, for any $a'\in A\left(s'\right)$, we have
\begin{equation}
    \mathbb{P}\left\{n_{h'}\left(s',a'\right)\leq \frac{p_{h'}\left(s',a'\right)}{4p_h\left(s,a\right)}t\;\middle|\;n_{h}\left(s,a\right)=t\right\}\leq2 e^{-t \frac{p_{h'}\left(s',a'\right)^2}{6p_h\left(s,a\right)}},
\end{equation}
where $p_h\left(s,a\right)$ is the probability that an $(H-h)$-iteration of $\BRUSH$ will issue a sample, whose exploration phase ends with applying action $a$ at state $s$ with $h$ steps still to go.
\end{proposition}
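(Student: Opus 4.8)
The plan is to couple the two streams of iterations that respectively update $(s,a)$ and $(s',a')$ through the common deterministic round-robin clock $\spfunc$, and then chain a negative-binomial tail bound with a binomial one. The first thing I would establish is that the per-iteration ``hit'' probabilities are genuine constants with independent indicators: since the exploration policy selects actions uniformly at random and the transition probabilities are fixed, an $(H-h)$-iteration ends its exploration at $(s,a)$ with the fixed probability $p_h(s,a)$, an $(H-h')$-iteration ends at $(s',a')$ with the fixed probability $p_{h'}(s',a')$, and these indicators are independent across distinct iterations because the tree statistics never feed back into the exploration phase. Because $(s',h')$ is reached from $(s,h)$ only after applying $a$ at $s$, every exploration that reaches $(s',a')$ has first passed through $(s,a)$, so $p_{h'}(s',a')\le p_h(s,a)$.

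Let $N_1$ denote the number of $(H-h)$-iterations that have occurred by the first moment at which $n_h(s,a)=t$. This index is exactly the number of Bernoulli trials needed for the $t$-th success, so $N_1\sim\negbinomial(t,p_h(s,a))$, and Proposition~\ref{lemma:nb_cdf} gives $\mathbb{P}\left\{N_1\le\tfrac{3t}{4p_h(s,a)}\right\}\le e^{-tp_h(s,a)/6}$. Since $\spfunc$ cycles through $\{1,\dots,H\}$ in round-robin, the numbers of $(H-h)$-iterations and $(H-h')$-iterations in any prefix differ by at most one; hence the number $N_2$ of $(H-h')$-iterations completed by that same moment satisfies $N_2\ge N_1-1$. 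Because $n_h(s,a)$ is non-decreasing and $n_{h'}(s',a')$ only grows while $\{n_h(s,a)=t\}$ persists, it suffices to bound the failure probability at this first moment, which is the least favorable one.

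Conditioning on the complement of the negative-binomial bad event gives $N_2\ge\tfrac{3t}{4p_h(s,a)}-1$. On this event $n_{h'}(s',a')$ is a sum of $N_2$ independent $\bernulli(p_{h'}(s',a'))$ indicators --- independent of the $(H-h)$-stream that determines $N_2$ --- hence distributed as $\binomial(N_2,p_{h'}(s',a'))$ with mean at least $\tfrac34\tfrac{p_{h'}(s',a')}{p_h(s,a)}t=3\theta$, where $\theta=\tfrac{p_{h'}(s',a')}{4p_h(s,a)}t$ is the target threshold. A Chernoff (Hoeffding) lower-tail estimate for $\binomial(N_2,p_{h'}(s',a'))\le\tfrac13\cdot\text{mean}$ then yields a bound of the form $e^{-\frac23 t\,p_{h'}(s',a')^2/p_h(s,a)}$. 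A union bound over the two events, together with $p_{h'}(s',a')\le p_h(s,a)$ to absorb the negative-binomial term $e^{-tp_h(s,a)/6}\le e^{-t p_{h'}(s',a')^2/(6p_h(s,a))}$, delivers the claimed $2e^{-t p_{h'}(s',a')^2/(6p_h(s,a))}$.

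The hard part will be the coupling itself: turning the conditioning on $n_h(s,a)=t$, which is a statement about \emph{successful} $(s,a)$-updates, into a dependable lower bound on the number $N_2$ of $(H-h')$-iterations, while keeping the $(s',a')$-indicators independent of $N_2$. This is precisely where the deterministic round-robin schedule and the tree-independence of exploration are indispensable: they let me treat $N_2$ as a fixed function of the $(s,a)$-stream and the binomial over the $(s',a')$-stream as conditionally independent of it. The remaining work is bookkeeping of constants --- propagating the $-1$ from the round-robin and the factor $\tfrac34$ from Proposition~\ref{lemma:nb_cdf} through the Chernoff exponent --- which I expect to be routine.
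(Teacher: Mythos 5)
Your proposal is correct and takes essentially the same route as the paper's own proof: both reduce the conditional count $n_{h'}\left(s',a'\right)$ given $n_{h}\left(s,a\right)=t$ to a binomial number of successes over a negative-binomially distributed number of $(H-h')$-iterations (via the round-robin pairing of iteration counts), then combine the negative-binomial concentration of Proposition~\ref{lemma:nb_cdf} with a Hoeffding lower-tail bound and a union bound to obtain $2e^{-t p_{h'}\left(s',a'\right)^2/(6 p_h\left(s,a\right))}$. The differences are presentational only: you narrate the argument as an explicit stopping-time coupling where the paper directly asserts the compound distribution $\sum_{i=1}^{t+\gamma_t}\beta_i$, and you state explicitly the inequality $p_{h'}\left(s',a'\right)\leq p_h\left(s,a\right)$, which the paper's final combination step also requires but leaves implicit.
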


\begin{proof}
By the choice of the switching point function of $\BRUSH$ as in Eq.~\ref{eq:nh1}, the number of samples of action $a'$ in the descendant node $\left(s',h'\right)$ between two consecutive samples of action $a$ in node $\left(s,h\right)$ is distributed according to \begin{equation}
\label{eq:geomber}
\sum_{i=1}^{1+\gamma}\beta_i,
\end{equation}
 where $\gamma\sim\geomdist\left(p_h(s,a)\right)$ and $\beta_i\sim\bernulli\left(p_{h'}(s',a')\right)$ are all independent random variables. Indeed, for every pair of consecutive iterations $n<n'$ with $\spfunc(n)=\spfunc(n')=H-h$,
\begin{enumerate}[(i)]
\item there is exactly one iteration $n < n'' < n'$ with $\spfunc(n'')=H-h'$, and
\item the number of $(H-h)$-iterations between two consecutive $(H-h)$-iterations that finish their exploration phase with applying action $a$ at $s$ is geometric.
\end{enumerate}
Putting (i) and (ii) together, the number of $(H-h')$-iterations between a pair of consecutive $(H-h)$-iterations that  finish their exploration phase with applying action $a$ at $s$ is also geometric.
In turn, the probability that an $(H-h')$-iteration will
finish its exploration phase with applying $a'$ at $s'$ is  $p_{h'}\left(s',a'\right)$,
and thus the number of $(H-h')$-iterations that finish their exploration phase with applying $a'$ at $s'$ between a pair of consecutive $(H-h)$-iterations that  finish their exploration phase with applying action $a$ at $s$ is distributed as in Eq.~\ref{eq:geomber}.

Similarly, it can be shown that the (conditioned) random variable $$n_{h'}\left(s',a'\right) \mid n_h\left(s,a\right)=t$$ is distributed according to
\begin{equation}
\label{eq:geomber2}
\sum_{i=1}^{t+\gamma_t}\beta_i,
\end{equation}
where $\gamma_t\sim\negbinomial\left(t,p_h\left(s,a\right)\right)$, $\beta_i\sim\bernulli\left(p_{h'}\left(s',a'\right)\right)$, and all $\gamma_t$ and $\beta_i$ are independent.

Therefore, denoting $p_{h}(s,a)$ and $p_{h'}(s',a')$ by $p_{h}$ and $p_{h'}$, respectively, for short, we have
\begin{equation}
\label{eq:nsa_bound}
\begin{split}
    \mathbb{P}&\left\{n_{h'}\left(s,a\right)\leq\frac{t}{4p_h}p_{h'}\;\middle|\;n_{h'}\left(s,a\right)=t\right\}\\
    &\stackrel{\text{\tiny Eq.\ref{eq:geomber2}}}{=}\mathbb{P}\left\{\sum_{i=1}^{t+\gamma_t}\beta_i\leq \frac{t}{4p_h}p_{h'} \right\}\\
    &\leq\mathbb{P}\left\{\gamma_t\leq \frac{3t}{4p_h}\right\}
    +\sum_{x=\frac{3t}{4p_h}+1}^{\infty}\mathbb{P}\left\{\sum_{i=1}^{t+x}\beta_i\leq\frac{t}{4p_h}p_{h'}\right\}\mathbb{P}\left\{\gamma_t=x\right\}\\
    &=\mathbb{P}\left\{\gamma_t\leq \frac{3t}{4p_h}\right\}+ \sum_{x=\frac{3t}{4p_h}+1}^{\infty}\mathbb{P}\left\{\binomial\left(\left(t+x\right),p_{h'}\right)\leq\frac{t}{4p_h}p_{h'}\right\}\mathbb{P}\left\{\gamma_t=x\right\}\\
    &\text{\scriptsize since $\beta_i$ are all independent Bernoulli variables with common parameter $p_{h'}$}\\
    &=\mathbb{P}\left\{\gamma_t\leq \frac{3t}{4p_h}\right\}+ \sum_{x=\frac{3t}{4p_h}+1}^{\infty}\mathbb{P}\left\{\binomial\left(\left(t+x\right),p_{h'}\right)\leq\left(t+x\right)p_{h'}-\delta_x\right\}\mathbb{P}\left\{\gamma_t=x\right\},
\end{split}
\end{equation}
where $\delta_x=\frac{4xp_h-t(1-4p_h)}{4p_h}p_{h'}$.

Given that, for all $x\geq\frac{3t}{4p_h}$, we have
\begin{equation}
\label{eq:gamma_bound}
\begin{split}
    \mathbb{P}&\left\{\binomial\left(\left(t+x\right),p_{h'}\right)\leq\left(t+x\right)p_{h'}-\delta_x\right\}\leq  e^{-\frac{2\delta_x^2}{(t+x)}}\\
    &\hspace{0.5cm}\text{\scriptsize by the Hoeffding inequality, applicable here since $\delta_x=\frac{4xp_h-t(1-4p_h)}{4p_h}p_{h'}\geq
    \frac{t(2+4p_h)}{4p_h}p_{h'}\geq0$}\\
    &\leq e^{-t\left(2+\frac{1}{2p_h}\right)p_{h'}^2}\\
    &\hspace{0.5cm}\text{\scriptsize since $\frac{\delta_x^2}{t+x}\geq t+\frac{tp_{h'}^2}{4p_h}$}\\
    &\leq e^{-\frac{t}{2p_h}p_{h'}^2}.
\end{split}
\end{equation}
Plugging Eq.~\ref{eq:gamma_bound} into Eq.~\ref{eq:nsa_bound}, we obtain
\begin{equation}
\begin{split}
    \mathbb{P}&\left\{n_{g}\left(s,a\right)\leq\frac{t}{4p_h}p_{h'}\;\middle|\;n_{g}\left(s,a\right)=t\right\}\\
    & \leq \mathbb{P}\left\{\gamma_t\leq \frac{3t}{4p_h}\right\}+\sum_{x=\frac{3t}{4p_h}+1}^{\infty} e^{-\frac{p_{h'}^2t}{2p_h}}\mathbb{P}\left\{\gamma_t=x\right\}\\
    &\stackrel{\text{\tiny Prop.\ref{lemma:nb_cdf}}}{\leq} e^{-\frac{tp_h}{6}}+\sum_{x=\frac{3t}{4p_h}+1}^{\infty} e^{-\frac{p_{h'}^2t}{2p_h}}\mathbb{P}\left\{\gamma_t=x\right\}\\
    &\leq e^{-\frac{tp_h}{6}}+e^{-\frac{tp_{h'}^2}{2p_h}}\\
    &\leq 2e^{-\frac{tp_{h'}^2}{6p_h}}.
\end{split}
\end{equation}
\end{proof}


\begin{proposition}
\label{prop:ichs}
Let $\BRUSH$ be called on a state $\initstate$ of an MDP $\mdp{\states,\actions,\transp,\reward}$ with rewards in $\left[0,1\right]$ and finite horizon $H$. Let  $(s,h+1)$ be a node reachable from $(\initstate,H)$, and in turn, $(s',h')$ be a node reachable from $(s,h+1)$. If Lemma~\ref{thm:brush_qaccuracy} holds for horizon $h$, then, for any $a\in \actions{(s)}$, $a'\in \actions{(s')}$, and $t\geq 1$,
\begin{equation}
\label{eq:ichs2}
\mathbb{P}\left\{\estq{s',a'}{h'}-\optq{s',a'}{h'}\geq {d\over 2}\;\middle|\;n_{h+1}\left(s,a\right)=t\right\}\leq\left(2+\paramone{h'}\right)e^{-t\paramtwo{h'}\frac{p^{h+1-h'}}{6K^{h+1-h'}}},
\end{equation}
and
\begin{equation}
\label{eq:ichs4}
\mathbb{P}\left\{\estq{s',a'}{h'}-\optq{s',a'}{h'}\leq {- {d\over 2}}\;\middle|\;n_{h+1}\left(s,a\right)=t\right\}\leq\left(2+\paramone{h'}\right)e^{-t\paramtwo{h'}\frac{p^{h+1-h'}}{6K^{h+1-h'}}}.
\end{equation}
\end{proposition}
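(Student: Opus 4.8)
The plan is to obtain~(\ref{eq:ichs2}) by composing two results already at our disposal: the Number of Child Samples Bound (Proposition~\ref{lemma:child_samples_bound}), which controls how often the descendant node $(s',h')$ is updated relative to the conditioning quantity $n_{h+1}(s,a)=t$, and the inductive hypothesis (Lemma~\ref{thm:brush_qaccuracy} at horizon $h'$, available since $h'\le h$), which controls the accuracy of $\estq{s',a'}{h'}$ as a function of its own update count. I would establish only the upper-tail inequality~(\ref{eq:ichs2}); the lower-tail inequality~(\ref{eq:ichs4}) then follows verbatim, using the companion (lower-tail) bound of Lemma~\ref{thm:brush_qaccuracy} in place of the upper-tail one.

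First I would split on whether $(s',a')$ has been updated sufficiently often. Setting the threshold $\tau_0=\frac{p_{h'}(s',a')}{4\,p_{h+1}(s,a)}\,t$, conditioning on $n_{h+1}(s,a)=t$, and applying the law of total probability over the value $\tau$ of $n_{h'}(s',a')$ (using that its conditional law sums to at most one), the left-hand side of~(\ref{eq:ichs2}) is at most
\[
\mathbb{P}\left\{n_{h'}(s',a')\leq\tau_0\;\middle|\;n_{h+1}(s,a)=t\right\}
+\sup_{\tau>\tau_0}\mathbb{P}\left\{\estq{s',a'}{h'}-\optq{s',a'}{h'}\geq{d\over2}\;\middle|\;n_{h'}(s',a')=\tau,\,n_{h+1}(s,a)=t\right\}.
\]
The first term is bounded directly by Proposition~\ref{lemma:child_samples_bound}, with the role of its parent node $(s,h)$ played here by $(s,h+1)$, yielding $2\,e^{-t\,p_{h'}(s',a')^2/(6\,p_{h+1}(s,a))}$.

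For the second term I would first drop the conditioning on $n_{h+1}(s,a)=t$ (justified below) and then invoke Lemma~\ref{thm:brush_qaccuracy} at horizon $h'$, which bounds each such probability by $\paramone{h'}e^{-\paramtwo{h'}\tau}\le\paramone{h'}e^{-\paramtwo{h'}\tau_0}$ for every $\tau>\tau_0$. It then remains to lower-bound the ratio $p_{h'}(s',a')/p_{h+1}(s,a)$: since $(s',h')$ is reachable from $(s,h+1)$, a witnessing path descends through $(h+1)-h'$ levels, and during the exploration phase each level contributes an action drawn uniformly (probability $\ge 1/K$) followed by a transition of probability $\ge p$, whence $p_{h'}(s',a')/p_{h+1}(s,a)\ge (p/K)^{(h+1)-h'}$. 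Substituting this and $\tau_0$ into the two terms, a routine comparison of the two exponential rates — bounding the slower one from below using the path-probability estimate $p_{h'}(s',a')\ge (p/K)^{H-h'}/K$ together with the explicit form of $\paramtwo{h'}$ from Lemma~\ref{thm:brush_qaccuracy} — shows that both exceed $\paramtwo{h'}\,p^{(h+1)-h'}/(6K^{(h+1)-h'})$, so that the two terms collapse into the claimed bound $\left(2+\paramone{h'}\right)e^{-t\paramtwo{h'}p^{(h+1)-h'}/(6K^{(h+1)-h'})}$.

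The step I expect to be the main obstacle is precisely the removal of the conditioning on $n_{h+1}(s,a)=t$ inside the second term, i.e. the passage from $\mathbb{P}\{\cdot\mid n_{h'}(s',a')=\tau,\,n_{h+1}(s,a)=t\}$ to $\mathbb{P}\{\cdot\mid n_{h'}(s',a')=\tau\}$, which is what makes Lemma~\ref{thm:brush_qaccuracy} (stated only under conditioning on the node's \emph{own} count) applicable. I would justify it through $\BRUSH$'s locality of update: once the count $n_{h'}(s',a')=\tau$ is fixed, the estimate $\estq{s',a'}{h'}$ is an average of the rewards returned by those $\tau$ estimation-phase rollouts that follow $a'$ at $(s',h')$, and the law of each such reward is governed solely by the subtree rooted below $(s',a')$ and by the evolving estimates therein, not by how many times the strict ancestor $(s,a)$ happened to be updated. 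Making this conditional-independence argument airtight is the delicate part of the proof.
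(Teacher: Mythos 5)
Your proposal follows essentially the same route as the paper's proof: the same split on the child count at the threshold $\frac{p_{h'}t}{4p_{h+1}}$, Proposition~\ref{lemma:child_samples_bound} for the under-sampled term, the inductive hypothesis (Lemma~\ref{thm:brush_qaccuracy}) for the well-sampled term, and the same final merge via $p_{h'}/p_{h+1}\geq(p/K)^{h+1-h'}$ together with $c_{h'}'<p_{h'}$. The conditional-independence step you flag as the delicate part is in fact performed silently in the paper (when invoking the inductive hypothesis it conditions only on $n_{h'}\left(s',a'\right)=\tau$, dropping $n_{h+1}\left(s,a\right)=t$ without comment), so your treatment is, if anything, more explicit than the original.
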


\begin{proof}
The proof for the two pairs of equations is identical, and thus we explicitly prove here only Eq.~\ref{eq:ichs2}.
In what follows, we use $p_{h}(s,a)$ and $p_{h'}(s',a')$ as defined in Proposition~\ref{lemma:child_samples_bound}, and here as well denote them by $p_{h}$ and $p_{h'}$, respectively, for short.  Similarly, by $\optqS{s',a'}{h'}$, $\estqS{s',a'}{h'}$, $\numsS{s',a'}{h'}$, and $\numsS{s,a}{h+1}$ we refer to $\optqS{s',a'}{h'}$, $\estq{s',a'}{h'}$, $\nums{s',a'}{h'}$, and $\nums{s,a}{h+1}$, respectively, for short.

\begin{equation}\label{eq:q_accuracy}
\begin{split}
    \mathbb{P}&\left\{\estqS{s',a'}{h'}-\optqS{s',a'}{h'}\geq {d\over 2}\;\middle|\;\numsS{s,a}{h+1}=t\right\}\\
    &\leq \mathbb{P}\left\{\numsS{s',a'}{h'}\leq \frac{tp_{h'}}{4p_{h+1}}\;\middle|\;\numsS{s,a}{h+1}=t\right\} \;+\;\mathbb{P}\left\{\estqS{s',a'}{h'}-\optqS{s',a'}{h'}\geq {d\over 2},\numsS{s',a'}{h'}> \frac{tp_{h'}}{4p_{h+1}}\;\middle|\;\numsS{s,a}{h+1}=t\right\}\\
    &\stackrel{\text{\tiny Prop.\ref{lemma:child_samples_bound}}}{\leq} 2e^{-\frac{p_{h'}^2t}{6p_{h+1}}} \;+
    \sum_{\tau=\frac{tp_{h'}}{4p_{h+1}}}^{\infty}\mathbb{P}\left\{\estqS{s',a'}{h'}-\optqS{s',a'}{h'}\geq {d\over 2}\;\middle|\;\numsS{s',a'}{h'}=\tau\right\} \mathbb{P}\left\{ \numsS{s',a'}{h'}=\tau\;\middle|\;\numsS{s,a}{h+1}=t \right\}\\
    &\stackrel{\text{\tiny I.A./Eq.\ref{eq:assmpt_eq}}}{\leq} 2e^{-\frac{tp_{h'}^2}{6p_{h+1}}} +
    \sum_{\tau=\frac{tp_{h'}}{4p_{h+1}}}^{\infty}\paramone{h'}e^{-\tau\paramtwo{h'}}\mathbb{P}\left\{ \numsS{s',a'}{h'}=\tau\;\middle|\;\numsS{s,a}{h+1}=t \right\}\\
    &\leq 2e^{-\frac{tp_{h'}^2}{6p_{h+1}}}+\paramone{h'}e^{-\paramtwo{h'}\frac{tp_{h'}}{4p_{h+1}}}.
\end{split}
\end{equation}
Consider the fraction $\frac{p_{h'}}{p_{h+1}}$, and recall that $(s',h')$ is a descendant of $(s,h)$. The latter implies that
\[
p_{h'} \geq p_{h+1}\left(\frac{p}{K}\right)^{h+1-h'},
\]
and thus $\frac{p_{h'}}{p_{h+1}} \geq \left(\frac{p}{K}\right)^{h+1-h'}$.
Continuing now Eq.~\ref{eq:q_accuracy},
by Eq.~\ref{eq:c_recursive_carmel2},
\[
c_{h'}'=\frac{3d^{2(h'-1)} p^{H+h'-1}}{32^{h'-1}((h')!)^2 K^{H+h'-1}} <
\left(\frac{p}{K}\right)^{H+h'-1} \leq  \left(\frac{p}{K}\right)^{H-h'} \leq p_{h'},
\]
and Eq.~\ref{eq:q_accuracy} under $\paramtwo{h'} < p_{h'}$ implies
\begin{equation}\label{eq:q_accuracy2}
\begin{split}
2e^{-\frac{tp_{h'}^2}{6p_{h+1}}}+\paramone{h'}e^{-\paramtwo{h'}\frac{tp_{h'}}{4p_{h+1}}}
&\leq \left(2+\paramone{h'}\right)e^{-t\paramtwo{h'}\frac{p_{h'}}{6p_{h+1}}}\\
&\leq \left(2+\paramone{h'}\right)e^{-t\paramtwo{h'}\frac{p^{h+1-h'}}{6K^{h+1-h'}}}.
\end{split}
\end{equation}

\end{proof}

\begin{proposition}[Expected accumulated rewards]\label{lemma:accum_rewards}
Let $\mdp{\states,\actions,\transp,\reward}$ be an MDP, and let $X$ be the accumulated reward of a sample $$\probe=\langle s, a,s_{1},a_{1},s_{h},a_{h},s_{h+1}\rangle,$$ started with taking action $a \in A$ in state $s \in \states$, and continued with additional $h$ steps, in which actions are chosen according to some arbitrary (possibly randomized) policy $\pi$. Let
\begin{itemize}
\item $E_{\pi,h+1}\left(s,a\right)$ denote the event in which, after $a$, $\probe$ is sampled along the optimal actions, that is, for $i\in\range{h}$, $a_{i} = \optpolicy_{h+1-i}(s_{i})$, and
\item $\delta_{\pi,h+1}\left(s,a\right)=Q_{h+1}\left(s,a\right)-\mathbb{E}\left[X\right]$.
\end{itemize}Then,
\begin{eqnarray}\label{eq:prob_bounds}
    \mathbb{P}\left\{\neg E_{\pi,h+1}\left(s,a\right)\right\}&\leq&\sum_{i=1}^h\mathbb{P}\left\{\pi_{h+1-i}\left(s_i\right)\neq\pi_{h+1-i}^*\left(s_i\right)\right\},\\
    \label{eq:prob_bounds1}
    \delta_{\pi,h+1}\left(s,a\right)&=& \sum_{i=1}^{h}\mathbb{E}\left[\regret{s_i,\pi_{h+1-i}(s_i)}{h+1-i}\right].
\end{eqnarray}
\end{proposition}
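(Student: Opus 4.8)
The plan is to treat the two assertions separately: Eq.~\ref{eq:prob_bounds} is an elementary union bound, while Eq.~\ref{eq:prob_bounds1} is a telescoping ``performance-difference'' identity that I would prove by induction on the number of steps-to-go. For the first inequality, note that $E_{\pi,h+1}(s,a)$ holds exactly when the (possibly randomized) policy $\pi$ selects the optimal action at every one of the $h$ choice points it controls, i.e.\ when $a_i=\optpolicy_{h+1-i}(s_i)$ for all $i\in\range{h}$. Its complement is therefore the union $\bigcup_{i=1}^h\{\pi_{h+1-i}(s_i)\neq\optpolicy_{h+1-i}(s_i)\}$, and subadditivity of probability immediately yields the claimed bound; nothing beyond the union bound is needed.

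For Eq.~\ref{eq:prob_bounds1} I would first expand $X=R(s,a,s_1)+\sum_{i=1}^h R(s_i,a_i,s_{i+1})$ and compare its expectation with $Q_{h+1}(s,a)=\mathbb{E}_{s_1}[R(s,a,s_1)+Q_h(s_1,\optpolicy_h(s_1))]$. Since $s_1$ is drawn from the same transition $\transp(s,a,\cdot)$ in both quantities, the immediate reward term $R(s,a,s_1)$ cancels in expectation, leaving $\delta_{\pi,h+1}(s,a)=\mathbb{E}_{s_1}[V_h(s_1)-V_h^\pi(s_1)]$, where $V_h(\sigma)=Q_h(\sigma,\optpolicy_h(\sigma))$ is the optimal value and $V_h^\pi$ is the expected $h$-step return of following $\pi$. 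This reduces the identity to decomposing a value gap.

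The heart of the argument is a one-step recursion for that gap. Writing $V_g^\pi(\sigma)=\mathbb{E}_{\sigma'}[R(\sigma,\pi_g(\sigma),\sigma')+V_{g-1}^\pi(\sigma')]$ and adding and subtracting $Q_g(\sigma,\pi_g(\sigma))$ gives
\begin{equation}
V_g(\sigma)-V_g^\pi(\sigma)=\regret{\sigma,\pi_g(\sigma)}{g}+\mathbb{E}_{\sigma'}\!\left[V_{g-1}(\sigma')-V_{g-1}^\pi(\sigma')\right],
\end{equation}
since the reward terms inside $Q_g(\sigma,\pi_g(\sigma))$ and $V_g^\pi(\sigma)$ again cancel in expectation. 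Unrolling this by induction on $g$ (base case $g=0$, where both values vanish and the sum is empty) expresses $V_g(\sigma)-V_g^\pi(\sigma)$ as the expected sum, along the $\pi$-trajectory $\tau_0=\sigma,\tau_1,\dots$ started at $\sigma$, of the one-step regrets $\regret{\tau_j,\pi_{g-j}(\tau_j)}{g-j}$. Specializing to $\sigma=s_1$, $g=h$ and re-indexing the trajectory as $s_1,s_2,\dots$ so that $s_i$ carries $h+1-i$ steps-to-go turns this into $\mathbb{E}[\sum_{i=1}^h \regret{s_i,\pi_{h+1-i}(s_i)}{h+1-i}\mid s_1]$; taking the outer expectation over $s_1$ and invoking the tower property yields Eq.~\ref{eq:prob_bounds1}.

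I expect the main difficulty to be bookkeeping rather than conceptual: keeping the steps-to-go index aligned along the trajectory (so the $i$-th visited state genuinely carries $h+1-i$ steps-to-go and the correct $Q_{h+1-i}$ appears), and justifying the interchange of expectation and summation together with the reward cancellations when $\pi$ is randomized. Once the recursion above is stated correctly, the induction and the final application of the tower property are routine.
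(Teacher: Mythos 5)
Your proposal is correct and follows essentially the same route as the paper: Eq.~\ref{eq:prob_bounds} by the union bound, and Eq.~\ref{eq:prob_bounds1} by rewriting each one-step reward via the Bellman relation so that the optimal values telescope, leaving exactly the sum of expected one-step regrets. The only difference is presentational — the paper writes the telescoping series for $\mathbb{E}[X]$ directly, whereas you package the same cancellation as a one-step value-gap recursion unrolled by induction on steps-to-go, which is the identical argument in performance-difference form.
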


\begin{proof}
The proof of Eq.~\ref{eq:prob_bounds} is straightforward by the union bound.
To prove Eq.~\ref{eq:prob_bounds1}, we note that for any state/steps-to-go pair $(s,h) \in \states\times\range{H}$, 
we have
\[
\expectation_{\pi,s'}\left[R\left(s,\pi_h(s),s'\right)\right]=\expectation_{\pi}\left[\optq{s,\pi_h(s)}{h}\right]-\expectation_{\pi,s'}\left[\optq{s',\optpolicy(s',h-1)}{h-1}\right].
\]
Using that, we obtain a telescopic series that yields
\[
\begin{split}
\mathbb{E}\left[X\right]= &\; \mathbb{E}_{\pi, s_{1}:s_{h}}\left[R\left(s,a,s_1\right)+\sum_{i=1}^{h}R\left(s_i,\pi_{h+1-i}(s_i),s_{i+1}\right)\right]\\
= &\; \optq{s,a}{h+1}-
\expectation_{s_{1}}\left[\optq{s_1,\optpolicy(s_{1},h)}{h}\right]+\\
&\; \sum_{i=1}^{h}\left(\mathbb{E}_{\pi,s_{1}:s_{i}}\left[\optq{s_i,\pi_{h+1-i}(s_i)}{h+1-i}\right]-\mathbb{E}_{\pi,s_{1}:s_{i+1}}\left[\optq{s_{i+1},\optpolicy(s_{i+1},h-i)}{h-i}\right]\right)\\
=&\; \optq{s,a}{h+1}-\sum_{i=1}^{h}\mathbb{E}_{\pi,s_{1}:s_{i}}\left[\regret{s_i,\pi_{h-i+1}(s_{i})}{h+1-i}\right].
\end{split}
\]
\end{proof}


\subsection*{Proof of Lemma~\ref{thm:probe_bounds}:}
\begin{silentproof}
By Definition~\ref{def:Xevent}, the event $E_{t,h+1}\left(s,a\right)$ corresponds to a sample $$\probe=\langle s, a,s_{1},a_{1},s_{h},a_{h},s_{h+1}\rangle,$$
obtained by taking action $a$ at state $s$, reachable from $\initstate$ with $h+1$ steps still to go, and then following the policy $\reca{t}$, induced by running $\BRUSH$ on $\initstate$ until exactly $t>0$ samples finish their exploration phase with applying action $a$ at $s$ with $h$ steps still to go.
From Proposition~\ref{lemma:accum_rewards}, denoting $\fake{\imath}\stackrel{\triangle}{=} h+1-i$, we have
\begin{equation}
\label{eq:l3prof}
\begin{split}
\mathbb{P}\left\{\neg E_{t,h+1}\left(s,a\right)\right\}&\leq\sum_{i=1}^h\mathbb{P}\left\{\pi_{\fake{\imath}}\left(s_i\right)\neq\pi_{\fake{\imath}}^*\left(s_i\right)   \;\middle|\;n_{h+1}\left(s,a\right)=t\right\}\\
&\leq\sum_{i=1}^h\sum_{a'\neq\pi_{\fake{\imath}}^*\left(s_i\right)}\mathbb{P}\left\{\widehat{Q}_{\fake{\imath}}\left(s_i,a'\right)>\widehat{Q}_{\fake{\imath}}\left(s_i,\pi_{\fake{\imath}}^*\left(s_i\right)\right)\;\middle|\;n_{h+1}\left(s,a\right)=t \right\}\\
&\leq\sum_{i=1}^h\sum_{a'\neq\pi_{\fake{\imath}}^*\left(s_i\right)}\left[ \mathbb{P}\left\{\widehat{Q}_{\fake{\imath}}\left(s_i,a'\right)-Q_{\fake{\imath}}\left(s_i,a\right)\geq\frac{d}{2}\;\middle|\;n_{h+1}\left(s,a\right)=t\right\} + \right.\\
&\hspace{2.6cm}\left. \mathbb{P}\left\{\widehat{Q}_i\left(s_i,\pi_{\fake{\imath}}^*\left(s_i\right)\right)-Q_i\left(s_i,\pi_{\fake{\imath}}^*\left(s_i\right)\right)\leq -\frac{d}{2}\;\middle|\;n_{h+1}\left(s,a\right)=t\right\}\right]\\
&\stackrel{\text{\tiny Prop.\ref{prop:ichs}}}{\leq}
\sum_{i=1}^h 2K\left(2+\paramone{i}\right)e^{-t\paramtwo{i}\frac{p^{\fake{\imath}}}{6K^{\fake{\imath}}}}\\
&\stackrel{{\tiny (\ast)}}{\leq} 2Kh\left(2+\paramone{h}\right) e^{-t\paramtwo{h}\frac{p}{6K}}.
\end{split}
\end{equation}
The last inequality $(\ast)$ in Eq.~\ref{eq:l3prof} holds because,
by assuming Lemma~\ref{thm:brush_qaccuracy} for horizon $h$, for $i \in \range{h}$, it can be straightforwardly derived from Eqs.~\ref{eq:c_recursive_carmel1} and~\ref{eq:c_recursive_carmel2} that $c_{i}> c_{i-1}$ and $c_i'<\frac{p}{K}c_{i-1}'$.

Similarly,
\begin{equation}
\begin{split}
\delta_{t,h+1}\left(s,a\right)&\leq\sum_{i=1}^{h}\mathbb{E}\regret{s_i,\reca{t}(s_{i})}{\fake{\imath}}\\
&\leq\sum_{i=1}^h \left[ i \sum_{a'\neq\pi_{\fake{\imath}}^*\left(s_i\right)}\mathbb{P}\left\{\widehat{Q}_i\left(s_i,a'\right)>\widehat{Q}_i\left(s_i,\pi_{\fake{\imath}}^*\left(s_i\right)\right)\;\middle|\;n_{h+1}\left(s,a\right)=t\right\} \right]\\
&\leq\sum_{i=1}^h \left[ h \sum_{a'\neq\pi_{\fake{\imath}}^*\left(s_i\right)}\mathbb{P}\left\{\widehat{Q}_i\left(s_i,a'\right)>\widehat{Q}_i\left(s_i,\pi_{\fake{\imath}}^*\left(s_i\right)\right)\;\middle|\;n_{h+1}\left(s,a\right)=t\right\} \right]\\
&\stackrel{\text{\tiny Prop.\ref{prop:ichs}}}{\leq}
\sum_{i=1}^h 2Kh\left(2+\paramone{i}\right)e^{-t\paramtwo{i}\frac{p^{\fake{\imath}}}{6K^{\fake{\imath}}}}\\
&\leq 2Kh^{2}\left(2+\paramone{h}\right) e^{-t\paramtwo{h}\frac{p}{6K}}.
\end{split}
\end{equation}
\end{silentproof}

\begin{fact}\label{claim:exponent_bound}
Let Z be a random variable with support $[a,b]$ and $\mathbb{E}\left[Z\right]=0$. Then, for any $\lambda\in\mathbb{R^{+}}$, $$\mathbb{E}\left[\exp(\lambda Z)\right]\leq\exp(\frac{(b-a)^2\lambda^2}{8}).$$
\end{fact}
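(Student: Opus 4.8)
The plan is to recognize this as the classical Hoeffding lemma and to prove it by the standard three-move argument: convexify the exponential, take expectations to annihilate the mean, and then reduce everything to a one-dimensional analytic inequality.

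First I would exploit the convexity of $t \mapsto e^{\lambda t}$ on $[a,b]$. For any $z \in [a,b]$ one writes $z$ as the convex combination $z = \frac{b-z}{b-a}\,a + \frac{z-a}{b-a}\,b$, so convexity yields the secant bound
\[
e^{\lambda z} \;\leq\; \frac{b-z}{b-a}\,e^{\lambda a} + \frac{z-a}{b-a}\,e^{\lambda b}.
\]
Substituting $Z$ for $z$ and taking expectations, the terms linear in $Z$ collect into a factor $\mathbb{E}[Z]$, which vanishes by hypothesis. Setting $p = \frac{-a}{b-a}$ (note that $a \le 0 \le b$ because the mean is zero, hence $p \in [0,1]$) and $1-p = \frac{b}{b-a}$, this leaves
\[
\mathbb{E}\left[e^{\lambda Z}\right] \;\leq\; (1-p)\,e^{\lambda a} + p\,e^{\lambda b}.
\]

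Next I would reparametrize to isolate a single scalar. Writing $u = \lambda(b-a)$ and using $\lambda a = -pu$, I factor the right-hand side as $e^{-pu}\bigl(1-p+p\,e^{u}\bigr) = e^{\phi(u)}$, where $\phi(u) = -pu + \log\!\bigl(1-p+p\,e^{u}\bigr)$. The whole claim then reduces to the analytic inequality $\phi(u) \leq u^2/8$, since $u^2/8 = \lambda^2(b-a)^2/8$. I would establish it by Taylor's theorem around $u=0$: a direct computation gives $\phi(0)=0$ and $\phi'(0) = -p + \frac{p}{1} = 0$, so the first-order terms drop out and $\phi(u) = \tfrac12\,\phi''(\xi)\,u^2$ for some $\xi$ between $0$ and $u$.

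The one genuine observation — and the step I expect to carry the argument — is the uniform bound $\phi''(\xi) \le 1/4$. Introducing $q(u) = \frac{p\,e^{u}}{1-p+p\,e^{u}} \in [0,1]$, a short calculation shows $\phi''(u) = q(u)\bigl(1-q(u)\bigr)$, that is, $\phi''$ is exactly the variance of a Bernoulli$(q)$ variable; by the AM--GM inequality $q(1-q) \le 1/4$ for every $u$. Hence $\phi(u) \le \tfrac12 \cdot \tfrac14 \cdot u^2 = u^2/8$, and exponentiating recovers $\mathbb{E}[e^{\lambda Z}] \le \exp\!\bigl((b-a)^2\lambda^2/8\bigr)$. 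Everything after the secant bound is routine calculus, so the only point demanding care is recognizing $\phi''$ as a Bernoulli variance in order to obtain the sharp constant $1/8$ rather than a looser one.
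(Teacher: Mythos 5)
Your proof is correct and complete: the secant (convexity) bound, the cancellation of the linear term using $\mathbb{E}[Z]=0$, and the Taylor estimate $\phi(u)\le u^{2}/8$ via the identity $\phi''(u)=q(u)\bigl(1-q(u)\bigr)\le 1/4$ together constitute the classical proof of Hoeffding's lemma. The paper itself gives no proof of this fact---it simply states that the result ``is well known due to Hoeffding''---so your self-contained derivation is exactly the standard argument that the paper's citation points to, and there is nothing to reconcile between the two.
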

This result is well known due to Hoeffding.


\subsection*{Proof of Lemma~\ref{lemma:azuma_mod} (Modified Hoeffding-Azuma inequality):}

\begin{silentproof}
Let $E_{t}$ be the event that $\mathbb{E}\left[X_t\;\middle|\;X_1,\ldots,X_{t-1}\right] = \mu$, and let $$Y_{t} \triangleq X_t-\mu\;|\;X_1\left(\omega\right),\ldots,X_{t-1}\left(\omega\right).$$ The random variable $Y_{t}$ is bounded by $h-\mu\geq Y_{t}\geq-\mu$, and furthermore,  for $\omega\in E_t$, $\mathbb{E}Y_{t}=0$. Therefore, using Fact~\ref{claim:exponent_bound}, for all $\omega\in E_t$ and $\lambda\in\mathbb{R}^{+}$, it holds that
\begin{equation}
\mathbb{E}\left[e^{\lambda Y}\right]\leq e^{\frac{\lambda^2h^2}{8}}.
\end{equation}
Moreover,
\begin{equation}\label{eq:exp_exp}
\begin{split}
\mathbb{E}&\left[e^{\lambda \sum_{i=1}^t\left(\mu-X_i\right)}\right]\\
&=\mathbb{E}_{E_t}\left[e^{\lambda \sum_{i=1}^t\left(\mu-X_i\right)}\right] + \mathbb{E}_{\neg E_t}\left[e^{\lambda \sum_{i=1}^t\left(\mu-X_i\right)}\right]\\
&=\mathbb{E}_{E_t}\left[ \mathbb{E}\left[ e^{\lambda \sum_{i=1}^t\left(\mu-X_i\right)} \;\middle|\; X_{1},\ldots,X_{t-1} \right]  \right] +
\mathbb{E}_{\neg E_t}\left[e^{\lambda \sum_{i=1}^t\left(\mu-X_i\right)}\right]
\\
&=\mathbb{E}\left[ e^{\lambda \sum_{i=1}^{t-1}\left(\mu-X_i\right)} \cdot \mathbb{E}\left[ e^{\lambda (\mu-X_t)} \;\middle|\; X_{1},\ldots,X_{t-1} \right]  \right] +
\mathbb{E}_{\neg E_t}\left[e^{\lambda \sum_{i=1}^t\left(\mu-X_i\right)}\right]
\\
&\leq e^{\frac{\lambda^2h^2}{8}}\expectation \left[e^{\lambda \sum_{i=1}^{t-1}(\mu-X_i)}\right] + \simpleprob \left\{\neg E_t\right\}e^{\lambda th}\\
&\stackrel{\text{Eq.\ref{e:azuma1}}}{\leq} e^{\frac{\lambda^2h^2}{8}}\expectation \left[e^{\lambda \sum_{i=1}^{t-1}(\mu-X_i)}\right] +  c_p e^{t\left(\lambda h-c_e\right)}\\
&\stackrel{\text{($\ast$)}}{\leq} e^{\frac{\lambda^2h^2}{8}t}   +
\sum_{\tau=1}^t e^{\frac{\lambda^2h^2}{8}\left(t-\tau\right)} \cdot c_p e^{\tau\left(\lambda h-c_e\right)}\\
&\text{\footnotesize by the auxiliary step in Eqs.~\ref{eq:recursion}-\ref{eq:recursion3} below }\\
&\leq e^{\frac{\lambda^2h^2t}{8}}\left[ 1+c_p \sum_{\tau=1}^{\infty} e^{-\frac{\lambda^2h^2\tau}{8}} e^{\tau\left(\lambda h-c_e\right)}\right].
\end{split}
\end{equation}
Considering the recursion
\begin{equation}
\label{eq:recursion}
	f\left(t\right)=\theta f\left(t-1\right)+g\left(t\right),
\end{equation}
it is easy to verify that, for all $0 \leq c < t$, 
\begin{equation}
\label{eq:recursion2}
	f\left(t\right)=\theta^{t-c}f\left(c\right)+\sum_{\tau=c+1}^{t}\theta^{t-\tau}g\left(\tau\right).
\end{equation}
Given that, the bound ($\ast$) in Eq.~\ref{eq:exp_exp} is obtained by setting
\begin{equation}
\label{eq:recursion3}
\begin{split}
\theta &=e^{\frac{\lambda^2h^2}{8}},\\
f\left(t\right) &=\expectation \left[e^{\lambda \sum_{i=1}^t (\mu-X_i)}\right],\\
g\left(t\right) &=c_pe^{t\left(\lambda h-c_e\right)}.\\
\end{split}
\end{equation}

Now, by Markov inequality,  for any $\lambda>0$,
\begin{equation}
\begin{split}
\mathbb{P}&\left\{\mu t-\sum_{i=1}^tX_i\geq t\delta\right\}\\
&\leq e^{-\lambda t\delta}\mathbb{E}\left[e^{\lambda \sum_{i=1}^t\left(\mu-X_i\right)}\right]\\
&\stackrel{\text{Eq.\ref{eq:exp_exp}}}{\leq} e^{-\lambda t\delta} e^{\frac{\lambda^2h^2t}{8}} \left[ 1+ c_p\sum_{\tau=1}^{\infty} e^{-\frac{\lambda^2h^2\tau}{8}}e^{\tau\left(\lambda h-c_e\right)}\right]\\
&= e^{-\frac{2\delta^2 c_e}{h^2} t} e^{\frac{\delta^2 c_e^2}{2h^2}t} \left[ 1+ c_p\sum_{\tau=1}^{\infty} e^{-\frac{\delta^2 c_e^2}{2h^2}\tau} e^{\tau\left(\frac{2\delta c_e}{h}-c_e\right)}\right]\\
&\text{\footnotesize by setting $\lambda=\frac{2\delta c_e}{h^2}$}\\
&\leq e^{-\frac{3\delta^2 c_e}{2h^2}t }\left[ 1+ c_p\frac{2h^2}{\delta^2 c_e^2} e^{-\frac{\delta^2 c_e^2}{2h^2}}\right]\\
&\text{\footnotesize since $\delta<\frac{h}{2}$}\\
&\leq e^{-\frac{3\delta^2 c_e}{2h^2}t} \left[ 1+c_p\frac{2h^2}{\delta^2 c_e^2} \right].
\end{split}
\end{equation}
The second bound can be proven in much the same way.
\end{silentproof}
\begin{disc}\label{dsc:estimate_bound}
Note that the above bound was obtained for a particular choice of $\lambda$ that maximizes the coefficient term in the exponent. Other choices of $\lambda$ may result in a smaller coefficient in the exponent, but also a decreased leading constant. In particular, setting $\lambda=\frac{2\delta c_e\beta}{h^2}$, for any $0<\beta<1$, yields the following bound
\begin{equation}
\begin{split}
\mathbb{P}&\left\{\mu t-\sum_{i=1}^tX_i\geq t\delta\right\}\\
&\leq e^{-\frac{2\delta^2 c_e\beta}{h^2} t} e^{\frac{\delta^2 c_e^2\beta^2}{2h^2}t} \left[ 1+ c_p\sum_{\tau=1}^{\infty} e^{-\frac{\delta^2 c_e^2\beta^2}{2h^2}\tau} e^{\tau\left(\frac{2\delta c_e\beta}{h}-c_e\right)}\right]\\
&\leq e^{-\frac{3\delta^2 c_e\beta}{2h^2}t }\left[ 1+ \frac{c_p}{c_e\left(1-\beta\right)} e^{-\frac{c_e\left(1-\beta\right)}{2h^2}}\right].
\end{split}
\end{equation}
\end{disc}
\subsection*{Proof of Lemma~\ref{thm:induction_step}:}
\begin{silentproof}
Lemma~\ref{thm:probe_bounds} implies that, with probability approaching $1$ exponentially fast, the state-space samples issued at a level with $h+1$ steps-to-go are optimal. That is, their expectation equals the actual $Q$-value. Therefore, by Lemma~\ref{thm:probe_bounds}, we have
\begin{equation}
\mathbb{P}\left\{\mathbb{E}\left[X_{t,h+1}(s,a)\;\middle|\;X_{1,h+1}(s,a),\ldots,X_{t-1,h+1}(s,a)\right]\neq Q\left(s,a\right)\right\}\leq c_p e^{-c_et},
\end{equation}
where $c_p=2Kh(2+c_h)$ and $c_e=\frac{pc_{h}'}{6K}$. It is also easy to see that $0\leq X_i\leq h+1$, and thus the conditions of Lemma~\ref{lemma:azuma_mod} are satisfied. In turn, from Lemma~\ref{lemma:azuma_mod} for $\delta=\frac{d}{2}$ and random variables with support $[0,h+1]$,
\begin{equation}
\begin{split}
\mathbb{P}&\left\{\widehat{Q}_{h+1}\left(s,a\right)-Q_{h+1}\left(s,a\right)\geq \frac{d}{2}\;\middle|\;n_{h+1}\left(s,a\right)=t\right\}\\
&\leq \left[ 1 + c_p\frac{2(h+1)^2}{\left(\frac{d}{2}\right)^2 c_e^2} \right] \cdot e^{-\frac{3\left(\frac{d}{2}\right)^2 c_e}{2(h+1)^2}t}\\
&\leq e^{-\frac{d^2 pc_{h}'}{16(h+1)^2K}t}\left[ 1152 \cdot \frac{K^3(h+1)^3(2+c_h)}{d^2 p^2c_h'^2} \right],
\end{split}
\end{equation}
and, similarly,
\begin{equation}
\label{e:finishlemma4}
\begin{split}
\mathbb{P}&\left\{\widehat{Q}_{h+1}\left(s,a\right)-Q_{h+1}\left(s,a\right)\leq -\frac{d}{2}\;\middle|\;n_{h+1}\left(s,a\right)=t\right\}\\
&\leq e^{-\frac{d^2 pc_{h}'}{16(h+1)^2K}t}\left[ 1152 \cdot \frac{K^3(h+1)^3(2+c_h)}{d^2 p^2c_h'^2} \right]\\
&\leq e^{-\frac{d^2 pc_{h}'}{16(h+1)^2K}t}\left[ 3456 \cdot \frac{K^3(h+1)^3 c_h}{d^2 p^2c_h'^2} \right]\\
&\hspace{0.5cm} \text{\footnotesize since $2+c_{h} \leq 3c_{h}$.}
\end{split}
\end{equation}
\end{silentproof}

\subsection*{Proof of Lemma~\ref{thm:brush_qaccuracy}, induction step:}
\begin{silentproof}
Note that the proof of Lemma~\ref{thm:induction_step} is basically the proof of the induction step for the key part of Lemma~\ref{thm:brush_qaccuracy}, that is, Eq.~\ref{eq:assmpt_eq}. The only thing that remains to be finalized is the correctness of Eqs.~\ref{eq:c_recursive_carmel1} and~\ref{eq:c_recursive_carmel2} for $h+1$, and these can be verified by substitution of $c_{h}$ and $c_{h}'$ in Eq.~\ref{e:finishlemma4} by the respective expressions (for $h$) from Eqs.~\ref{eq:c_recursive_carmel1} and~\ref{eq:c_recursive_carmel2}.
\end{silentproof}

\subsection*{Proof of Theorem~\ref{thm:brush_regret}:}
\begin{silentproof}
The proof for our main results follows by using the same techniques as above.
Note that, by the Hoeffding inequality, after $n > 0$ iterations of $\BRUSH$, for each action $a\in A(\initstate)$, it holds that
\begin{equation}
\mathbb{P}\left\{n_H\left(\initstate,a\right)\leq \frac{n}{2KH}\right\}\leq e^{-\frac{1}{2K^{2}H}n}.
\end{equation}
Given that,
\begin{equation}
\begin{split}
\simpleprob &\left\{\reca{n}(\initstate,H)\neq\optpolicy(\initstate,H)\right\}\\
&\leq \sum_{a \neq \optpolicy(\initstate,H)}\left[
\mathbb{P}\left\{\widehat{Q}_H\left(\initstate,a\right)\geq Q_H\left(\initstate,a\right)+\frac{d}{2}\right\} + \right.\\
&\hspace{2.3cm}\left.\mathbb{P}\left\{\widehat{Q}_H\left(\initstate,\optpolicy(\initstate,H)\right)\leq Q_H\left(\initstate,\optpolicy(\initstate,H)\right)-\frac{d}{2}\right\}
\right].
\end{split}
\end{equation}
For a sub-optimal action $a$,
\begin{equation}
\begin{split}
\mathbb{P}&\left\{\widehat{Q}_H\left(\initstate,a\right)\geq Q_H\left(\initstate,a\right)+\frac{d}{2}\right\}\\
&=\sum_{t=1}^{\frac{n}{H}}\mathbb{P}\left\{\widehat{Q}_H\left(\initstate,a\right)\geq Q_H\left(\initstate,a\right)+\frac{d}{2}\;\middle|\;n_H\left(\initstate,a\right)=t\right\}\mathbb{P}\left\{n_H\left(\initstate,a\right)=t\right\}\\
&\leq\mathbb{P}\left\{n_H\left(\initstate,a\right)\leq \frac{n}{2KH}\right\}\\
&\hspace{1cm}+\sum_{t=1+\frac{n}{2KH}}^{\frac{n}{H}}\mathbb{P}\left\{\widehat{Q}_H\left(\initstate,a\right)\geq Q_H\left(\initstate,a\right)+\frac{d}{2}\;\middle|\;n_H\left(\initstate,a\right)=t\right\}\mathbb{P}\left\{n_H\left(\initstate,a\right)=t\right\}\\
&\stackrel{\text{Lemma~\ref{thm:brush_qaccuracy}}}{\leq} \;\;\; e^{-\frac{1}{2K^{2}H}n}\;+\sum_{t=1+\frac{n}{2KH}}^{\frac{n}{H}}  c_H e^{-c_H't} \cdot\mathbb{P}\left\{n_H\left(\initstate,a\right)=t\right\}\\
&\leq e^{-\frac{1}{2K^{2}H}n}+ c_H e^{-\frac{c_H'}{2KH}n}\\
&\leq 2c_H e^{-\frac{c_H'}{2KH}n}.
\end{split}
\end{equation}
Using exactly the same line of bounding, we obtain
\begin{equation}
\mathbb{P}\left\{\widehat{Q}_H\left(\initstate,\optpolicy(\initstate,H)\right)\leq Q_H\left(\initstate,\optpolicy(\initstate,H)\right)-\frac{d}{2}\right\} \leq 2c_H e^{-\frac{c_H'}{2KH}n},
\end{equation}
and thus
\begin{equation}
\label{eq:noname}
\simpleprob \left\{\reca{n}(\initstate,H)\neq\optpolicy(\initstate,H)\right\} \leq
4K c_H e^{-\frac{c_H'}{2KH}n}.
\end{equation}
Eqs.~\ref{eq:brush_gen_prob},~\ref{eq:cgenone}, and~\ref{eq:cgentwo} of Theorem~\ref{thm:brush_regret} are then obtained by substitution of $c_{H}$ and $c_{H}'$ in Eq.~\ref{eq:noname} with the respective expressions from Eqs.~\ref{eq:c_recursive_carmel1} and~\ref{eq:c_recursive_carmel2}. In turn, Eq.~\ref{eq:brush_gen_reg} of Theorem~\ref{thm:brush_regret} stems from Eqs.~\ref{eq:brush_gen_prob}, horizon $H$, and per-step rewards being in $[0,1]$.
\end{silentproof}

\newpage
\section{Proof of Theorem~\ref{thm:brue_alpha}}
\label{app:proofsalpha}
We first prove the modified Hoeffding-Azuma inequality for partial sums.
\subsection*{Proof of Lemma~\ref{lemma:azuma_mod_alpha} (Modified Hoeffding-Azuma inequality for partial sums):}
\begin{silentproof}
Let $E_{t}$ be the event that $\mathbb{E}\left[X_t\;\middle|\;X_1,\ldots,X_{t-1}\right] = \mu$, and let $$Y_{t} \triangleq X_t-\mu\;|\;X_1\left(\omega\right),\ldots,X_{t-1}\left(\omega\right).$$ The random variable $Y_{t}$ is bounded by $h-\mu\geq Y_{t}\geq-\mu$, and furthermore,  for $\omega\in E_t$, $\mathbb{E}Y_{t}=0$. Therefore, using Fact~\ref{claim:exponent_bound}, for all $\omega\in E_t$ and $\lambda\in\mathbb{R}^{+}$, it holds that
\begin{equation}
\mathbb{E}\left[e^{\lambda Y}\right]\leq e^{\frac{\lambda^2h^2}{8}}.
\end{equation}
Moreover,
\begin{equation}\label{eq:exp_exp_alpha}
\begin{split}
\mathbb{E}&\left[e^{\lambda \sum_{i=t-\lceil\alpha t\rceil}^t\left(\mu-X_i\right)}\right]\\
&=\mathbb{E}_{E_t}\left[e^{\lambda \sum_{i=t-\lceil\alpha t\rceil}^t\left(\mu-X_i\right)}\right] + \mathbb{E}_{\neg E_t}\left[e^{\lambda \sum_{i=t-\lceil\alpha t\rceil}^t\left(\mu-X_i\right)}\right]\\
&=\mathbb{E}_{E_t}\left[ \mathbb{E}\left[ e^{\lambda \sum_{i=t-\lceil\alpha t\rceil}^t\left(\mu-X_i\right)} \;\middle|\; X_{1},\ldots,X_{t-1} \right]  \right] +
\mathbb{E}_{\neg E_t}\left[e^{\lambda \sum_{i=t-\lceil\alpha t\rceil}^t\left(\mu-X_i\right)}\right]
\\
&=\mathbb{E}\left[ e^{\lambda \sum_{i=t-\lceil\alpha t\rceil}^{t-1}\left(\mu-X_i\right)} \cdot \mathbb{E}\left[ e^{\lambda (\mu-X_t)} \;\middle|\; X_{1},\ldots,X_{t-1} \right]  \right] +
\mathbb{E}_{\neg E_t}\left[e^{\lambda \sum_{i=t-\lceil\alpha t\rceil}^t\left(\mu-X_i\right)}\right]
\\
&\leq e^{\frac{\lambda^2h^2}{8}}\expectation \left[e^{\lambda \sum_{i=t-\lceil\alpha t\rceil}^{t-1}(\mu-X_i)}\right] + \simpleprob \left\{\neg E_t\right\}e^{\lambda h\lceil\alpha t\rceil}\\
&\stackrel{\text{Eq.\ref{e:azuma1}}}{\leq} e^{\frac{\lambda^2h^2}{8}}\expectation \left[e^{\lambda \sum_{i=t-\lceil\alpha t\rceil}^{t-1}(\mu-X_i)}\right] +  c_p e^{\lambda h\lceil\alpha t\rceil-c_e t}\\
&\stackrel{\text{($\ast$)}}{\leq} e^{\frac{\lambda^2h^2}{8}\lceil\alpha t\rceil}   +
\sum_{\tau=t-\lceil\alpha t\rceil}^t e^{\frac{\lambda^2h^2}{8}\left(t-\tau\right)} \cdot c_p e^{\lambda h\lceil\alpha\tau\rceil-c_e \tau}\\
&\text{\footnotesize by the auxiliary step in Eqs.~\ref{eq:recursion}-\ref{eq:recursion3new} below. }
\end{split}
\end{equation}
Considering the recursion
\begin{equation}
\label{eq:recursion_alpha}
	f\left(t\right)=\theta f\left(t-1\right)+g\left(t\right),
\end{equation}
it is easy to verify that, for all $0 \leq c < t$, 
\begin{equation}
\label{eq:recursion2_alpha}
	f\left(t\right)=\theta^{t-c}f\left(c\right)+\sum_{\tau=c+1}^{t}\theta^{t-\tau}g\left(\tau\right).
\end{equation}
Given that, the bound ($\ast$) in Eq.~\ref{eq:exp_exp} is obtained by setting
\begin{equation}
\label{eq:recursion3new}
\begin{split}
\theta &=e^{\frac{\lambda^2h^2}{8}},\\
f\left(t\right) &=\expectation \left[e^{\lambda \sum_{i=1}^t (\mu-X_i)}\right],\\
g\left(t\right) &=c_pe^{t\left(\lambda h-c_e\right)}.\\
\end{split}
\end{equation}

Now, by Markov inequality,  for any $\lambda>0$,
\begin{equation}
\begin{split}
\mathbb{P}&\left\{\mu \lceil\alpha t\rceil-\sum_{i=t-\lceil\alpha t\rceil}^tX_i\geq \lceil\alpha t\rceil\delta\right\}\\
&\leq e^{-\lambda\delta\lceil\alpha t\rceil}\mathbb{E}\left[e^{\lambda \sum_{i=t-\lceil\alpha t\rceil}^t\left(\mu-X_i\right)}\right]\\
&\stackrel{\text{Eq.\ref{eq:exp_exp}}}{\leq} e^{-\lambda\delta\lceil\alpha t\rceil} \left[e^{\frac{\lambda^2h^2}{8}\lceil\alpha t\rceil}   +
\sum_{\tau=t-\lceil\alpha t\rceil}^t e^{\frac{\lambda^2h^2}{8}\left(t-\tau\right)} \cdot c_p e^{\lambda h\lceil\alpha\tau\rceil-c_e \tau}\right]\\
&= e^{-\frac{2\delta^2 c_e}{h^2}\lceil\alpha t\rceil} \left[e^{\frac{\delta^2 c_e^2}{2h^2}\lceil\alpha t\rceil}   +
\sum_{\tau=t-\lceil\alpha t\rceil}^t e^{\frac{\delta^2 c_e^2}{2h^2}\left(t-\tau\right)} \cdot c_p e^{\frac{2\delta c_e}{h} \lceil\alpha\tau\rceil-c_e \tau}\right]\\
&\text{\footnotesize by setting $\lambda=\frac{2\delta c_e}{h^2}$}\\
&\leq e^{-\frac{2\delta^2 c_e}{h^2}\lceil\alpha t\rceil} \left[e^{\frac{\delta^2 c_e^2}{2h^2}\lceil\alpha t\rceil}   + e^{\frac{\delta^2 c_e^2}{2h^2}\lceil\alpha t\rceil}
\sum_{\tau=t-\lceil\alpha t\rceil}^t c_p e^{\frac{2\delta c_e}{h} \lceil\alpha\tau\rceil-c_e \tau}\right]\\
&\leq e^{-\frac{2\delta^2 c_e}{h^2}\lceil\alpha t\rceil} e^{\frac{\delta^2 c_e^2}{2h^2}\lceil\alpha t\rceil}\left[1 +
c_p\sum_{\tau=t-\lceil\alpha t\rceil}^t  e^{c_e \lceil\alpha\tau\rceil-c_e \tau}\right]\\
&\text{\footnotesize since $\delta<\frac{h}{2}$}\\
&\leq e^{-\frac{2\delta^2 c_e}{h^2}\lceil\alpha t\rceil} e^{\frac{\delta^2 c_e^2}{2h^2}\lceil\alpha t\rceil}\left[1 +
c_p \sum_{\tau=t-\lceil\alpha t\rceil}^t  e^{-c_e(1-\alpha)\tau}\right]\\
&\leq e^{-\frac{2\delta^2 c_e}{h^2}\lceil\alpha t\rceil} e^{\frac{\delta^2 c_e^2}{2h^2}\lceil\alpha t\rceil}\left[1 +
\frac{c_p}{c_e(1-\alpha)} e^{-c_e(1-\alpha)^2t}\right]\\
&\leq e^{-\frac{3\delta^2 c_e}{2h^2}\alpha t} \left[1 +
\frac{c_p }{c_e(1-\alpha)} e^{-c_e(1-\alpha)^2t}\right].
\end{split}
\end{equation}
\end{silentproof}

Obviously, at leaf nodes there is no point in choosing $\alpha<1$ since there is no bias. Therefore, for $h=1$ we can use the same constants $c_1=1$ and $c_1'=\frac{p^H}{K^H}$. Since $c_h'$ is decreasing with $h$, we have $c_h'\leq \frac{p^{H-h}}{K^{H-h}}$ for all $1\leq h\leq H$, and thus Lemma~\ref{thm:probe_bounds} is valid. Lemma~\ref{thm:induction_step} relies on the modified Hoeffding-Azuma inequality, which is no longer valid in the context of $\BRUSH\left(\alpha\right)$. Instead, we apply its modification, Lemma~\ref{lemma:azuma_mod_alpha} for partial sums, to prove the induction step
\begin{equation}
\begin{split}
\mathbb{P}&\left\{\widehat{Q}_{h+1}\left(s,a\right)-Q_{h+1}\left(s,a\right)\geq \frac{d}{2}\;\middle|\;n_{h+1}\left(s,a\right)=t\right\}\\
&\leq e^{-\frac{3d^2 pc_{h}'}{48Kh^2}\alpha t} \left[1 +
\frac{12K^2h(2+c_h) }{pc_{h}'(1-\alpha)} e^{-\frac{pc_{h}'(1-\alpha)^2}{6K}t}\right]
\end{split}
\end{equation}
and, similarly,
\begin{equation}
\label{e:finishlemma4new}
\begin{split}
\mathbb{P}&\left\{\widehat{Q}_{h+1}\left(s,a\right)-Q_{h+1}\left(s,a\right)\leq -\frac{d}{2}\;\middle|\;n_{h+1}\left(s,a\right)=t\right\}\\
&\leq e^{-\frac{3d^2 pc_{h}'}{48Kh^2}\alpha t} \left[1 +
\frac{12K^2h(2+c_h) }{pc_{h}'(1-\alpha)} e^{-\frac{pc_{h}'(1-\alpha)^2}{6K}t}\right].
\end{split}
\end{equation}
The induction step is satisfied, e.g., with $c_{h+1}'=\min\left\{\frac{3d^2 pc_{h}'\alpha}{48Kh^2},\frac{pc_{h}'(1-\alpha)^2}{6K}\right\}$ and $c_{h+1}=1 +
\frac{12K^2h(2+c_h) }{pc_{h}'(1-\alpha)}$.

Since $c_h$ is increasing in $h$ and $c_h'$ is decreasing in $h$, the term $\frac{12K^2h(2+c_h) }{pc_{h}'(1-\alpha)}$ also increases in $h$. The larger the constant grows, the more beneficial it might be to increase the exponent coefficient that multiplies that constant by decreasing $\alpha$ at the expense of decreasing the exponent coefficient that multiplies $1$. 
Clearly, the tradeoff depends also on $t$, the number of samples of action $a$ in node $\left(s,h\right)$. Therefore, as $h$ increases, smaller values of $\alpha$ would be more appealing.

\end{document}